\def\eqref#1{equation~\ref{#1}}
\def\1{\bm{1}}
\DeclareMathAlphabet{\mathsfit}{\encodingdefault}{\sfdefault}{m}{sl}
\SetMathAlphabet{\mathsfit}{bold}{\encodingdefault}{\sfdefault}{bx}{n}
\definecolor{Skyblue}{rgb}{0.6, 0.6, 0.95 }
\definecolor{Green}{rgb}{0.0, 0.8, 0.0 }
\newcommand{\cmmnt}[1]{\ignorespaces} 
\newcommand{\algname}{\textsc{DualReform}}
\newcommand*\bigcdot{\mathpalette\bigcdot@{.5}}
\newcommand*\bigcdot@[2]{\mathbin{\vcenter{\hbox{\scalebox{#2}{$\m@th#1\bullet$}}}}}
\newcounter{promptcounter}
\renewcommand{\thepromptcounter}{\arabic{promptcounter}}
\newtcolorbox{MyBox}[2][]{%
  enhanced,
  breakable,
  colback=gray!5,
  colframe=gray!80!black,
  boxrule=1pt,
  toprule=1.5pt,
  rounded corners,
  arc=2pt,
  top=0.2mm,
  bottom=0.2mm,
  left=3mm,
  right=3mm,
  fuzzy shadow={0pt}{-2pt}{-0.5pt}{0.5pt}{black!35},
  title={\normalsize Prompt~\thepromptcounter.~#2}, 
  #1 
}
\theoremstyle{plain}
\newtheorem{theorem}{Theorem}[section]
\newtheorem{lemma}[theorem]{Lemma}
\newtheorem{definition}[theorem]{Definition}
\newtheorem{assumption}[theorem]{Assumption}
\theoremstyle{remark}
\title{References Indeed Matter?\\Reference-Free Preference Optimization for\\ Conversational Query Reformulation} 
\author{%
Doyoung Kim$^1$, Youngjun Lee$^1$, Joeun Kim$^1$, Jihwan Bang$^1$, Hwanjun Song$^1$, Susik Yoon$^2$,\\
\textbf{Jae-Gil Lee}$^1$\thanks{Corresponding author.}\\
$^1$ KAIST, $^2$ Korea University \\ 
{\tt\small \{dodokim,\,youngjun.lee,\,je.kim,\,jihwan.bang,\,songhwanjun,\,jaegil\}@kaist.ac.kr}, \\
{\tt\small susik@korea.ac.kr} 
}
\begin{document}
\maketitle
\begin{abstract}
Conversational query reformulation\,(CQR) has become indispensable for improving retrieval in dialogue-based applications. However, existing approaches typically rely on reference passages for optimization, which are \textit{impractical} to acquire in real-world scenarios. To address this limitation, we introduce a novel \textit{reference-free} preference optimization framework \algname{} that generates \textit{pseudo} reference passages from \textit{commonly-encountered} conversational datasets containing only queries and responses. \algname{} attains this goal through two key innovations: (1) \textit{response-based inference}, where responses serve as proxies to infer pseudo reference passages, and (2) \textit{response refinement via the dual-role of CQR}, where a CQR model refines responses based on the shared objectives between response refinement and CQR. Despite not relying on reference passages, \algname{} achieves 96.9--99.1\% of the retrieval accuracy attainable only with reference passages and surpasses the state-of-the-art method by up to 31.6\%.
\end{abstract}

\section{Introduction}
\label{sec:intro}

Retrieval-augmented generation (RAG) \cite{rag, selfrag, adaptive-rag, adaptive-rag2} is frequently employed to integrate external knowledge into the generation process of large language models\,(LLMs). One of the main components is to retrieve the passage most relevant to a specific query from an external data source. For this purpose, \textit{conversational query reformulation\,(CQR)}\,\cite{elgohary2019can, t5qr, qian2022explicit, vakulenko2021question, conqrr,  infocqr} is often used to facilitate the retrieval of the most relevant passage by reformulating the raw query.

In CQR, a query is reformulated using a language model\,(LM) which has generally been trained on a target conversational dataset. The training dataset comprises a collection of queries and corresponding responses, with each query linked to the \textit{reference passage} that represents the most ideal retrieval target\,\cite{qrecc, topiocqa}. As shown in Figure~\ref{fig:PO-CQR}(a), preference optimization\,\cite{rafailov2024direct} leverages the preferences over the candidates for the best reformulated query, where the rank of the reference passage in the retrieved passages for each candidate dictates the candidate's preference. For instance, since the reference passage is ranked higher for the candidate \textcircled{\scalebox{0.85}{A}} than for other candidates, a CQR model is fine-tuned to produce queries akin to \textcircled{\scalebox{0.85}{A}} during inference.


However, this \textit{reference-based} preference optimization\,\cite{retpo, adacqr} relies on an \textit{impractical} assumption that abundant reference passages are readily available. Most real-world conversational datasets, unfortunately, do not satisfy this assumption. Even worse, generating reference passages is very labor-intensive or expensive because annotators need to resolve coreferences\,(e.g., ``its'' in Figure~\ref{fig:PO-CQR}(a)) and apply domain-specific knowledge\,(e.g., ``gene editing'').

Therefore, in this paper, we introduce a novel \textit{reference-free} preference optimization framework, \algname{}, that eliminates the need for readily available reference passages. Instead, our approach generates \textit{pseudo} reference passages from \textit{commonly-encountered} conversational datasets, i.e., just a collection of queries and corresponding responses. Evidently, the primary challenge is accurately inferring pseudo reference passages, as the quality of pseudo supervision directly impacts model performance\,\cite{noisy_student, fixmatch, self-training-survery}. The novelty of our \algname{} framework lies in two ideas.


\begin{figure}[t!]
    \centering
    \captionsetup[subfigure]{skip=0pt}
    \begin{subfigure}[c]{0.49\linewidth}
        \includegraphics[width=\linewidth]{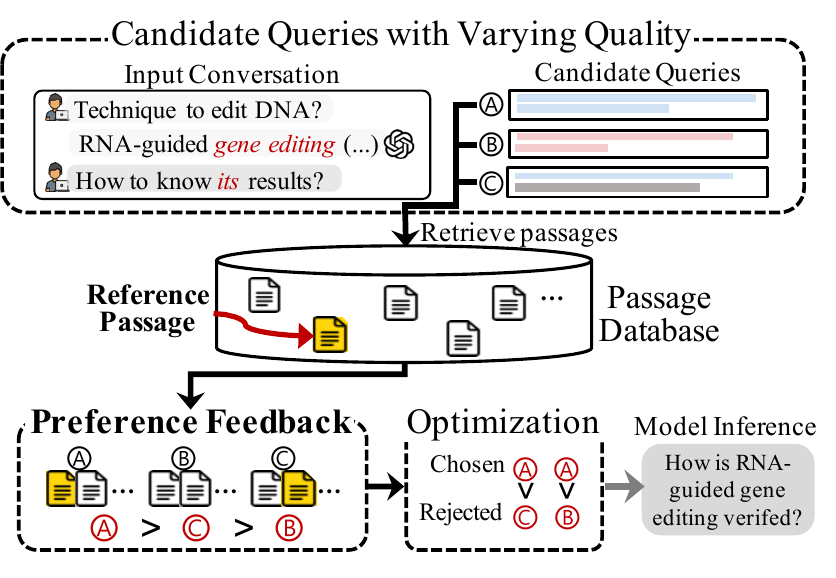}
        \caption{Preference optimization framework for CQR.}
    \end{subfigure}%
    \hfill
    \begin{subfigure}[c]{0.49\linewidth}
        \raisebox{0.16cm}{\includegraphics[width=\linewidth]{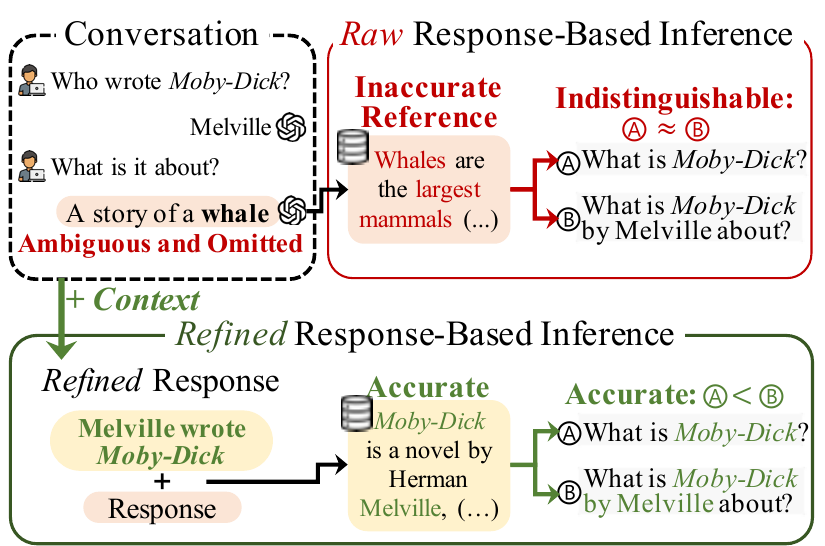}}
        \caption{Key idea of \algname{}.}
    \end{subfigure}
    \vspace*{-0.25cm}
    \caption{Overview of \algname{}. (a) Preference optimization framework for CQR with reference passages as a key component for generating preference feedback over candidate queries. (b) Key idea of \algname{}: Inferring pseudo reference passages through response refinement, addressing ambiguities and omissions in raw responses by incorporating conversational context.}
    \label{fig:PO-CQR}
    \vspace*{-0.6cm}
\end{figure}

\vspace*{-0.15cm}
(1) {\textbf{Response-Based Inference}}: While a (pseudo) reference passage is associated with a query, we propose to use its corresponding \textit{response} to infer its pseudo reference passage. A pseudo reference passage for a query is an external piece of information that enhances the quality of its response. Note that, for the majority of conversational datasets employed in training CQR models, the responses are already accessible, despite not being optimized by these passages. So, why not utilize the responses to infer pseudo reference passages? We assert that the responses serve as excellent proxies or weak supervisions for pseudo reference passages.

\vspace*{-0.15cm}
(2) {\textbf{Response Refinement by the \textit{Dual-Role} of CQR}}: Although the above idea appears intuitive, using raw responses may not yield accurate pseudo reference passages owing to potential ambiguities and omissions\,(e.g., ``A story of whale'' in Figure~\ref{fig:PO-CQR}(b)). Thus, we propose to use \textit{refined} responses rather than merely the raw responses. Such refined responses will clarify ambiguities and omissions by integrating pertinent conversational context (e.g., ``Melvile wrote Moby-Dick'' in Figure~\ref{fig:PO-CQR}(b)).

\vspace*{-0.1cm}
Here, we exploit the \textit{CQR model} to refine the raw responses. While the primary input for CQR is a query, this response refinement exactly aligns with the objective of CQR, namely, rephrasing a given query to enhance its relevance to the (pseudo) reference passage\,\cite{retpo}. \algname{} leverages this \textit{dual-role} of CQR, utilizing it for query reformulation during inference and for generating pseudo reference passages (through response reformulation) during training. One might argue that an LLM is suitable for this purpose, but we contend that taking advantage of the dual role offers numerous advantages. Most importantly, higher retrieval accuracy can be attained through preference optimization thanks to more accurate inference of pseudo reference passages. In addition, the overall procedure is simplified without an additional LLM; monetary cost is saved by not relying on commercial LLMs.

\vspace*{-0.1cm}
In summary, \algname{}, featured by the dual-role of CQR, eliminates the need for reference passages, thereby enhancing its applicability to various conversational datasets. As far as we know, this is the first work that addresses reference-free preference optimization for CQR. Despite not using reference passages at all, \algname{} demonstrates a retrieval accuracy remarkably close (96.9--99.1\%) to the optimal level only achievable with reference passages. Moreover, it outperforms the state-of-the-art methods with an average improvement of 15.7\% in retrieval accuracy.

\vspace*{-0.1cm}
\section{Related Work}
\label{sec:related_works}



\subsection{Preference Optimization}
\label{subsec:po}
Preference optimization methods\,\cite{schulman2017proximal, yang2024preference, rafailov2024direct, spo, cpo} aim to align the outputs of LMs with preferences by leveraging comparisons between outputs, such as rankings, rather than relying solely on labeled data or supervised targets. A prominent method is direct preference optimization\,\cite{rafailov2024direct}, which optimizes LMs without relying on an explicit reward model, resulting in a computationally efficient and robust framework. Please refer to extensive surveys\,\cite{wirth2017survey, jiang2024survey}.

\subsection{Conversational Query Reformulation}
\label{subsec:CQR}

CQR methods typically employ LMs to generate self-contained queries by incorporating conversational contexts in three directions. 

In prompt engineering methods, LLM-IQR\,\cite{infocqr} and LLM4CS\,\cite{llm4cs} leverage LLMs to generate self-contained queries by carefully designing prompts that extract the relevant context from conversation. HyDE\,\cite{hyde} extends this direction by generating synthetic passages related to the query. 

In supervised fine-tuning methods, T5QR\,\cite{t5qr} fine-tunes a T5-base model\,\cite{t5} for query reformulation using human-annotated reformulated queries. ConvGQR\,\cite{convgqr} additionally fine-tunes an LM for query expansion, augmenting queries with potential responses, while aligning query embeddings to reference passages. However, they require creating high-quality reformulated queries, which are labor-intensive\,\cite{song2024learning} and often misalign with the retrieval objective\,\cite{retpo}.

Addressing this issue, preference optimization methods, such as RetPo\,\cite{retpo} and AdaCQR\,\cite{adacqr}, align the LM with the retrieval objective by leveraging preferences among candidates for reformulated queries. However, they assume the existence of abundant reference passages, prohibiting their utilization for most conversational datasets that lack such references.

\section{Preliminaries}
\label{sec:preliminaries}

\subsection{Conversational Query Reformulation}
\label{subsec:cqr}

A conversational session is represented as a sequence of query-response turns \( \mathcal{T} \!=\! \{(x_t, a_t, {G}_t)\}_{t=1}^N \), where \( x_t \!=\! (\mathcal{H}_{<t}, q_t) \) is the input comprising the query-response history \( \mathcal{H}_{<t}\!=\! \{(q_i, a_i)\}_{i=1}^{t-1} \) and the current query \( q_t \). Here, \(a_t\) is the response to \( q_t \), and \( {G}_t \!=\! \{g_t^j\}_j \) is the set of reference passages to \( q_t \).

Given an input \( x_t \), CQR excutes a query reformulation function \({\rm CQR}(\cdot; \theta)\), parametrized by a model \( \theta \), to generate a self-contained query, which is then passed to a retrieval system \( R(\cdot) \) to retrieve relevant passages, i.e., \( \hat{G}_t \!=\! R({\rm CQR}(x_t; \theta)) \). Formally, for a conversation session \( \mathcal{T} \), the goal of CQR is to maximize the reformulation quality,
\begin{equation}
\label{eq:cqr}
J_{\theta}(\mathcal{T}) = \frac{1}{|\mathcal{T}|} \sum_{t=1}^{|\mathcal{T}|} \mathcal{M}(\hat{G}_t, G_t),
\end{equation}
where \( \mathcal{M}(\hat{G}_t, G_t) \) is a metric (e.g., Recall@\(k\)) that evaluates the retrieval quality by comparing the retrieved passages \( \hat{G}_t \) with \( {G}_t\).

\subsection{Reference-Based Preference Optimization for CQR}
\label{subsec:supervised preference optimization}

\noindent \textbf{Preference Feedback Generation.}
Preference optimization methods rely on the reference passages \( G_t \) to produce \textit{preference feedback}.
For a given input \( x_t \), an LLM is used to produce a set of candidate query reformulations \( \{ \tilde{q}_t^i \}_{i=1}^M \) with varying quality. The preference feedback \( {\rm pref}(G_t) \) is then defined as a sorted list of these candidates based on their relevance to the reference passages \( G_t \),
\begin{equation}
\label{eq:pref}
{\rm pref}(G_t) = {\rm sort}\big(\{\tilde{q}_t^i\}_{i=1}^M, \; \text{by decreasing } s(\tilde{q}_t^i \mid G_t)\big),
\end{equation}
where \(s(\tilde{q}_t^i \mid G_t)\) is the retrieval score, indicating how accurately \(\tilde{q}_t^i\) retrieves passages containing \(G_t\).

\smallskip
\noindent \textbf{Preference Optimization.}
Preference optimization proceeds through two steps: supervised fine-tuning\,(SFT) and direct preference optimization\,(DPO).
First, the SFT step trains the model $\theta$ on the top-ranked one \( \tilde{q}_t^1 \) from \( {\rm pref}(G_t) \), by minimizing the negative log-likelihood,
\begin{equation}
\label{eq:sft}
\ell_{\text{sft}} (x_t, G_t; \theta) = - \mathbb{E}_{\tilde{q}_t^i \sim {\rm pref}(G_t)} \left[ \mathbbm{1}_{[i = 1]} \log P(\tilde{q}_t^i \mid {x}_t; \theta) \right],
\end{equation}
where \( P(\tilde{q} \mid {x}; \theta) \) is the probability of \( \tilde{q} \) given the input ${x}$. 
Next, the DPO step optimizes the model to learn pairwise preferences from reformulation pairs \( (\tilde{q}_t^i, \tilde{q}_t^j) \) such that \( i < j \) in \( {\rm pref}(G_t) \), by maximizing the preference likelihood,
\begin{equation}
\label{eq:dpo}
\ell_{\text{pref}} (x_t, G_t; \theta) = \mathbb{E}_{\tilde{q}_t^i, \tilde{q}_t^j \sim {\rm pref}(G_t)} \left[ \mathbbm{1}_{[i < j]} {\rm{r}} (\tilde{q}_t^i, \tilde{q}_t^j; x_t, \theta) \right],
\end{equation}
where \( {\rm{r}} (\tilde{q}_t^i, \tilde{q}_t^j; x_t, \theta) \) represents the likelihood that the model $\theta$ ranks \( \tilde{q}_t^i \) higher than \( \tilde{q}_t^j \).

\section{\algname{}: ``Reference-Free'' Preference Optimization}
\label{sec:methodology}

\subsection{Problem Statement}
\label{subsec:problem}

Our \textit{reference-free} preference optimization framework, \algname{}, accommodates \textit{commonly-encountered} scenarios where a conversation \(\mathcal{T}_U = \{(x_t, a_t)\}_{t=1}^{N}\) does \emph{not} include reference passages \({G}_t\). Instead, \algname{} generates \textit{pseudo} reference passages \(\tilde{{G}}_t\) to establish $\mathcal{T}_P =\{(x_t, a_t, \tilde{{G}}_t)\}_{t=1}^{N}$ to enable preference optimization. Then, the key challenge is how to accurately build the set of pseudo reference passages \(\tilde{\mathcal{G}} = \{\tilde{{G}}_t\}_{t=1}^{N} \) such that the preference-optimized model \(\theta_{\tilde{\mathcal{G}}}\) maximizes the retrieval performance on a target dataset, i.e.,
\begin{equation}
\label{eq:weak-sup-obj}
\tilde{\mathcal{G}}^* = \underset{\tilde{\mathcal{G}}}{\arg\max} \, J_{\theta_{\tilde{\mathcal{G}}}}(\mathcal{T}_P).
\end{equation}

\subsection{Response Refinement by CQR's Dual-Role}
\label{subsec:response_refinement}

\begin{wrapfigure}{R}{0.5\linewidth}
\vspace{-0.7cm}
\begin{center}
\includegraphics[width=7.0cm]{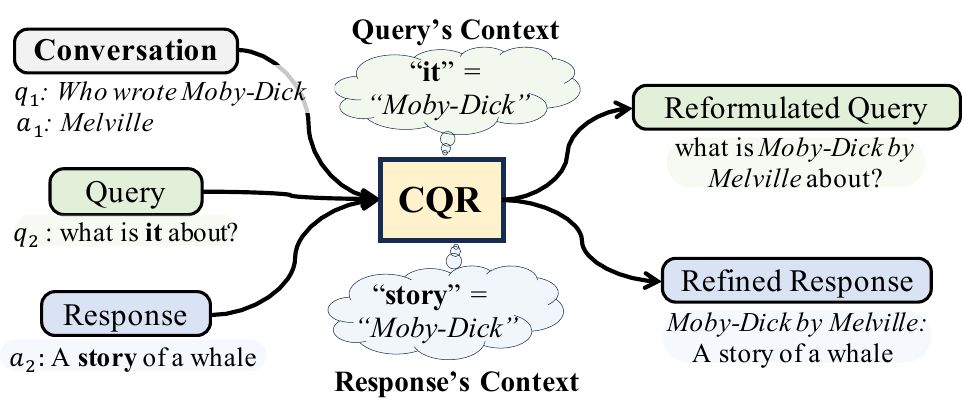}
\end{center}
\vspace{-0.5cm}
\caption{Dual-role of the CQR model.}
\vspace{-0.35cm}
\label{fig:dual_role}
\end{wrapfigure}

Because using raw responses harms the quality of pseudo reference passages,
we leverage the CQR model not only for query reformulation but also for \textit{response refinement}, thus introducing its \textit{dual role}. As shown in Figure \ref{fig:dual_role}, the CQR model identifies and integrates the key context\,(e.g., ``\textit{Moby-Dick}'') from the conversation\,\citep{convgqr, retpo}, demonstrating a capacity advantageous for both query reformulation and response refinement. This dual role is a natural extension arising from the inherent alignment between the objective of response refinement and the retrieval objective in Eq.~(\ref{eq:cqr}), where both aim to maximize the relevance to underlying reference passages.



\begin{wraptable}{r}{0.37\textwidth}
\vspace{-1.3em} 
\caption{Comparison of single- and dual-role configurations. LLM and CQR use a common language model.
} 
\def\arraystretch{0.7}
\centering
\vspace*{-0.25cm}
\resizebox{0.99\linewidth}{!}{%
\begin{tabular}[c]
{@{}c|cccc@{}}
\arrayrulecolor{black}\specialrule{1.5pt}{0.75pt}{2.5pt}
\multirow{2}{*}{\makecell[c]{Roles}} 
& \multicolumn{1}{c}{{ \textbf{Single Role}}} 
& \multicolumn{1}{c}{{ \textbf{Dual Role}}} 
\\ 
& \multicolumn{1}{c}{{Llama}}
& \multicolumn{1}{c}{{\textbf{\algname{}}}}
\\
\arrayrulecolor{black}\specialrule{1pt}{1.5pt}{1pt}
\arrayrulecolor{black}\specialrule{1pt}{1pt}{4.0pt}
\multirow{1}{*}[0.3em]{\textbf{Response Ref.}}
& \begin{tabular}[c]{@{}c@{}}{ {LLM} }\end{tabular}
&  \multirow{1}{*}[0.2em]{ {\textbf{CQR}} } 
\\ \addlinespace[0.5ex]\cdashline{1-4}\addlinespace[0.9ex]
\multirow{1}{*}[0.3em]{{Query Ref.}}                                
& \multicolumn{2}{c}{{CQR}} 
\\ \arrayrulecolor{black}\specialrule{1.5pt}{1pt}{1.0pt}
\end{tabular} 
}
\vspace*{-0.23cm}
\label{tbl:cqr_config}
\vspace*{-0.3cm}
\end{wraptable} 
We validate the effectiveness of the CQR's dual-role by comparing \textit{single-} and \textit{dual-role} configurations. As presented in Table~\ref{tbl:cqr_config}, the single-role configuration employs an LLM for response refinement, whereas the dual-role configuration utilizes the CQR model.

\noindent \textbf{Theoretical Evidence.} 
Refining responses with a CQR model, optimized for high retrieval accuracy of reference passages, yields more accurate pseudo reference passages, thereby improving the CQR's retrieval accuracy. We provide a formal justification of this optimization using the generalization bound from the pseudo label denoising theory\,\citep{wei2021theoretical}.
\begin{assumption}[\sc {Expansion and Separation\,\citep{wei2021theoretical}}]
We assume (i) $c$-expansion: each example is, on average, reachable to $c$ neighbors, i.e., $\mathbb{E}_{x} [ |\mathcal{N}(x)| ]=c$ with $c > 3$, where $\mathcal{N}(x)$ denotes the neighborhood of $x$; and (ii) $\mu$-separation: the average proportion of neighbors requiring different reference passages is $\mu$, which is negligibly small\,(e.g., $1/poly(d)$, the inverse of a polynomial in dimension). 
\label{assume:expansion}
\end{assumption}
These assumptions, commonly used in prior studies\,\citep{wei2021theoretical,park2023robust,cai2021theory}, consider data distributions in which the examples with the same reference passages are close by $c$-expansion whereas those with different reference passages are well separated by $\mu$-separation.
Under these assumptions, we derive the training error bounds for the single- and dual-role CQR models in Lemmas~\ref{lemma:err-single} and~\ref{lemma:err-dual}.

\begin{lemma}[\textsc{Single-Role Bound}]
\label{lemma:err-single}
Suppose that Assumption~\ref{assume:expansion} holds. Then, the training error of a CQR model $\theta_{single}$ trained under the single-role configuration is bounded by the quality of pseudo reference passages generated by the LLM such that
\begin{equation}
\begin{gathered}
{Err}(\theta_{single}) \le {{2}\over{c-1}} {Err}(\theta_{LLM}) + {{2c}\over{c-1}} \mu.
\end{gathered}
\label{eq:error_bound_single}
\end{equation}
\end{lemma}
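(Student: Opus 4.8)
The plan is to cast the single-role configuration as a direct instance of the pseudo-label denoising framework of \citet{wei2021theoretical} and then to invoke their generalization bound. First I would fix the latent ground-truth labeling: each query $x_t$ is identified with the reference passage it truly requires, so the input space is partitioned into classes, one per correct reference passage, and ${Err}(\cdot)$ measures the fraction of queries assigned a class other than their true one. Under this identification the LLM that produces pseudo reference passages in the single-role configuration plays the role of the pseudo-labeler, with error ${Err}(\theta_{LLM})$, while the CQR model trained to fit those pseudo references is the student $\theta_{single}$. The $\mu$-separation in Assumption~\ref{assume:expansion} then supplies the bound on neighbor pairs straddling two classes, and $c$-expansion supplies the expansion factor.

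The key structural ingredient is that the single-role training enforces \emph{input consistency} on the neighborhood graph: the trained student assigns the same (pseudo) reference passage to an example $x$ and to essentially all of its neighbors $\mathcal{N}(x)$. This is precisely the robustness property required of the minimizer in \citet{wei2021theoretical}. I would therefore verify, from the form of the single-role objective, that $\theta_{single}$ is consistent across $\mathcal{N}(x)$ up to the separation budget $\mu$, so that the denoising machinery applies.

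With this correspondence in place, the argument proceeds exactly as in the denoising theorem. I would partition the region where $\theta_{single}$ errs against the ground truth into (i) points where it nonetheless agrees with the LLM pseudo-labeler and (ii) points where it disagrees. The disagreement region (ii) is controlled by ${Err}(\theta_{LLM})$ directly; the agreement-but-wrong region (i) is a consistently labeled set that, not being absorbed into a neighboring correct class, must expand under $c$-expansion with $c>3$, while $\mu$-separation caps the measure of neighbor pairs that legitimately straddle two classes. Balancing these two contributions and substituting the expansion factor $c$ and separation $\mu$ into the bound of \citet{wei2021theoretical} yields ${Err}(\theta_{single}) \le \tfrac{2}{c-1}{Err}(\theta_{LLM}) + \tfrac{2c}{c-1}\mu$, as claimed.

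The main obstacle is the modeling step rather than the inequality, and two points need care. First, Assumption~\ref{assume:expansion} phrases $c$-expansion as an average reachability $\mathbb{E}_{x}[|\mathcal{N}(x)|]=c$, whereas the denoising theorem requires multiplicative expansion of subsets; I would read the assumption in the latter set-expansion sense so that the theorem applies. Second, and more delicately, I must justify that the single-role fitting objective genuinely certifies the input-consistency/robustness condition on the neighborhood structure — that fitting pseudo reference passages behaves like the consistency-regularized objective of \citet{wei2021theoretical} rather than merely memorizing the pseudo-labels. Once that correspondence and the translation from retrieval correctness into $0/1$ class error are granted, the stated bound follows by direct substitution into their theorem.
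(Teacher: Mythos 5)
Your proposal takes essentially the same route as the paper: the paper's proof simply instantiates the pseudo-label denoising bound of \citep{wei2021theoretical} with $\hat{\theta}=\theta_{single}$ and $\theta_{PL}=\theta_{LLM}$, and the two caveats you flag are exactly the points it handles---reading the expansion assumption in the sense required by that theorem, and certifying input consistency by arguing that the LLM-paraphrased candidate reformulations ranked in the preference pairs of Eq.~(\ref{eq:dpo}) realize input consistency regularization. Your additional sketch of the theorem's internal agreement/disagreement decomposition is consistent with, but not needed beyond, this substitution argument.
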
 

\begin{lemma}[\sc {Dual-Role Bound}]
\label{lemma:err-dual}
Suppose that Assumption~\ref{assume:expansion} holds. Then, the training error of a dual-role CQR model $\theta_{dual}$ under the dual-role configuration is bounded by
\begin{equation}
\begin{gathered}
{Err}(\theta_{dual}) \le \left({{2}\over{c-1}}\right)^2 {Err}(\theta_{LLM}) + \left({{2c}\over{c-1}}\right)\left({{c+1}\over{c-1}}\right) \mu.
\end{gathered}
\label{eq:error_bound_dual}
\end{equation}
\end{lemma}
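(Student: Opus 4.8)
The plan is to obtain the dual-role bound by \emph{composing} the single-role denoising argument with itself. The crucial observation is that, in the dual-role configuration, the responses are refined by a CQR model rather than by the LLM, and this CQR refiner is precisely the single-role model $\theta_{single}$ of Lemma~\ref{lemma:err-single}. Consequently, the pseudo reference passages on which $\theta_{dual}$ is trained are generated by $\theta_{single}$ instead of by $\theta_{LLM}$, so I may treat $\theta_{single}$ as the teacher in a second application of the very same pseudo-label denoising theorem. The squared factor in the claimed leading coefficient is exactly the signature of two chained applications.

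First I would confirm that Assumption~\ref{assume:expansion} still applies to this second round of pseudo-labeling. Both the $c$-expansion and $\mu$-separation conditions are properties of the underlying data distribution\,---\,the neighborhood structure on examples and the average proportion of neighbors requiring different reference passages\,---\,and are therefore insensitive to which model produces the pseudo labels. Hence the hypotheses of the denoising theorem of \citet{wei2021theoretical} hold verbatim with $\theta_{single}$ in the role of the pseudo-labeler, and applying it with $\theta_{single}$ as teacher and $\theta_{dual}$ as student yields
\begin{equation}
{Err}(\theta_{dual}) \le \frac{2}{c-1}\,{Err}(\theta_{single}) + \frac{2c}{c-1}\,\mu.
\label{eq:chain-step}
\end{equation}

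Next I would substitute the single-role bound of Lemma~\ref{lemma:err-single} for ${Err}(\theta_{single})$ in \eqref{eq:chain-step}, giving
\begin{equation}
{Err}(\theta_{dual}) \le \frac{2}{c-1}\left(\frac{2}{c-1}\,{Err}(\theta_{LLM}) + \frac{2c}{c-1}\,\mu\right) + \frac{2c}{c-1}\,\mu.
\end{equation}
The leading term collapses immediately to $\left(\frac{2}{c-1}\right)^2 {Err}(\theta_{LLM})$, matching the claim. For the $\mu$ terms I would collect
\begin{equation}
\frac{4c}{(c-1)^2} + \frac{2c}{c-1} = \frac{4c + 2c(c-1)}{(c-1)^2} = \frac{2c(c+1)}{(c-1)^2} = \left(\frac{2c}{c-1}\right)\left(\frac{c+1}{c-1}\right),
\end{equation}
which reproduces the stated coefficient exactly and completes the bound.

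The main obstacle is conceptual rather than computational: I must justify that the second invocation of the denoising theorem is legitimate, namely that the quantity ${Err}(\theta_{single})$ appearing as the teacher error in \eqref{eq:chain-step} is the \emph{same} object that Lemma~\ref{lemma:err-single} bounds. This amounts to arguing that refining responses with the trained single-role CQR model and then retraining is structurally identical to one round of the self-training procedure analyzed by \citet{wei2021theoretical}, so that the teacher's disagreement with the true reference passages is measured consistently across both steps. Once this equivalence is in place, the two-fold composition and the elementary algebra above deliver the result.
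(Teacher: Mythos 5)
Your proposal is correct and follows essentially the same route as the paper's own proof: the paper likewise treats the trained single-role CQR model as the pseudo-labeler in a second application of the denoising bound of \citep{wei2021theoretical}, obtaining ${Err}(\theta_{dual}) \le \frac{2}{c-1}\,{Err}(\theta_{single}) + \frac{2c}{c-1}\,\mu$, then substitutes Lemma~\ref{lemma:err-single} and collects the $\mu$-coefficients into $\left(\frac{2c}{c-1}\right)\left(\frac{c+1}{c-1}\right)$ exactly as you do. Your explicit check that $c$-expansion and $\mu$-separation are distribution-level properties independent of the pseudo-labeler (so the second invocation is legitimate) makes precise a step the paper leaves implicit, but it is the same argument.
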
 
These bounds extend the pseudo label denoising theorem\,\citep{wei2021theoretical}, with complete proofs presented in Appendix~\ref{subsec:lemma}. Combining these lemmas, we compare the two configurations in Theorem~\ref{thm:comparison}.
\begin{theorem}[\textsc{Error Bound Difference}] 
Let $\overline{Err}(\theta)$ denote the theoretical upper bound on the training error for the model $\theta$. Under Assumption~\ref{assume:expansion}, the bound of the dual-role configuration is smaller than that of the single-role configuration, i.e., $\overline{Err}(\theta_{dual}) < \overline{Err}(\theta_{single})$.
\label{thm:comparison} 
\end{theorem}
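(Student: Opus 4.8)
The plan is to prove the strict inequality by forming the explicit difference of the two upper bounds supplied by Lemmas~\ref{lemma:err-single} and~\ref{lemma:err-dual} and showing it is positive under Assumption~\ref{assume:expansion}. To keep the algebra transparent I would first abbreviate the common contraction factor as \( a = 2/(c-1) \) and the separation coefficient as \( B = 2c/(c-1) \), so that the two bounds read \( \overline{Err}(\theta_{single}) = a\,{Err}(\theta_{LLM}) + B\mu \) and \( \overline{Err}(\theta_{dual}) = a^2\,{Err}(\theta_{LLM}) + B\,\tfrac{c+1}{c-1}\,\mu \). Subtracting term by term, the \( {Err}(\theta_{LLM}) \) coefficients contribute \( a - a^2 = a(1-a) \), while the \( \mu \) coefficients contribute \( B\bigl(1 - \tfrac{c+1}{c-1}\bigr) = -\tfrac{4c}{(c-1)^2} \).

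Next I would record the two competing effects. Since \( c > 3 \) forces \( c-1 > 2 \), we have \( a = 2/(c-1) \in (0,1) \), hence \( a(1-a) > 0 \): squaring a sub-unit factor shrinks the pseudo-label (LLM) error term, and this contribution favors the dual role. However, because \( \tfrac{c+1}{c-1} > 1 \), the separation term is inflated in the dual-role bound, making the \( \mu \)-contribution to the difference negative. The key consolidation step is to collect these into a single factored expression,
\begin{equation}
\overline{Err}(\theta_{single}) - \overline{Err}(\theta_{dual}) = \frac{2}{(c-1)^2}\Bigl[(c-3)\,{Err}(\theta_{LLM}) - 2c\,\mu\Bigr],
\end{equation}
so that the prefactor is manifestly positive for \( c>3 \) and the entire sign reduces to the bracket. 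The bracket is positive precisely when \( \mu < \tfrac{c-3}{2c}\,{Err}(\theta_{LLM}) \).

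To close the argument I would invoke the \( \mu \)-separation clause of Assumption~\ref{assume:expansion}, which posits that \( \mu \) is negligibly small (of order \( 1/\mathrm{poly}(d) \)). In the regime \( c>3 \) with \( {Err}(\theta_{LLM}) \) bounded away from zero, this smallness of \( \mu \) guarantees \( (c-3)\,{Err}(\theta_{LLM}) > 2c\,\mu \), so the bracket — and therefore the whole difference — is strictly positive, yielding \( \overline{Err}(\theta_{dual}) < \overline{Err}(\theta_{single}) \).

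The main obstacle is that the dual-role bound is \emph{not} termwise smaller than the single-role bound: it trades an improved pseudo-label error term (the squared factor) against a worsened separation term (the factor \( \tfrac{c+1}{c-1} \)). Consequently the result cannot follow from a naive coefficient comparison; the substance of the proof lies in quantifying this trade-off and making explicit the regime — \( c>3 \) together with \( \mu \ll {Err}(\theta_{LLM}) \) — in which the gain provably dominates the loss. I would take care to state that dependence clearly, since it is the precise role played by the \( \mu \)-separation assumption.
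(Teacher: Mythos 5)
Your proof is correct, and it takes a genuinely sharper route than the paper's. The paper's proof (Appendix A.2) simply invokes $\mu \approx 0$ to discard the separation terms from both lemmas, reducing the claim to the coefficient comparison $\left(\frac{2}{c-1}\right)^2 \mathrm{Err}(\theta_{LLM}) < \frac{2}{c-1}\,\mathrm{Err}(\theta_{LLM})$ for $c>3$ --- an approximation-then-compare argument that never confronts the $\mu$ terms at all. You instead compute the exact difference and factor it as
\begin{equation*}
\overline{Err}(\theta_{single}) - \overline{Err}(\theta_{dual}) \;=\; \frac{2}{(c-1)^2}\Bigl[(c-3)\,\mathrm{Err}(\theta_{LLM}) - 2c\,\mu\Bigr],
\end{equation*}
which I have verified is algebraically right, yielding the precise condition $\mu < \frac{c-3}{2c}\,\mathrm{Err}(\theta_{LLM})$ before the negligibility of $\mu$ is invoked. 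This buys two things the paper's version hides: first, it exposes that the dual-role bound is \emph{not} termwise smaller --- its $\mu$-coefficient is inflated by the factor $\frac{c+1}{c-1} > 1$, so a naive comparison genuinely fails and the result rests on the squared contraction dominating the worsened separation term; second, it makes explicit the implicit hypothesis, buried in the paper's ``$\approx$'', that $\mathrm{Err}(\theta_{LLM})$ is bounded away from zero relative to $\mu$ (and that the threshold degenerates as $c \to 3^{+}$, so the required smallness of $\mu$ depends on the margin in $c>3$). The paper's proof is shorter; yours is the more honest quantitative statement of the same fact, and the regime you isolate is exactly where the theorem as stated is valid.
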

\begin{proof}
\vspace*{-0.3cm}
Because \(\mu\) is negligible and \( ( \frac{2}{c - 1} )^2 < \frac{2}{c - 1}\) for \(c > 3\), the dual-role configuration achieves a smaller upper bound. The complete proof is available in Appendix~\ref{subsec:thm}.
\vspace*{-0.2cm}
\end{proof}

\begin{wrapfigure}{R}{0.48\linewidth}
\vspace{-0.7cm}
\begin{center}
\includegraphics[width=7.0cm]{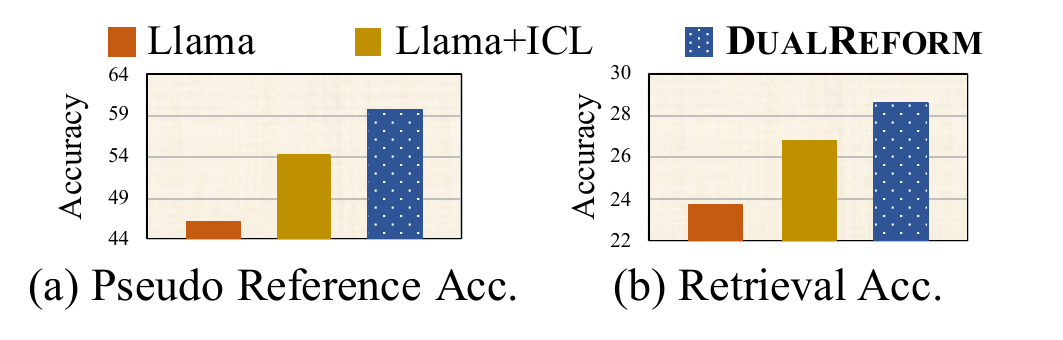}
\end{center}
\vspace{-0.5cm}
\caption{Empirical comparison of single- and dual-role variants. All variants employ Llama3.1-8b-inst as their backbones and use the prompt detailed in Prompt~\ref{prompt_temp}. \textit{Llama+ICL} variant additionally employs \emph{in-context learning}.
}
\vspace{-0.35cm}
\label{fig:sum-result}
\end{wrapfigure}

\noindent \textbf{Empirical Evidence.} 
Figure~\ref{fig:sum-result} empirically supports Theorem~\ref{thm:comparison}, showing that the refined responses by the CQR model yield more accurate pseudo references and higher retrieval accuracy than those by the single-role variants. \textit{Pseudo reference accuracy}, in Figure~\ref{fig:sum-result}(a), assesses response refinement by measuring the agreement between pseudo and ground-truth\protect\footnotemark~reference passages; and \textit{retrieval accuracy} in Figure~\ref{fig:sum-result}(b), as defined in Eq.~(\ref{eq:cqr}), assesses query reformulation by comparing passages retrieved via CQR against ground-truth reference passages. In the next section (see Figure~\ref{fig:refine_exs}), we will further demonstrate that the CQR model focuses on the context relevant to the response, while the single-role variant introduces less relevant context (e.g., including already mentioned movies when asking for unmentioned ones). 

\definecolor{deepblue}{RGB}{0, 0, 110}
\definecolor{red}{RGB}{230, 0, 0}

  
  

\footnotetext{The ground-truth information is used only for evaluation purposes, but it is \textit{not} used in \algname{}.}

\begin{wrapfigure}{R}{0.5\linewidth}
\vspace{-0.7cm}
\begin{center}
\includegraphics[width=7.0cm]{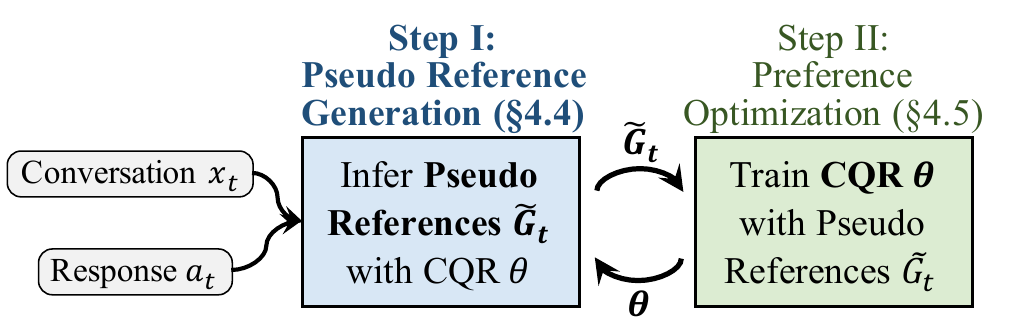}
\end{center}
\vspace{-0.5cm}
\caption{Overall flow of \algname{}.}
\vspace{-0.35cm}
\label{fig:our-flow}
\end{wrapfigure}

\subsection{Overview of \algname{}}
\label{subsec:overview}
Figure~\ref{fig:our-flow} illustrates the reference-free preference optimization framework of \algname{}, driven by the CQR's dual role. It iteratively alternates between the two roles: as a \textit{response refiner}, the CQR model helps generate pseudo reference passages \(\tilde{G}_t\)~($\S$~\ref{subsec:pseudo-generation}), which subsequently guide the optimization of the CQR model \(\theta\) as a \textit{query reformulator} to better align with the retrieval objective~($\S$~\ref{subsec:pseudo-optim}). The improved CQR model is reintroduced for further response refinement, forming a self-reinforcing cycle\,\citep{self-training-survery} that continually enhances both pseudo references and retrieval performance.
Algorithm \ref{algo:overall} details each step.



\begin{wrapfigure}{R!}{0.51\linewidth}
\begin{minipage}{\linewidth}
\vspace{-0.85cm}
\begin{algorithm}[H]
\footnotesize
\caption{\algname{}}
\label{algo:overall}
\begin{algorithmic}[1]
{\footnotesize
    \STATE {\textbf{Input}: Conversational dataset $\mathcal{T}_U$, CQR model $\theta$, Number of iterations $n_\text{iters}$ } \\
    
    \STATE {$i \gets 0$}
    \WHILE{$i < n_{\text{iters}}$} 
    {
        \STATE \textcolor{blue}{\textit{/* \textsc{Pseudo Reference Generation $\S$ \ref{subsec:pseudo-generation}} */}} \\
        \STATE $\mathcal{T}_P \!\gets\! \emptyset$ 
        \FOR{each $(x_t, a_t) \in \mathcal{T}_U$}
            
            \STATE {$\tilde{a}_t \!\gets\! \text{``''}$ { /* Initialize as an empty string */}}
            \IF{$i > 0$}
                \STATE {$\tilde{a}_t \!\gets\!\! {\rm RefineResponse}(x_t,\! a_t,\! \theta)$}\! { /* Eq.~(\ref{eq:response-refinement}) */}\!\!\!
            \ENDIF
            \STATE {$\tilde{G}_t \!\gets\! {\rm RetrieveReference}(a_t,\! \tilde{a}_t)$}\! { /* Eq.~(\ref{eq:pseudo-ref}) */}\!\!\!
            \STATE $\mathcal{T}_P \!\gets\! \mathcal{T}_P \cup \{ (x_t, a_t, \tilde{G}_t) \}$
        \ENDFOR
        \STATE \textcolor{blue}{\textit{/* \textsc{Preference Optimization $\S$ \ref{subsec:pseudo-optim}} */}} \\
        \STATE $\theta \gets\! {\rm Optimize}(\mathcal{T}_P, \theta)$ { /* Eq.~(\ref{eq:pseudo-optim-obj}) */}
        \STATE $i \gets i + 1$
}
\ENDWHILE
\STATE {\textbf{return} $ \theta $}
}
\end{algorithmic}
\end{algorithm}
\end{minipage}
\vspace{-0.5cm}
\end{wrapfigure}

\subsection{Step I: Pseudo Reference Generation}
\label{subsec:pseudo-generation}
For the \(t\)-th conversation turn, we define \textit{pseudo reference passages} \(\tilde{G}_t\) in Definition \ref{def:pseudo-ref}.

\begin{definition}[\sc {Pseudo Reference Passages}]
\label{def:pseudo-ref} 
Given a response \(a_t\) at the \(t\)-th conversation turn, and its refined counterpart \(\tilde{a}_t\), the \textit{pseudo reference passages} \(\tilde{G}_t\) are a set of passages, retrieved by $R(\cdot)$, which are most relevant to \( a_t \) and \(\tilde{a}_t\). Formally,
\begin{equation}
\label{eq:pseudo-ref}
\tilde{G}_t = R(a_t~ || ~\tilde{a}_t),
\end{equation}  
where \( || \) denotes string concatenation of \( a_t \) and \(\tilde{a}_t\).
\end{definition}

\smallskip
\noindent\textbf{Response Refinement via CQR.} 
Since the CQR model expects a query format as its input, each raw response \(a_t\) is converted into a query using a template function \(T(\cdot)\). 
Formally, given an input \( x_t = (\mathcal{H}_{<t}, q_t) \) and a raw response \( a_t \), 
the refined response \(\tilde{a}_t\) then becomes  
\begin{equation}
\label{eq:response-refinement}
\tilde{a}_t \!\!\;=\;\!\!
\begin{cases}
{\rm CQR}\bigl((x_t, T(a_t)); \theta^*\bigr) 
& \!\text{if \(\theta^*\) is trained}\\
\epsilon~~\text{(i.e., empty string)}
& \!\text{otherwise}, 
\end{cases}
\end{equation}
where \(\theta^*\) denotes the parameters of the trained CQR model from the previous iteration, held fixed in the current iteration. When $\theta^*$ is not sufficiently trained during the initialization period, an empty string \(\epsilon\) is returned and no refinement is applied. 



\newcommand{\newprompt}{\refstepcounter{promptcounter}}
\newprompt


\smallskip
\begin{minipage}[t]{0.57\textwidth}
\noindent\textbf{Query-Forming Template Function.} 
When converting a response into a query format, we exploit the CQR model's mechanism as well. In addition to query reformulation, recent CQR models\,\citep{retpo, convgqr} also perform query expansion that adds a potential response. 
To harness the overall mechanism, we design the template function \(T(\cdot)\) for a given response \(a_t\) in Prompt~\ref{prompt_temp}. 
\end{minipage} 
\hspace{0.03\textwidth}
\begin{minipage}[t]{0.4\textwidth}
\vspace{-0.2cm}
\centering
\begin{MyBox}[label={prompt_temp}]{Template~\(T(a_t)\)}
{\normalsize Can you clearly \textit{state the main points} of the \textit{last response} (\{$a_t$\}), contextualizing them and resolving coreferences?}
\end{MyBox}
\vspace{0.1cm}
\end{minipage} \\ 
This template prompts the CQR model to extract the context pertinent to \(a_t\) by rephrasing \textit{``last response \((\{a_t\})\)''} and to append a potential response to ``\textit{state the main points}'' of \(a_t\).



\subsection{Step II: Preference Optimization}
\label{subsec:pseudo-optim}

Once pseudo reference passages \(\{\tilde{G}_t\}_{t=1}^N\) are ready, \algname{} conducts preference optimization on $\mathcal{T}_P =\{(x_t, a_t, \tilde{G}_t)\}_{t=1}^{N}$, pairing each conversational turn with its corresponding \(\tilde{G}_t\).

\smallskip
\noindent\textbf{Preference Feedback Generation.} 
We use pseudo reference passages \(\tilde{G}_t\) to derive preference feedback \({\rm pref}(\tilde{G}_t)\) by ranking the candidate query reformulations \(\{\tilde{q}_t^i\}_{i=1}^M\) according to their retrieval scores as in Eq.~(\ref{eq:pref}). The retrieval score \(s(\tilde{q}_t^i \mid \tilde{G}_t)\) quantifies how accurately \(\tilde{q}_t^i\) retrieves passages comprising \(\tilde{G}_t\). This score is derived from a combination of multiple retrieval metrics, including Recall@k, MRR, and NDCG, to account for distinct aspects of retrieval quality\,\citep{eval_metrics}. See Appendix \ref{sec:ret-score} for its complete definition.

\smallskip
\noindent\textbf{Preference Optimization.} 
We then apply the standard preference optimization process consisting of SFT and DPO, following Eq.~(\ref{eq:sft}) and Eq.~(\ref{eq:dpo}), guided by the preference feedback. The training objective is to optimize $\theta_{\tilde{G}}$ such that
\begin{equation}
\label{eq:pseudo-optim-obj}
\theta_{\tilde{G}} = \underset{\theta}{\arg\min} \sum_{t=1}^{|\mathcal{T}_P|} \ell_{\text{pref}}(x_t, \tilde{G}_t; \theta) \\
\end{equation}
where $\theta$ is initially set to the parameter values obtained by SFT, i.e., $\underset{\theta}{\arg\min} \sum_{t=1}^{|\mathcal{T}_P|} \ell_{\text{sft}}(x_t, \tilde{G}_t; \theta)$.

\section{Evaluation}
\label{sec:evaluation}

\subsection{Experiment Setting}
\noindent\textbf{Dataset Preparation.}
We evaluate the efficacy of \algname{} in realistic CQR deployment scenarios where reference passages are \textit{unavailable} for target conversational datasets. To reflect the diversity of target datasets in the real world, we conduct experiments on three benchmarks: \textit{QReCC}\,\citep{qrecc} and \textit{TopiOCQA}\,\citep{topiocqa}, which focus on general-domain topics with similar conversational contexts, and \textit{SciConvQA}, a new benchmark focusing on specialized scientific domains with diverse conversational contexts.

\smallskip
\noindent{\underline{SciConvQA:}} This is our proprietary conversational dataset, constructed using scientific journal data\footnote{\url{https://aida.kisti.re.kr/data/b22c73ed-fa19-47b0-87b3-a509df8380e5}} provided by the Korea Institute of Science and Technology Information\footnote{\url{https://www.kisti.re.kr/}}, a
government-funded research institute. The dataset follows the conversation generation protocol of TopiOCQA\,\citep{topiocqa} and will be publicly released upon acceptance. Additional details, including an example conversation, and visual comparisons with QReCC and TopiOCQA, are provided in Appendix~\ref{sec:sciconvqa}.


\smallskip
\noindent\textbf{Algorithms.}
We compare \algname{} against (i) \textit{LLM-based} CQR methods, LLM-IQR\,\citep{infocqr}, HyDE-LLM\,\citep{hyde}, and LLM4CS-CoT\,\citep{llm4cs}; (ii) \textit{SFT-only} CQR methods, T5QR\,\citep{t5qr}; and (iii) \textit{reference-based} CQR methods, ConvGQR\,\citep{convgqr}, HyDE-FT\,\citep{hyde}, and RetPo\,\citep{retpo}. The first and second categories do not need reference passages, but the second category needs optimally reformulated queries, which are even more costly to obtain. HyDE is configured as either HyDE-LLM or HyDE-FT. 

To run \textit{reference-based} baselines on a target dataset \textit{devoid of reference passages}, we pre-train their CQR models on a \textit{source} dataset having reference passages and transfer the models to the target dataset. Two transfer scenarios are intended for a \textit{large} domain gap between a source and a target, \textit{QReCC\scalebox{.8}{~(source)}\,$\rightarrow$\,SciConvQA\scalebox{.8}{~(target)}}, and a \textit{small} domain gap,  \textit{QReCC\scalebox{.8}{~(source)}\,$\rightarrow$\,TopiOCQA\scalebox{.8}{~(target)}}. 

Additionally, we include the {\textit{Upper Bound\,(RetPo$^{\dagger}$)}} baseline that performs preference optimization by RetPo using genuine reference passages of each target dataset. This Upper~Bound baseline is used to estimate the ideal performance of CQR for a target dataset, although it is not practically usable owing to the necessity of reference passages. Importantly, \textit{none} of the baselines except Upper~Bound accesses reference passages.



\smallskip
\noindent\textbf{Metrics.}
We evaluate (1) \textit{pseudo reference accuracy}, assessing the agreement between our pseudo and ground-truth reference passages; (2) \textit{retrieval accuracy}, evaluating the agreement between CQR-retrieved passages and ground-truth reference passage; and (3) \textit{response generation accuracy}, measuring the quality of LLM-generated responses with CQR-retrieved passages. We measure pseudo reference and retrieval accuracy using MRR, NDCG@3, and Recall@$k$\,\citep{convgqr, retpo}, and generation accuracy using LLMeval, ROUGE, and BertScore\,\citep{adaptive-rag2, bergen}.
More details on the evaluation metrics are provided in Appendix~\ref{sec:app_metrics}.

\smallskip
\noindent\textbf{Retriever Systems.} 
Following \citep{convgqr, infocqr, retpo}, we employ BM25\,\citep{bm25} for sparse retrieval and GTR\,\citep{gtr} for dense retrieval. 

\noindent\textbf{Implementation Details.}
We train all baselines, except RetPo, using their official repositories. Due to the absence of released code, we implement RetPo by adopting our strategy for preference feedback generation and confirm that it achieves better performance than the original paper. For \algname{}, pseudo reference passages are updated once per epoch throughout three epochs, following \citep{noisy_student}. 
The top-3 relevant passages are chosen as pseudo reference passages for each conversation turn. The mean and standard error of three repetitions with different random initializations are reported. Additional implementation details are provided in Appendix~\ref{sec:app_impl}.



\def\arraystretch{1.00}
\begin{table*}[t!]
\caption{Retrieval accuracy of \algname{} compared with representative CQR baselines on the target conversational datasets: SciConvQA and TopiOCQA. The best and second-best results~(excluding Upper Bound) are highlighted in bold and underlined, respectively.}
\vspace{-0.6em}
\small
\centering
\resizebox{0.90\linewidth}{!}{%
\begin{tabular}[c]
{@{}ccc|cccc|ccccc@{}}
\arrayrulecolor{black}\specialrule{1.3pt}{0.75pt}{1.0pt}
\multirow{2}{*}{\hspace{-0.0cm}\makecell[c]{\textbf{Data}}\hspace{-0.0cm}}
& \multicolumn{2}{c|}{\multirow{2}{*}{\makecell[c]{\textbf{Query}\\\textbf{Reformulations}}}}
& \multicolumn{4}{c|}{{ \textbf{Sparse Retriever}}} 
& \multicolumn{4}{c}{{ \textbf{Dense Retriever}}}\\
& & & \multicolumn{1}{c}{~\textbf{MRR}~}
& \multicolumn{1}{c}{\textbf{NDCG}}
& \multicolumn{1}{c}{~\textbf{R@5}~}
& \multicolumn{1}{c|}{\textbf{R@20}}
& \multicolumn{1}{c}{~\textbf{MRR}~}
& \multicolumn{1}{c}{\textbf{NDCG}}
& \multicolumn{1}{c}{~\textbf{R@5}~}
& \multicolumn{1}{c}{\textbf{R@20}}
\\ 
\arrayrulecolor{black}\specialrule{1pt}{1.5pt}{0.7pt} 
\arrayrulecolor{black}\specialrule{1pt}{0.7pt}{3.0pt}
\multirow{10}{*}{\hspace{-0.0cm}\rotatebox{90}{\textbf{SciConvQA}}} 
& \multicolumn{2}{c|}{{ {Upper Bound}}} 
& \begin{tabular}[c]{@{}c@{}}{\hspace{-0.10cm}{20.89}\scalebox{0.6}{$\pm$0.30}\hspace{-0.20cm}}\end{tabular}
& \begin{tabular}[c]{@{}c@{}}{\hspace{-0.10cm}{18.91}\scalebox{0.6}{$\pm$0.58}\hspace{-0.20cm}}\end{tabular}
& \begin{tabular}[c]{@{}c@{}}{\hspace{-0.10cm}{30.09}\scalebox{0.6}{$\pm$0.62}\hspace{-0.20cm}}\end{tabular}
& \begin{tabular}[c]{@{}c@{}}{\hspace{-0.10cm}{43.45}\scalebox{0.6}{$\pm$0.50}\hspace{-0.20cm}}\end{tabular}
& \begin{tabular}[c]{@{}c@{}}{\hspace{-0.10cm}{23.73}\scalebox{0.6}{$\pm$0.70}\hspace{-0.20cm}}\end{tabular}
& \begin{tabular}[c]{@{}c@{}}{\hspace{-0.10cm}{22.49}\scalebox{0.6}{$\pm$0.80}\hspace{-0.20cm}}\end{tabular}
& \begin{tabular}[c]{@{}c@{}}{\hspace{-0.10cm}{31.95}\scalebox{0.6}{$\pm$1.00}\hspace{-0.20cm}}\end{tabular}
& \begin{tabular}[c]{@{}c@{}}{\hspace{-0.10cm}{45.14}\scalebox{0.6}{$\pm$0.67}\hspace{-0.20cm}}\end{tabular}
\\ \addlinespace[0.1ex]\cline{2-11}\addlinespace[0.5ex]
& \multicolumn{2}{c|}{{ \texttt{Original Query}}} 
& \begin{tabular}[c]{@{}c@{}}{\hspace{-0.10cm}{4.85}\scalebox{0.6}{$\pm$0.00}\hspace{-0.20cm}}\end{tabular}
& \begin{tabular}[c]{@{}c@{}}{\hspace{-0.10cm}{4.36}\scalebox{0.6}{$\pm$0.00}\hspace{-0.20cm}}\end{tabular}
& \begin{tabular}[c]{@{}c@{}}{\hspace{-0.10cm}{6.70}\scalebox{0.6}{$\pm$0.00}\hspace{-0.20cm}}\end{tabular}
& \begin{tabular}[c]{@{}c@{}}{\hspace{-0.10cm}{10.85}\scalebox{0.6}{$\pm$0.00}\hspace{-0.20cm}}\end{tabular}
& \begin{tabular}[c]{@{}c@{}}{\hspace{-0.10cm}{6.24}\scalebox{0.6}{$\pm$0.00}\hspace{-0.20cm}}\end{tabular}
& \begin{tabular}[c]{@{}c@{}}{\hspace{-0.10cm}{5.57}\scalebox{0.6}{$\pm$0.00}\hspace{-0.20cm}}\end{tabular}
& \begin{tabular}[c]{@{}c@{}}{\hspace{-0.10cm}{8.17}\scalebox{0.6}{$\pm$0.00}\hspace{-0.20cm}}\end{tabular}
& \begin{tabular}[c]{@{}c@{}}{\hspace{-0.10cm}{14.18}\scalebox{0.6}{$\pm$0.00}\hspace{-0.20cm}}\end{tabular}
\\ \addlinespace[0.5ex]\cdashline{2-11}\addlinespace[0.7ex]
& \multicolumn{2}{c|}{{ {LLM-IQR}}} 
& \begin{tabular}[c]{@{}c@{}}{\hspace{-0.10cm}{14.23}\scalebox{0.6}{$\pm$0.09}\hspace{-0.20cm}}\end{tabular}
& \begin{tabular}[c]{@{}c@{}}{\hspace{-0.10cm}{13.15}\scalebox{0.6}{$\pm$0.10}\hspace{-0.20cm}}\end{tabular}
& \begin{tabular}[c]{@{}c@{}}{\hspace{-0.10cm}{19.83}\scalebox{0.6}{$\pm$0.08}\hspace{-0.20cm}}\end{tabular}
& \begin{tabular}[c]{@{}c@{}}{\hspace{-0.10cm}{29.55}\scalebox{0.6}{$\pm$0.04}\hspace{-0.20cm}}\end{tabular}
& \begin{tabular}[c]{@{}c@{}}{\hspace{-0.10cm}{16.23}\scalebox{0.6}{$\pm$0.12}\hspace{-0.20cm}}\end{tabular}
& \begin{tabular}[c]{@{}c@{}}{\hspace{-0.10cm}{15.09}\scalebox{0.6}{$\pm$0.15}\hspace{-0.20cm}}\end{tabular}
& \begin{tabular}[c]{@{}c@{}}{\hspace{-0.10cm}{22.49}\scalebox{0.6}{$\pm$0.15}\hspace{-0.20cm}}\end{tabular}
& \begin{tabular}[c]{@{}c@{}}{\hspace{-0.10cm}{32.75}\scalebox{0.6}{$\pm$0.12}\hspace{-0.20cm}}\end{tabular}
\\
& \multicolumn{2}{c|}{{ {HyDE-LLM}}} 
& \begin{tabular}[c]{@{}c@{}}{\hspace{-0.10cm}{13.78}\scalebox{0.6}{$\pm$0.54}\hspace{-0.20cm}}\end{tabular}
& \begin{tabular}[c]{@{}c@{}}{\hspace{-0.10cm}{12.56}\scalebox{0.6}{$\pm$0.32}\hspace{-0.20cm}}\end{tabular}
& \begin{tabular}[c]{@{}c@{}}{\hspace{-0.10cm}{20.68}\scalebox{0.6}{$\pm$1.06}\hspace{-0.20cm}}\end{tabular}
& \begin{tabular}[c]{@{}c@{}}{\hspace{-0.10cm}{33.47}\scalebox{0.6}{$\pm$1.36}\hspace{-0.20cm}}\end{tabular}
& \begin{tabular}[c]{@{}c@{}}{\hspace{-0.10cm}{16.70}\scalebox{0.6}{$\pm$0.69}\hspace{-0.20cm}}\end{tabular}
& \begin{tabular}[c]{@{}c@{}}{\hspace{-0.10cm}{15.38}\scalebox{0.6}{$\pm$0.66}\hspace{-0.20cm}}\end{tabular}
& \begin{tabular}[c]{@{}c@{}}{\hspace{-0.10cm}{23.28}\scalebox{0.6}{$\pm$0.70}\hspace{-0.20cm}}\end{tabular}
& \begin{tabular}[c]{@{}c@{}}{\hspace{-0.10cm}{35.36}\scalebox{0.6}{$\pm$1.68}\hspace{-0.20cm}}\end{tabular}
\\ 
& \multicolumn{2}{c|}{{ {LLM4CS-CoT}}} 
& \begin{tabular}[c]{@{}c@{}}\hspace{-0.20cm}{\underline{16.53}\scalebox{0.6}{$\pm$0.18}\hspace{-0.20cm}}\end{tabular}
& \begin{tabular}[c]{@{}c@{}}\hspace{-0.20cm}{\underline{15.29}\scalebox{0.6}{$\pm$0.27}\hspace{-0.20cm}}\end{tabular}
& \begin{tabular}[c]{@{}c@{}}\hspace{-0.20cm}{\underline{22.49}\scalebox{0.6}{$\pm$0.30}\hspace{-0.20cm}}\end{tabular}
& \begin{tabular}[c]{@{}c@{}}\hspace{-0.20cm}{\underline{33.55}\scalebox{0.6}{$\pm$0.38}\hspace{-0.20cm}}\end{tabular}
& \begin{tabular}[c]{@{}c@{}}\hspace{-0.20cm}{\underline{18.25}\scalebox{0.6}{$\pm$0.22}\hspace{-0.20cm}}\end{tabular}
& \begin{tabular}[c]{@{}c@{}}\hspace{-0.20cm}{\underline{16.75}\scalebox{0.6}{$\pm$0.18}\hspace{-0.20cm}}\end{tabular}
& \begin{tabular}[c]{@{}c@{}}\hspace{-0.20cm}{\underline{24.48}\scalebox{0.6}{$\pm$0.22}\hspace{-0.20cm}}\end{tabular}
& \begin{tabular}[c]{@{}c@{}}\hspace{-0.20cm}{\underline{36.30}\scalebox{0.6}{$\pm$0.02}\hspace{-0.20cm}}\end{tabular}
\\ 
& \multicolumn{2}{c|}{{ {T5QR}}} 
& \begin{tabular}[c]{@{}c@{}}{\hspace{-0.10cm}{11.45}\scalebox{0.6}{$\pm$0.72}\hspace{-0.20cm}}\end{tabular}
& \begin{tabular}[c]{@{}c@{}}{\hspace{-0.10cm}{10.64}\scalebox{0.6}{$\pm$0.73}\hspace{-0.20cm}}\end{tabular}
& \begin{tabular}[c]{@{}c@{}}{\hspace{-0.10cm}{15.88}\scalebox{0.6}{$\pm$1.02}\hspace{-0.20cm}}\end{tabular}
& \begin{tabular}[c]{@{}c@{}}{\hspace{-0.10cm}{23.49}\scalebox{0.6}{$\pm$1.56}\hspace{-0.20cm}}\end{tabular}
& \begin{tabular}[c]{@{}c@{}}{\hspace{-0.10cm}{14.51}\scalebox{0.6}{$\pm$0.74}\hspace{-0.20cm}}\end{tabular}
& \begin{tabular}[c]{@{}c@{}}{\hspace{-0.10cm}{13.44}\scalebox{0.6}{$\pm$0.77}\hspace{-0.20cm}}\end{tabular}
& \begin{tabular}[c]{@{}c@{}}{\hspace{-0.10cm}{19.42}\scalebox{0.6}{$\pm$1.12}\hspace{-0.20cm}}\end{tabular}
& \begin{tabular}[c]{@{}c@{}}{\hspace{-0.10cm}{29.65}\scalebox{0.6}{$\pm$1.44}\hspace{-0.20cm}}\end{tabular}
\\ \addlinespace[0.5ex]\cdashline{2-11}\addlinespace[0.7ex]
& \multirow{3}{*}{\hspace{-0.0cm}{\makecell[c]{\scriptsize\textit{QReCC}\\ \small $\downarrow$ \\ \scriptsize\textit{SciConvQA}}}\hspace{-0cm}} 
& \multirow{1}{*}{\hspace{-0.2cm}\makecell[c]{ConvGQR}\hspace{-0.cm}}
& \begin{tabular}[c]{@{}c@{}}{\hspace{-0.10cm}{13.55}\scalebox{0.6}{$\pm$0.23}\hspace{-0.20cm}}\end{tabular}
& \begin{tabular}[c]{@{}c@{}}{\hspace{-0.10cm}{12.51}\scalebox{0.6}{$\pm$0.27}\hspace{-0.20cm}}\end{tabular}
& \begin{tabular}[c]{@{}c@{}}{\hspace{-0.10cm}{18.68}\scalebox{0.6}{$\pm$0.39}\hspace{-0.20cm}}\end{tabular}
& \begin{tabular}[c]{@{}c@{}}{\hspace{-0.10cm}{28.71}\scalebox{0.6}{$\pm$0.30}\hspace{-0.20cm}}\end{tabular}
& \begin{tabular}[c]{@{}c@{}}{\hspace{-0.10cm}{14.67}\scalebox{0.6}{$\pm$0.10}\hspace{-0.20cm}}\end{tabular}
& \begin{tabular}[c]{@{}c@{}}{\hspace{-0.10cm}{13.72}\scalebox{0.6}{$\pm$0.16}\hspace{-0.20cm}}\end{tabular}
& \begin{tabular}[c]{@{}c@{}}{\hspace{-0.10cm}{20.03}\scalebox{0.6}{$\pm$0.08}\hspace{-0.20cm}}\end{tabular}
& \begin{tabular}[c]{@{}c@{}}{\hspace{-0.10cm}{29.90}\scalebox{0.6}{$\pm$0.58}\hspace{-0.20cm}}\end{tabular}
\\
& & \multirow{1}{*}{\hspace{-0.2cm}\makecell[c]{HyDE-FT}\hspace{-0.cm}}
& \begin{tabular}[c]{@{}c@{}}{\hspace{-0.10cm}{11.09}\scalebox{0.6}{$\pm$0.47}\hspace{-0.20cm}}\end{tabular}
& \begin{tabular}[c]{@{}c@{}}{\hspace{-0.10cm}{9.85}\scalebox{0.6}{$\pm$0.52}\hspace{-0.20cm}}\end{tabular}
& \begin{tabular}[c]{@{}c@{}}{\hspace{-0.10cm}{15.89}\scalebox{0.6}{$\pm$1.07}\hspace{-0.20cm}}\end{tabular}
& \begin{tabular}[c]{@{}c@{}}{\hspace{-0.10cm}{27.74}\scalebox{0.6}{$\pm$1.74}\hspace{-0.20cm}}\end{tabular}
& \begin{tabular}[c]{@{}c@{}}{\hspace{-0.10cm}{12.16}\scalebox{0.6}{$\pm$1.03}\hspace{-0.20cm}}\end{tabular}
& \begin{tabular}[c]{@{}c@{}}{\hspace{-0.10cm}{11.09}\scalebox{0.6}{$\pm$1.15}\hspace{-0.20cm}}\end{tabular}
& \begin{tabular}[c]{@{}c@{}}{\hspace{-0.10cm}{16.63}\scalebox{0.6}{$\pm$1.90}\hspace{-0.20cm}}\end{tabular}
& \begin{tabular}[c]{@{}c@{}}{\hspace{-0.10cm}{27.50}\scalebox{0.6}{$\pm$2.65}\hspace{-0.20cm}}\end{tabular}
\\
& & \multirow{1}{*}{\hspace{-0.2cm}\makecell[c]{RetPo}\hspace{-0.cm}}
& \begin{tabular}[c]{@{}c@{}}{\hspace{-0.10cm}{15.60}\scalebox{0.6}{$\pm$0.23}\hspace{-0.20cm}}\end{tabular}
& \begin{tabular}[c]{@{}c@{}}{\hspace{-0.10cm}{14.36}\scalebox{0.6}{$\pm$0.24}\hspace{-0.20cm}}\end{tabular}
& \begin{tabular}[c]{@{}c@{}}{\hspace{-0.10cm}{21.61}\scalebox{0.6}{$\pm$0.33}\hspace{-0.20cm}}\end{tabular}
& \begin{tabular}[c]{@{}c@{}}{\hspace{-0.10cm}{32.63}\scalebox{0.6}{$\pm$0.50}\hspace{-0.20cm}}\end{tabular}
& \begin{tabular}[c]{@{}c@{}}{\hspace{-0.10cm}{17.95}\scalebox{0.6}{$\pm$0.08}\hspace{-0.20cm}}\end{tabular}
& \begin{tabular}[c]{@{}c@{}}{\hspace{-0.10cm}{16.74}\scalebox{0.6}{$\pm$0.08}\hspace{-0.20cm}}\end{tabular}
& \begin{tabular}[c]{@{}c@{}}{\hspace{-0.10cm}{24.31}\scalebox{0.6}{$\pm$0.36}\hspace{-0.20cm}}\end{tabular}
& \begin{tabular}[c]{@{}c@{}}{\hspace{-0.10cm}{35.45}\scalebox{0.6}{$\pm$0.56}\hspace{-0.20cm}}\end{tabular}
\\ \addlinespace[0.3ex]\cdashline{2-11}\addlinespace[0.8ex]
& \multicolumn{2}{c|}{{ \textbf{\algname{}}}} 
& \begin{tabular}[c]{@{}c@{}}{\hspace{-0.10cm}\textbf{20.06}\scalebox{0.6}{$\pm$0.19}\hspace{-0.20cm}}\end{tabular}
& \begin{tabular}[c]{@{}c@{}}{\hspace{-0.10cm}\textbf{18.42}\scalebox{0.6}{$\pm$0.27}\hspace{-0.20cm}}\end{tabular}
& \begin{tabular}[c]{@{}c@{}}{\hspace{-0.10cm}\textbf{28.64}\scalebox{0.6}{$\pm$0.28}\hspace{-0.20cm}}\end{tabular}
& \begin{tabular}[c]{@{}c@{}}{\hspace{-0.10cm}\textbf{43.14}\scalebox{0.6}{$\pm$0.20}\hspace{-0.20cm}}\end{tabular}
& \begin{tabular}[c]{@{}c@{}}{\hspace{-0.10cm}\textbf{23.53}\scalebox{0.6}{$\pm$0.09}\hspace{-0.20cm}}\end{tabular}
& \begin{tabular}[c]{@{}c@{}}{\hspace{-0.10cm}\textbf{22.04}\scalebox{0.6}{$\pm$0.23}\hspace{-0.20cm}}\end{tabular}
& \begin{tabular}[c]{@{}c@{}}{\hspace{-0.10cm}\textbf{31.19}\scalebox{0.6}{$\pm$0.18}\hspace{-0.20cm}}\end{tabular}
& \begin{tabular}[c]{@{}c@{}}{\hspace{-0.10cm}\textbf{45.21}\scalebox{0.6}{$\pm$0.22}\hspace{-0.20cm}}\end{tabular}
\\ 
\arrayrulecolor{black}\specialrule{1.3pt}{1.5pt}{1.5pt} 
\multirow{10}{*}{\hspace{-0.0cm}\rotatebox{90}{\textbf{TopiOCQA}}} 
& \multicolumn{2}{c|}{{ {Upper Bound}}} 
& \begin{tabular}[c]{@{}c@{}}{\hspace{-0.10cm}{29.19}\scalebox{0.6}{$\pm$0.30}\hspace{-0.20cm}}\end{tabular}
& \begin{tabular}[c]{@{}c@{}}{\hspace{-0.10cm}{27.19}\scalebox{0.6}{$\pm$0.22}\hspace{-0.20cm}}\end{tabular}
& \begin{tabular}[c]{@{}c@{}}{\hspace{-0.10cm}{39.72}\scalebox{0.6}{$\pm$0.87}\hspace{-0.20cm}}\end{tabular}
& \begin{tabular}[c]{@{}c@{}}{\hspace{-0.10cm}{56.89}\scalebox{0.6}{$\pm$0.97}\hspace{-0.20cm}}\end{tabular}
& \begin{tabular}[c]{@{}c@{}}{\hspace{-0.10cm}{40.58}\scalebox{0.6}{$\pm$0.22}\hspace{-0.20cm}}\end{tabular}
& \begin{tabular}[c]{@{}c@{}}{\hspace{-0.10cm}{39.15}\scalebox{0.6}{$\pm$0.25}\hspace{-0.20cm}}\end{tabular}
& \begin{tabular}[c]{@{}c@{}}{\hspace{-0.10cm}{53.42}\scalebox{0.6}{$\pm$1.04}\hspace{-0.20cm}}\end{tabular}
& \begin{tabular}[c]{@{}c@{}}{\hspace{-0.10cm}{72.69}\scalebox{0.6}{$\pm$1.22}\hspace{-0.20cm}}\end{tabular}
\\ \addlinespace[0.3ex]\cline{2-11}\addlinespace[0.5ex]
& \multicolumn{2}{c|}{{ \texttt{Original Query}}} 
& \begin{tabular}[c]{@{}c@{}}{\hspace{-0.10cm}{2.09}\scalebox{0.6}{$\pm$0.00}\hspace{-0.20cm}}\end{tabular}
& \begin{tabular}[c]{@{}c@{}}{\hspace{-0.10cm}{1.77}\scalebox{0.6}{$\pm$0.00}\hspace{-0.20cm}}\end{tabular}
& \begin{tabular}[c]{@{}c@{}}{\hspace{-0.10cm}{2.90}\scalebox{0.6}{$\pm$0.00}\hspace{-0.20cm}}\end{tabular}
& \begin{tabular}[c]{@{}c@{}}{\hspace{-0.10cm}{5.21}\scalebox{0.6}{$\pm$0.00}\hspace{-0.20cm}}\end{tabular}
& \begin{tabular}[c]{@{}c@{}}{\hspace{-0.10cm}{5.95}\scalebox{0.6}{$\pm$0.00}\hspace{-0.20cm}}\end{tabular}
& \begin{tabular}[c]{@{}c@{}}{\hspace{-0.10cm}{5.52}\scalebox{0.6}{$\pm$0.00}\hspace{-0.20cm}}\end{tabular}
& \begin{tabular}[c]{@{}c@{}}{\hspace{-0.10cm}{8.35}\scalebox{0.6}{$\pm$0.00}\hspace{-0.20cm}}\end{tabular}
& \begin{tabular}[c]{@{}c@{}}{\hspace{-0.10cm}{12.05}\scalebox{0.6}{$\pm$0.00}\hspace{-0.20cm}}\end{tabular}
\\ \addlinespace[0.5ex]\cdashline{2-11}\addlinespace[0.7ex]
& \multicolumn{2}{c|}{{ {LLM-IQR}}} 
& \begin{tabular}[c]{@{}c@{}}{\hspace{-0.10cm}{17.21}\scalebox{0.6}{$\pm$0.09}\hspace{-0.20cm}}\end{tabular}
& \begin{tabular}[c]{@{}c@{}}{\hspace{-0.10cm}{15.66}\scalebox{0.6}{$\pm$0.11}\hspace{-0.20cm}}\end{tabular}
& \begin{tabular}[c]{@{}c@{}}{\hspace{-0.10cm}{24.34}\scalebox{0.6}{$\pm$0.11}\hspace{-0.20cm}}\end{tabular}
& \begin{tabular}[c]{@{}c@{}}{\hspace{-0.10cm}{37.72}\scalebox{0.6}{$\pm$0.06}\hspace{-0.20cm}}\end{tabular}
& \begin{tabular}[c]{@{}c@{}}{\hspace{-0.10cm}{32.53}\scalebox{0.6}{$\pm$0.13}\hspace{-0.20cm}}\end{tabular}
& \begin{tabular}[c]{@{}c@{}}{\hspace{-0.10cm}{33.26}\scalebox{0.6}{$\pm$0.16}\hspace{-0.20cm}}\end{tabular}
& \begin{tabular}[c]{@{}c@{}}{\hspace{-0.10cm}{45.61}\scalebox{0.6}{$\pm$0.18}\hspace{-0.20cm}}\end{tabular}
& \begin{tabular}[c]{@{}c@{}}{\hspace{-0.10cm}{61.63}\scalebox{0.6}{$\pm$0.14}\hspace{-0.20cm}}\end{tabular}
\\
& \multicolumn{2}{c|}{{ {HyDE-LLM}}} 
& \begin{tabular}[c]{@{}c@{}}{\hspace{-0.10cm}{20.65}\scalebox{0.6}{$\pm$1.29}\hspace{-0.20cm}}\end{tabular}
& \begin{tabular}[c]{@{}c@{}}{\hspace{-0.10cm}{18.59}\scalebox{0.6}{$\pm$0.87}\hspace{-0.20cm}}\end{tabular}
& \begin{tabular}[c]{@{}c@{}}{\hspace{-0.10cm}{28.26}\scalebox{0.6}{$\pm$1.95}\hspace{-0.20cm}}\end{tabular}
& \begin{tabular}[c]{@{}c@{}}{\hspace{-0.10cm}{44.34}\scalebox{0.6}{$\pm$1.03}\hspace{-0.20cm}}\end{tabular}
& \begin{tabular}[c]{@{}c@{}}{\hspace{-0.10cm}{33.39}\scalebox{0.6}{$\pm$1.48}\hspace{-0.20cm}}\end{tabular}
& \begin{tabular}[c]{@{}c@{}}{\hspace{-0.10cm}{32.02}\scalebox{0.6}{$\pm$1.34}\hspace{-0.20cm}}\end{tabular}
& \begin{tabular}[c]{@{}c@{}}{\hspace{-0.10cm}{45.54}\scalebox{0.6}{$\pm$0.05}\hspace{-0.20cm}}\end{tabular}
& \begin{tabular}[c]{@{}c@{}}{\hspace{-0.10cm}{61.49}\scalebox{0.6}{$\pm$0.46}\hspace{-0.20cm}}\end{tabular}
\\ 
& \multicolumn{2}{c|}{{ {LLM4CS-CoT}}} 
& \begin{tabular}[c]{@{}c@{}}{\hspace{-0.10cm}\underline{26.81}\scalebox{0.6}{$\pm$0.66}\hspace{-0.20cm}}\end{tabular}
& \begin{tabular}[c]{@{}c@{}}{\hspace{-0.10cm}\underline{25.40}\scalebox{0.6}{$\pm$0.38}\hspace{-0.20cm}}\end{tabular}
& \begin{tabular}[c]{@{}c@{}}{\hspace{-0.10cm}\underline{37.12}\scalebox{0.6}{$\pm$1.12}\hspace{-0.20cm}}\end{tabular}
& \begin{tabular}[c]{@{}c@{}}{\hspace{-0.10cm}\underline{53.36}\scalebox{0.6}{$\pm$1.80}\hspace{-0.20cm}}\end{tabular}
& \begin{tabular}[c]{@{}c@{}}{\hspace{-0.10cm}\underline{37.55}\scalebox{0.6}{$\pm$0.75}\hspace{-0.20cm}}\end{tabular}
& \begin{tabular}[c]{@{}c@{}}{\hspace{-0.10cm}\underline{35.92}\scalebox{0.6}{$\pm$1.04}\hspace{-0.20cm}}\end{tabular}
& \begin{tabular}[c]{@{}c@{}}{\hspace{-0.10cm}\underline{52.45}\scalebox{0.6}{$\pm$0.35}\hspace{-0.20cm}}\end{tabular}
& \begin{tabular}[c]{@{}c@{}}{\hspace{-0.10cm}\underline{69.51}\scalebox{0.6}{$\pm$0.04}\hspace{-0.20cm}}\end{tabular}
\\ 
& \multicolumn{2}{c|}{{ {T5QR}}} 
& \begin{tabular}[c]{@{}c@{}}{\hspace{-0.10cm}{12.14}\scalebox{0.6}{$\pm$0.11}\hspace{-0.20cm}}\end{tabular}
& \begin{tabular}[c]{@{}c@{}}{\hspace{-0.10cm}{10.39}\scalebox{0.6}{$\pm$0.12}\hspace{-0.20cm}}\end{tabular}
& \begin{tabular}[c]{@{}c@{}}{\hspace{-0.10cm}{17.61}\scalebox{0.6}{$\pm$0.10}\hspace{-0.20cm}}\end{tabular}
& \begin{tabular}[c]{@{}c@{}}{\hspace{-0.10cm}{31.16}\scalebox{0.6}{$\pm$0.17}\hspace{-0.20cm}}\end{tabular}
& \begin{tabular}[c]{@{}c@{}}{\hspace{-0.10cm}{28.36}\scalebox{0.6}{$\pm$0.07}\hspace{-0.20cm}}\end{tabular}
& \begin{tabular}[c]{@{}c@{}}{\hspace{-0.10cm}{27.42}\scalebox{0.6}{$\pm$0.05}\hspace{-0.20cm}}\end{tabular}
& \begin{tabular}[c]{@{}c@{}}{\hspace{-0.10cm}{39.78}\scalebox{0.6}{$\pm$0.07}\hspace{-0.20cm}}\end{tabular}
& \begin{tabular}[c]{@{}c@{}}{\hspace{-0.10cm}{53.21}\scalebox{0.6}{$\pm$0.07}\hspace{-0.20cm}}\end{tabular}
\\ \addlinespace[0.5ex]\cdashline{2-11}\addlinespace[0.7ex]
& \multirow{3}{*}{\hspace{-0.0cm}{\makecell[c]{\scriptsize\textit{QReCC}\\ $\downarrow$ \\ \scriptsize\textit{TopiOCQA} }}\hspace{-0cm}} 
& \multirow{1}{*}{\hspace{-0.2cm}\makecell[c]{ConvGQR}\hspace{-0.cm}}
& \begin{tabular}[c]{@{}c@{}}{\hspace{-0.10cm}{12.86}\scalebox{0.6}{$\pm$0.16}\hspace{-0.20cm}}\end{tabular}
& \begin{tabular}[c]{@{}c@{}}{\hspace{-0.10cm}{11.28}\scalebox{0.6}{$\pm$0.12}\hspace{-0.20cm}}\end{tabular}
& \begin{tabular}[c]{@{}c@{}}{\hspace{-0.10cm}{17.89}\scalebox{0.6}{$\pm$0.23}\hspace{-0.20cm}}\end{tabular}
& \begin{tabular}[c]{@{}c@{}}{\hspace{-0.10cm}{30.91}\scalebox{0.6}{$\pm$0.47}\hspace{-0.20cm}}\end{tabular}
& \begin{tabular}[c]{@{}c@{}}{\hspace{-0.10cm}{22.58}\scalebox{0.6}{$\pm$0.14}\hspace{-0.20cm}}\end{tabular}
& \begin{tabular}[c]{@{}c@{}}{\hspace{-0.10cm}{21.33}\scalebox{0.6}{$\pm$0.19}\hspace{-0.20cm}}\end{tabular}
& \begin{tabular}[c]{@{}c@{}}{\hspace{-0.10cm}{33.60}\scalebox{0.6}{$\pm$0.07}\hspace{-0.20cm}}\end{tabular}
& \begin{tabular}[c]{@{}c@{}}{\hspace{-0.10cm}{48.20}\scalebox{0.6}{$\pm$0.26}\hspace{-0.20cm}}\end{tabular}
\\
& & \multirow{1}{*}{\hspace{-0.2cm}\makecell[c]{HyDE-FT}\hspace{-0.cm}}
& \begin{tabular}[c]{@{}c@{}}{\hspace{-0.10cm}{11.22}\scalebox{0.6}{$\pm$1.70}\hspace{-0.20cm}}\end{tabular}
& \begin{tabular}[c]{@{}c@{}}{\hspace{-0.10cm}{10.05}\scalebox{0.6}{$\pm$1.70}\hspace{-0.20cm}}\end{tabular}
& \begin{tabular}[c]{@{}c@{}}{\hspace{-0.10cm}{16.25}\scalebox{0.6}{$\pm$3.10}\hspace{-0.20cm}}\end{tabular}
& \begin{tabular}[c]{@{}c@{}}{\hspace{-0.10cm}{25.38}\scalebox{0.6}{$\pm$4.20}\hspace{-0.20cm}}\end{tabular}
& \begin{tabular}[c]{@{}c@{}}{\hspace{-0.10cm}{19.28}\scalebox{0.6}{$\pm$0.63}\hspace{-0.20cm}}\end{tabular}
& \begin{tabular}[c]{@{}c@{}}{\hspace{-0.10cm}{17.80}\scalebox{0.6}{$\pm$0.80}\hspace{-0.20cm}}\end{tabular}
& \begin{tabular}[c]{@{}c@{}}{\hspace{-0.10cm}{27.01}\scalebox{0.6}{$\pm$0.88}\hspace{-0.20cm}}\end{tabular}
& \begin{tabular}[c]{@{}c@{}}{\hspace{-0.10cm}{38.85}\scalebox{0.6}{$\pm$1.38}\hspace{-0.20cm}}\end{tabular}
\\
& & \multirow{1}{*}{\hspace{-0.2cm}\makecell[c]{RetPo}\hspace{-0.cm}}
& \begin{tabular}[c]{@{}c@{}}{\hspace{-0.10cm}{23.20}\scalebox{0.6}{$\pm$0.33}\hspace{-0.20cm}}\end{tabular}
& \begin{tabular}[c]{@{}c@{}}{\hspace{-0.10cm}{21.41}\scalebox{0.6}{$\pm$0.28}\hspace{-0.20cm}}\end{tabular}
& \begin{tabular}[c]{@{}c@{}}{\hspace{-0.10cm}{32.18}\scalebox{0.6}{$\pm$0.20}\hspace{-0.20cm}}\end{tabular}
& \begin{tabular}[c]{@{}c@{}}{\hspace{-0.10cm}{48.54}\scalebox{0.6}{$\pm$0.22}\hspace{-0.20cm}}\end{tabular}
& \begin{tabular}[c]{@{}c@{}}{\hspace{-0.10cm}{35.67}\scalebox{0.6}{$\pm$0.05}\hspace{-0.20cm}}\end{tabular}
& \begin{tabular}[c]{@{}c@{}}{\hspace{-0.10cm}{34.28}\scalebox{0.6}{$\pm$0.04}\hspace{-0.20cm}}\end{tabular}
& \begin{tabular}[c]{@{}c@{}}{\hspace{-0.10cm}{49.99}\scalebox{0.6}{$\pm$0.31}\hspace{-0.20cm}}\end{tabular}
& \begin{tabular}[c]{@{}c@{}}{\hspace{-0.10cm}{67.47}\scalebox{0.6}{$\pm$0.10}\hspace{-0.20cm}}\end{tabular}
\\ \addlinespace[0.3ex]\cdashline{2-11}\addlinespace[0.8ex]
& \multicolumn{2}{c|}{{ \textbf{\algname{} }}} 
& \begin{tabular}[c]{@{}c@{}}{\hspace{-0.10cm}\textbf{28.39}\scalebox{0.6}{$\pm$0.39}\hspace{-0.20cm}}\end{tabular}
& \begin{tabular}[c]{@{}c@{}}{\hspace{-0.10cm}\textbf{26.08}\scalebox{0.6}{$\pm$0.46}\hspace{-0.20cm}}\end{tabular}
& \begin{tabular}[c]{@{}c@{}}{\hspace{-0.10cm}\textbf{39.57}\scalebox{0.6}{$\pm$1.05}\hspace{-0.20cm}}\end{tabular}
& \begin{tabular}[c]{@{}c@{}}{\hspace{-0.10cm}\textbf{56.62}\scalebox{0.6}{$\pm$1.24}\hspace{-0.20cm}}\end{tabular}
& \begin{tabular}[c]{@{}c@{}}{\hspace{-0.10cm}\textbf{40.14}\scalebox{0.6}{$\pm$0.21}\hspace{-0.20cm}}\end{tabular}
& \begin{tabular}[c]{@{}c@{}}{\hspace{-0.10cm}\textbf{38.33}\scalebox{0.6}{$\pm$0.28}\hspace{-0.20cm}}\end{tabular}
& \begin{tabular}[c]{@{}c@{}}{\hspace{-0.10cm}\textbf{53.78}\scalebox{0.6}{$\pm$0.58}\hspace{-0.20cm}}\end{tabular}
& \begin{tabular}[c]{@{}c@{}}{\hspace{-0.10cm}\textbf{71.99}\scalebox{0.6}{$\pm$0.30}\hspace{-0.20cm}}\end{tabular}
\\ 
\arrayrulecolor{black}\specialrule{1.3pt}{1.0pt}{1.0pt}
\end{tabular}
}
\vspace{-0.6cm}
\label{tbl:overall_perf}
\end{table*}

\subsection{Main Results}
Table~\ref{tbl:overall_perf} compares \algname{} against CQR baselines for the two target datasets.

\noindent\textbf{Significance of Reference-Free Preference Optimization.} Both Upper Bound and RetPo employ preference optimization, yet they exhibit contrasting results depending on the availability of reference passages from target datasets. The Upper Bound achieves the strongest results by using reference passages, whereas RetPo struggles without them even when the target dataset shares similar conversational contexts\,(QReCC\,$\rightarrow$\,TopiOCQA). This result calls for an effective approach to preference optimization in reference-free scenarios.

\noindent\textbf{\algname{}: An Effective Reference-Free Preference Optimization Framework.} \algname{} consistently outperforms the baselines and achieves performance close to the Upper Bound across datasets and retrieval systems. On average, it achieves improvement of 15.70\% over LLM4CS-CoT, the strongest baseline, and reaches 98.23\% of the Upper Bound's performance. This result indicates the efficacy of \algname{} as a reference-free preference optimization approach.

\noindent\textbf{Robustness across Diverse CQR Domains.} 
\algname{} maintains strong performance across general domains\,(TopiOCQA) and specialized domains\,(SciConvQA), while the performance of the baselines varies considerably per domain. For example, \algname{} outperforms LLM4CS-CoT by 5.12\% on TopiOCQA, and larger improvement of 26.28\% on SciConvQA. This result reveals the domain sensitivity of the baselines and highlight \algname{}'s robust effectiveness for diverse CQR domains, facilitated by the reference-free preference optimization on target datasets. More results on QReCC are provided in Appendix~\ref{sec:app_qrecc}.

\begin{wraptable}{r}{0.38\textwidth}
\vspace{-4.8em} 
\caption{Comparison of response refinement methods.} 
\def\arraystretch{1.3}
\centering
\vspace*{-0.4cm}
\resizebox{0.99\linewidth}{!}{%
\begin{tabular}[c]
{@{}cc|cc|ccc@{}}
\arrayrulecolor{black}\specialrule{1.2pt}{0.75pt}{2.5pt}
\multirow{2}{*}{\hspace{-0.0cm}{\makecell[c]{ Data }}\hspace{-0cm}}  & \multirow{2}{*}{\hspace{-0.0cm}{\makecell[c]{\textbf{Refine}\\ \textbf{Methods}}}\hspace{-0cm}} 

& \multicolumn{2}{c|}{{\textbf{Pseudo Acc.}}} 
& \multicolumn{2}{c}{{\textbf{Retrieval Acc.}}} 
\\ 
& & \multicolumn{1}{c}{{MRR}\hspace{-0.20cm}}
& \multicolumn{1}{c|}{{R@5}\hspace{-0.20cm}}
& \multicolumn{1}{c}{{MRR}\hspace{-0.20cm}}
& \multicolumn{1}{c}{{R@5}\hspace{-0.20cm}}
\\
\arrayrulecolor{black}\specialrule{1pt}{1.5pt}{1pt}
\arrayrulecolor{black}\specialrule{1pt}{1pt}{3.0pt}
\multirow{3}{*}{\vspace{0.4cm}\rotatebox{90}{\small SciConvQA}} 
& \multirow{1}{*}{{Llama}}
& \begin{tabular}[c]{@{}c@{}}{{36.55}\hspace{-0.20cm}}\end{tabular}
& \begin{tabular}[c]{@{}c@{}}{{46.13}\hspace{-0.20cm}}\end{tabular}
& \begin{tabular}[c]{@{}c@{}}{{17.39}\hspace{-0.20cm}}\end{tabular}
& \begin{tabular}[c]{@{}c@{}}{{23.69}\hspace{-0.20cm}}\end{tabular}
\\
&  \multirow{1}{*}{{Llama+ICL}}                                
& \begin{tabular}[c]{@{}c@{}}{{44.82}\hspace{-0.20cm}}\end{tabular}
& \begin{tabular}[c]{@{}c@{}}{{54.13}\hspace{-0.20cm}}\end{tabular}
& \begin{tabular}[c]{@{}c@{}}{{19.02}\hspace{-0.20cm}}\end{tabular}
& \begin{tabular}[c]{@{}c@{}}{{26.77}\hspace{-0.20cm}}\end{tabular}
\\ 
&  \multirow{1}{*}{{{\algname{}}}}            
& \begin{tabular}[c]{@{}c@{}}{\textbf{50.05}\hspace{-0.20cm}}\end{tabular}
& \begin{tabular}[c]{@{}c@{}}{\textbf{59.75}\hspace{-0.20cm}}\end{tabular}
& \begin{tabular}[c]{@{}c@{}}{\textbf{20.06}\hspace{-0.20cm}}\end{tabular}
& \begin{tabular}[c]{@{}c@{}}{\textbf{28.64}\hspace{-0.20cm}}\end{tabular}
\\ \addlinespace[0.3ex]\cdashline{1-7}\addlinespace[0.8ex] 
\multirow{3}{*}{\vspace{0.4cm}\rotatebox{90}{\small TopiOCQA}} 
& \multirow{1}{*}{{Llama}}              
& \begin{tabular}[c]{@{}c@{}}{{35.95}\hspace{-0.20cm}}\end{tabular}
& \begin{tabular}[c]{@{}c@{}}{{42.38}\hspace{-0.20cm}}\end{tabular}
& \begin{tabular}[c]{@{}c@{}}{{25.80}\hspace{-0.20cm}}\end{tabular}
& \begin{tabular}[c]{@{}c@{}}{{37.19}\hspace{-0.20cm}}\end{tabular}
\\
&  \multirow{1}{*}{{Llama+ICL}}                                
& \begin{tabular}[c]{@{}c@{}}{{50.53}\hspace{-0.20cm}}\end{tabular}
& \begin{tabular}[c]{@{}c@{}}{{59.31}\hspace{-0.20cm}}\end{tabular}
& \begin{tabular}[c]{@{}c@{}}{{27.92}\hspace{-0.20cm}}\end{tabular}
& \begin{tabular}[c]{@{}c@{}}{{37.79}\hspace{-0.20cm}}\end{tabular}
\\ 
&  \multirow{1}{*}{{{\algname{}}}}            
& \begin{tabular}[c]{@{}c@{}}{\textbf{56.50}\hspace{-0.20cm}}\end{tabular}
& \begin{tabular}[c]{@{}c@{}}{\textbf{66.79}\hspace{-0.20cm}}\end{tabular}
& \begin{tabular}[c]{@{}c@{}}{\textbf{28.39}\hspace{-0.20cm}}\end{tabular}
& \begin{tabular}[c]{@{}c@{}}{\textbf{39.57}\hspace{-0.20cm}}\end{tabular}
\\
\arrayrulecolor{black}\specialrule{1.2pt}{1pt}{1.0pt}
\end{tabular} 
}
\vspace*{-1.1cm}
\label{tbl:effct_cqr}
\end{wraptable}
\subsection{Analysis of Pseudo Reference Generation}


\subsubsection{Effect of Response Refinement through\\CQR's Dual Role}
Table~\ref{tbl:effct_cqr} builds upon the analysis in Figure~\ref{fig:sum-result}, comparing \algname{} and its single-role variants. 
Overall, these variants underperform compared to \algname{}, with Llama+ICL exhibiting declines of 11.62\% in pseudo reference accuracy and 5.07\% in retrieval accuracy. This result indicates \algname{}'s capability to accurately refine responses for pseudo reference generation by leveraging the alignment between response refinement and retrieval objective. More results are provided in Appendix~\ref{sec:app_refine_methods}.

\subsubsection{Effect of Iterative Optimization}
\begin{wraptable}{r}{0.34\textwidth}
\vspace{-5.4em} 
\caption{Effect of \emph{iterative} optimization within \algname{}.} 
\def\arraystretch{1.3}
\centering
\vspace*{-0.4cm}
\resizebox{0.99\linewidth}{!}{%
\begin{tabular}[c]
{@{}cc|cc|ccc@{}}
\arrayrulecolor{black}\specialrule{1.2pt}{0.75pt}{2.5pt}
\multirow{2}{*}{\hspace{-0.cm}{ Data }\hspace{-0.cm}}  & \multirow{2}{*}{\hspace{-0.2cm}\makecell[c]{\textbf{Pseudo Ref.}\\\textbf{Updates}}\hspace{-0.1cm}} 
& \multicolumn{2}{c|}{{\textbf{Pseudo Acc.}}} 
& \multicolumn{2}{c}{{\textbf{Retrieval Acc.}}} 
\\ 
& & \multicolumn{1}{c}{{MRR}\hspace{-0.20cm}}
& \multicolumn{1}{c|}{{R@5}\hspace{-0.20cm}}
& \multicolumn{1}{c}{{MRR}\hspace{-0.20cm}}
& \multicolumn{1}{c}{{R@5}\hspace{-0.20cm}}
\\
\arrayrulecolor{black}\specialrule{1pt}{1.5pt}{1pt}
\arrayrulecolor{black}\specialrule{1pt}{1pt}{3.0pt}
\multirow{4}{*}{\vspace{0.4cm}\rotatebox{90}{\small SciConvQA}} 
& \multirow{1}{*}{1}                                
& \begin{tabular}[c]{@{}c@{}}{{38.28}\hspace{-0.20cm}}\end{tabular}
& \begin{tabular}[c]{@{}c@{}}{{47.73}\hspace{-0.20cm}}\end{tabular}
& \begin{tabular}[c]{@{}c@{}}{{17.55}\hspace{-0.20cm}}\end{tabular}
& \begin{tabular}[c]{@{}c@{}}{{26.05}\hspace{-0.20cm}}\end{tabular}
\\
&  \multirow{1}{*}{2}                                
& \begin{tabular}[c]{@{}c@{}}{{46.21}\hspace{-0.20cm}}\end{tabular}
& \begin{tabular}[c]{@{}c@{}}{{54.96}\hspace{-0.20cm}}\end{tabular}
& \begin{tabular}[c]{@{}c@{}}{{19.53}\hspace{-0.20cm}}\end{tabular}
& \begin{tabular}[c]{@{}c@{}}{{26.92}\hspace{-0.20cm}}\end{tabular}
\\ 
&  \multirow{1}{*}{{3}}            
& \begin{tabular}[c]{@{}c@{}}{\textbf{50.05}\hspace{-0.20cm}}\end{tabular}
& \begin{tabular}[c]{@{}c@{}}{\textbf{59.75}\hspace{-0.20cm}}\end{tabular}
& \begin{tabular}[c]{@{}c@{}}{\textbf{20.06}\hspace{-0.20cm}}\end{tabular}
& \begin{tabular}[c]{@{}c@{}}{\textbf{28.64}\hspace{-0.20cm}}\end{tabular}
\\ \addlinespace[0.3ex]\cdashline{1-7}\addlinespace[0.8ex] 
\multirow{4}{*}{\vspace{0.4cm}\rotatebox{90}{\small TopiOCQA}} 
& \multirow{1}{*}{1}                                
& \begin{tabular}[c]{@{}c@{}}{{39.33}\hspace{-0.20cm}}\end{tabular}
& \begin{tabular}[c]{@{}c@{}}{{44.14}\hspace{-0.20cm}}\end{tabular}
& \begin{tabular}[c]{@{}c@{}}{{25.87}\hspace{-0.20cm}}\end{tabular}
& \begin{tabular}[c]{@{}c@{}}{{37.43}\hspace{-0.20cm}}\end{tabular}
\\
&  \multirow{1}{*}{2}                                
& \begin{tabular}[c]{@{}c@{}}{{55.24}\hspace{-0.20cm}}\end{tabular}
& \begin{tabular}[c]{@{}c@{}}{{65.32}\hspace{-0.20cm}}\end{tabular}
& \begin{tabular}[c]{@{}c@{}}{{28.08}\hspace{-0.20cm}}\end{tabular}
& \begin{tabular}[c]{@{}c@{}}{{39.25}\hspace{-0.20cm}}\end{tabular}
\\ 
&  \multirow{1}{*}{3}            
& \begin{tabular}[c]{@{}c@{}}{\textbf{56.50}\hspace{-0.20cm}}\end{tabular}
& \begin{tabular}[c]{@{}c@{}}{\textbf{66.79}\hspace{-0.20cm}}\end{tabular}
& \begin{tabular}[c]{@{}c@{}}{\textbf{28.39}\hspace{-0.20cm}}\end{tabular}
& \begin{tabular}[c]{@{}c@{}}{\textbf{39.57}\hspace{-0.20cm}}\end{tabular}
\\
\arrayrulecolor{black}\specialrule{1.2pt}{1pt}{1.0pt}
\end{tabular} }
\vspace*{-0.75cm}

\label{tbl:effct_stab}
\end{wraptable}
Table~\ref{tbl:effct_stab} presents the effect of the iterative procedure in Figure~\ref{fig:our-flow}. 
In general, iteratively alternating between pseudo reference generation and model optimization progressively improves pseudo reference accuracy and retrieval performance, with convergence observed at the third update. This result indicates the importance of the iterative procedure in exploiting the synergy between pseudo reference passage quality and model optimization.

\subsubsection{Effect of Query-Forming Template}


Table~\ref{tbl:ablation_template} compares the effect of the query-forming template with its variants. \textit{Variant~1} omits the query-forming process with Prompt~\ref{prompt_temp}, directly using raw responses in Eq.~(\ref{eq:response-refinement}). \textit{Variant~2} excludes the response ``\((\{a_t\})\)'' in Prompt~\ref{prompt_temp}. \textit{Variant~3} and \textit{Variant~4} deactivate the effects of the phrases ``state the main points'' and ``last response \((\{a_t\})\)'', respectively, from the refined response generated by the complete version of Prompt \ref{prompt_temp}.

\vspace*{-0.1cm}
\begin{wraptable}{r}{0.47\textwidth}
\vspace{-1.3em} 
\caption{Effect of the query-transforming template on pseudo reference accuracy.} 
\def\arraystretch{1.1}
\small
\centering
\vspace*{-0.25cm}
\resizebox{0.9999\linewidth}{!}{%
\begin{tabular}[c]
{@{}l|cc|cc|cc@{}}
\arrayrulecolor{black}\specialrule{1.2pt}{0.75pt}{2.5pt}
\multirow{2}{*}{\hspace{-0.0cm}{{Variants}}} 
& \multicolumn{2}{c|}{{ \textbf{SciConvQA}}} 
& \multicolumn{2}{c|}{{ \textbf{TopiOCQA}}} 
& \multirow{2}{*}{\hspace{-0.0cm}{\textit{Degrade}}} 
\\ 
& \multicolumn{1}{c}{{MRR}}
& \multicolumn{1}{c|}{{R@5}}
& \multicolumn{1}{c}{{MRR}}
& \multicolumn{1}{c|}{{R@5}}
\\
\arrayrulecolor{black}\specialrule{1pt}{1.5pt}{1pt}
\arrayrulecolor{black}\specialrule{1pt}{1pt}{3.0pt}
\multirow{1}{*}{{1. w/o Prompt 1\!\!\!}}                                
& \begin{tabular}[c]{@{}c@{}}{{45.38}}\end{tabular}
& \begin{tabular}[c]{@{}c@{}}{{54.35}}\end{tabular}
& \begin{tabular}[c]{@{}c@{}}{{54.57}}\end{tabular}
& \begin{tabular}[c]{@{}c@{}}{{64.16}}\end{tabular}
& \begin{tabular}[c]{@{}c@{}}{\textit{6.97}\%}\end{tabular}
\\
\multirow{1}{*}{{2. w/o ``$(\{a_t\})$'' in Prompt 1\!\!\!}}                             
& \begin{tabular}[c]{@{}c@{}}{{47.29}}\end{tabular}
& \begin{tabular}[c]{@{}c@{}}{{57.35}}\end{tabular}
& \begin{tabular}[c]{@{}c@{}}{{50.00}}\end{tabular}
& \begin{tabular}[c]{@{}c@{}}{{60.26}}\end{tabular}
& \begin{tabular}[c]{@{}c@{}}{\textit{8.46}\%}\end{tabular}
\\
\multirow{1}{*}{{3. w/o ``state the main points''\!\!\!}}                             
& \begin{tabular}[c]{@{}c@{}}{{43.62}}\end{tabular}
& \begin{tabular}[c]{@{}c@{}}{{52.10}}\end{tabular}
& \begin{tabular}[c]{@{}c@{}}{{54.29}}\end{tabular}
& \begin{tabular}[c]{@{}c@{}}{{63.42}}\end{tabular}
& \begin{tabular}[c]{@{}c@{}}{\textit{9.70}\%}\end{tabular}
\\
\multirow{1}{*}{{4. w/o ``last response $\{a_t\}$''\!\!\!}}                                
& \begin{tabular}[c]{@{}c@{}}{{46.64}}\end{tabular}
& \begin{tabular}[c]{@{}c@{}}{{55.69}}\end{tabular}
& \begin{tabular}[c]{@{}c@{}}{{47.16}}\end{tabular}
& \begin{tabular}[c]{@{}c@{}}{{56.41}}\end{tabular}
& \begin{tabular}[c]{@{}c@{}}{\textit{13.20}\%}\end{tabular}
\\
\midrule
\multirow{1}{*}{{\textbf{\algname{}}}}            
& \begin{tabular}[c]{@{}c@{}}{\textbf{50.05}}\end{tabular}
& \begin{tabular}[c]{@{}c@{}}{\textbf{59.75}}\end{tabular}
& \begin{tabular}[c]{@{}c@{}}{\textbf{56.50}}\end{tabular}
& \begin{tabular}[c]{@{}c@{}}{\textbf{66.79}}\end{tabular}
& \begin{tabular}[c]{@{}c@{}}{{-}}\end{tabular}
\\
\arrayrulecolor{black}\specialrule{1.2pt}{1pt}{1.0pt}
\end{tabular} }
\vspace*{-0.2cm}

\label{tbl:ablation_template}
\vspace*{-0.2cm}
\end{wraptable}

Across all variants, performance consistently degrades compared to \algname{}. The decline in {Variant~1} shows the importance of structuring responses into query forms to exploit CQR's effective reformulation. The result for {Variant~2} highlights the significance of explicitly integrating the raw response into the template for response-relevant context extraction. Finally, the degradation in {Variant~3} and {Variant~4} indicates that the two phrases contribute complementary information crucial for effective refinement. Additional results are presented in Appendix~\ref{sec:app_template}.

\begin{figure*}[t!]
\includegraphics[width=1.00\linewidth]{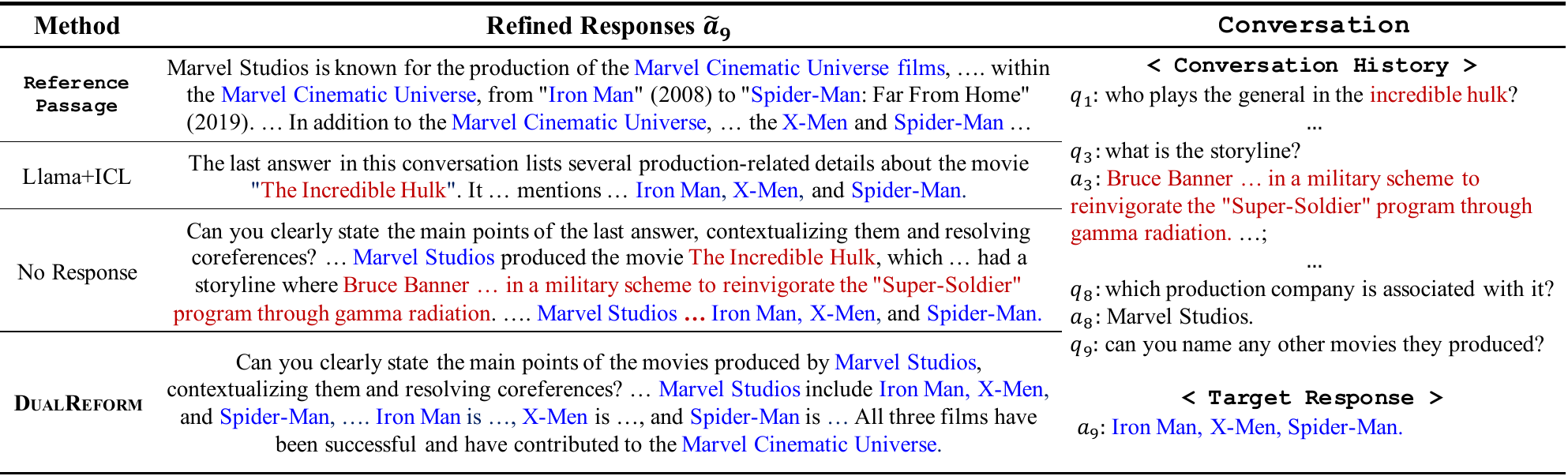}
\vspace*{-0.6cm}
\caption{ 
Examples of refined responses generated by different methods on TopiOCQA. Fragments strongly aligned with the reference passage are highlighted in \textcolor{blue}{blue}, while fragments with weaker connections~(e.g., off-topic elements referring to previous conversation topics) are marked in \textcolor{red}{red}.
}
\vspace*{-0.5cm}
\label{fig:refine_exs}
\end{figure*}

\subsubsection{Qualitative Analysis}

Figure~\ref{fig:refine_exs} presents refined responses generated by \algname{} and its variants for a conversation from TopiOCQA. Compared to other variants, \algname{} demonstrates superior contextual understanding by extracting relevant context\,(e.g., ``Marvel Studios'') and adding details regarding the response\,(e.g., ``Marvel Cinematic Universe''). In contrast, Llama+ICL and No Response often rely on the less relevant context\,(e.g., ``Hulk'').
Additional results are provided in Appendix~\ref{sec:app_examples}.


\subsection{Response Generation Accuracy}
\begin{wraptable}{r}{0.45\textwidth}
\vspace{-2.25em} 
\caption{Response generation accuracy with passages retrieved by different CQR methods.} 
\def\arraystretch{0.8}
\small
\centering
\vspace*{-0.2cm}
\resizebox{0.97\linewidth}{!}{%
\begin{tabular}[c]
{@{}c|ccccc@{}}
\arrayrulecolor{black}\specialrule{1.2pt}{0.75pt}{2.5pt}
\multirow{2}{*}{\hspace{-0.0cm}{{CQR Methods}}} 
& \multicolumn{4}{c}{{ \textbf{Generation Accuracy}}} 
\\ 
& \multicolumn{1}{c}{{LLMEval}}
& \multicolumn{1}{c}{{ROUGE-1}}
& \multicolumn{1}{c}{{ROUGE-L}}
& \multicolumn{1}{c}{{BertScore}}
\\
\arrayrulecolor{black}\specialrule{1pt}{1.5pt}{1pt}
\arrayrulecolor{black}\specialrule{1pt}{1pt}{3.0pt}
\multirow{1}{*}{{LLM-IQR}}                                
& \begin{tabular}[c]{@{}c@{}}{ {27.76} }\end{tabular}
& \begin{tabular}[c]{@{}c@{}}{ {20.07} }\end{tabular}
& \begin{tabular}[c]{@{}c@{}}{ {17.64} }\end{tabular}
& \begin{tabular}[c]{@{}c@{}}{ {86.01} }\end{tabular}
\\
\multirow{1}{*}{{HyDE-LLM}}                                
& \begin{tabular}[c]{@{}c@{}}{ {29.77} }\end{tabular}
& \begin{tabular}[c]{@{}c@{}}{ {22.18} }\end{tabular}
& \begin{tabular}[c]{@{}c@{}}{ {19.39} }\end{tabular}
& \begin{tabular}[c]{@{}c@{}}{ {86.25} }\end{tabular}
\\
\multirow{1}{*}{{LLM4CS-CoT}}                                
& \begin{tabular}[c]{@{}c@{}}{ {26.42} }\end{tabular}
& \begin{tabular}[c]{@{}c@{}}{ {20.23} }\end{tabular}
& \begin{tabular}[c]{@{}c@{}}{ {17.58} }\end{tabular}
& \begin{tabular}[c]{@{}c@{}}{ {85.95} }\end{tabular}
\\
\multirow{1}{*}{{T5QR}}                                
& \begin{tabular}[c]{@{}c@{}}{ {28.43} }\end{tabular}
& \begin{tabular}[c]{@{}c@{}}{ {19.96} }\end{tabular}
& \begin{tabular}[c]{@{}c@{}}{ {17.76} }\end{tabular}
& \begin{tabular}[c]{@{}c@{}}{ {85.74} }\end{tabular}
\\
\multirow{1}{*}{{ConvGQR}}                                
& \begin{tabular}[c]{@{}c@{}}{ {23.08} }\end{tabular}
& \begin{tabular}[c]{@{}c@{}}{ {20.81} }\end{tabular}
& \begin{tabular}[c]{@{}c@{}}{ {18.37} }\end{tabular}
& \begin{tabular}[c]{@{}c@{}}{ {85.95} }\end{tabular}
\\
\multirow{1}{*}{{HyDE-FT}}                                
& \begin{tabular}[c]{@{}c@{}}{ {23.17} }\end{tabular}
& \begin{tabular}[c]{@{}c@{}}{ {19.81} }\end{tabular}
& \begin{tabular}[c]{@{}c@{}}{ {17.40} }\end{tabular}
& \begin{tabular}[c]{@{}c@{}}{ {86.02} }\end{tabular}
\\
\multirow{1}{*}{{RetPo}}                                
& \begin{tabular}[c]{@{}c@{}}{ {27.09} }\end{tabular}
& \begin{tabular}[c]{@{}c@{}}{ {21.63} }\end{tabular}
& \begin{tabular}[c]{@{}c@{}}{ {18.59} }\end{tabular}
& \begin{tabular}[c]{@{}c@{}}{ {86.17} }\end{tabular}
\\
\midrule
\multirow{1}{*}{{\textbf{\algname{}}}}            
& \begin{tabular}[c]{@{}c@{}}{ \textbf{34.45} }\end{tabular}
& \begin{tabular}[c]{@{}c@{}}{ \textbf{24.37} }\end{tabular}
& \begin{tabular}[c]{@{}c@{}}{ \textbf{21.41} }\end{tabular}
& \begin{tabular}[c]{@{}c@{}}{ \textbf{86.33} }\end{tabular}
\\
\arrayrulecolor{black}\specialrule{1.2pt}{1pt}{1.0pt}
\end{tabular} }
\vspace*{-0.2cm}
\label{tbl:gen_acc}
\vspace*{-0.2cm}
\end{wraptable}
Table~\ref{tbl:gen_acc} reports the generation accuracy using passages retrieved by different CQR baselines on SciConvQA, with Llama-3.1-8b-instruct as the generator and BM25 as the retriever. \algname{} achieves superior accuracy compared to baselines, demonstrating the consequence of its enhanced retrieval performance for downstream generation. Additional results on TopiOCQA are provided in Appendix~\ref{sec:app_gen}. 





\section{Conclusion}
\label{sec:conclusion}

We propose \algname{}, a novel reference-free preference optimization framework for CQR, which eliminates the reliance on reference passages. Fully taking advantage of the \textit{dual-role} of CQR, \algname{} generates accurate \textit{pseudo} reference passages to guide preference optimization. Empirical results demonstrate the broad applicability of \algname{} across diverse conversational domains, without reliance on reference passages. Overall, we believe that our work sheds light on the importance of practical CQR approaches for diverse real-world conversational scenarios.

\section*{Limitations}
One limitation of the proposed \algname{} framework is its reliance on a fixed set of candidate reformulated queries generated by ChatGPT to derive preference feedback during training. Incorporating candidate queries produced by the trained CQR model itself could introduce greater diversity of candidate queries and enhance the quality of preference feedback. Investigating the performance gains from this augmentation-based strategy is left for future work.

Additionally, integrating more advanced preference optimization techniques presents a potential important direction for improvement. In the context of CQR, retrieval effectiveness may not be the sole determinant of preferences, and multi-dimensional feedback, e.g., conciseness of queries, can play a critical role. While DPO is widely adopted for preference optimization, it is limited in handling multi-dimensional feedback due to its reliance on a single-dimensional preference structure. Recent work, such as CPO\,\cite{cpo} and Sequential Alignment\,\cite{spo}, offers promising alternatives that may better address this limitation. 
Future work will investigate their potential with CQR preference learning.

\section*{Ethics Statement}
This work primarily aims at generating pseudo reference passages directly from data itself, without relying on human annotators, posing no ethical concerns during training. In creating the SciConvQA benchmark, we adhere to a common LLM-based conversation generation protocol detailed in the prior literature\,\cite{topiocqa}. Therefore, we do not anticipate any ethical violations or neagtive societal consequences resulting from this work.

\bibliography{iclr2026_conference}
\bibliographystyle{iclr2026_conference}



\medskip





\clearpage

\appendix

\begin{center}
\Large \textsc{References Indeed Matter?\\Reference-Free Preference Optimization for\\Conversational Query Reformulation} \\
\vspace{0.1cm}
\end{center}

\section{Proofs of Theoretical Results}
\label{sec:app-proof}

\subsection{Proofs of Lemmas~\ref{lemma:err-single} and~\ref{lemma:err-dual}}
\label{subsec:lemma}
The formal statement of Assumption~\ref{assume:expansion} is given in Assumptions~\ref{assume:app_expansion} and~\ref{assume:app_separation}. Let $\mathcal{B}(x)$ denote the set of all possible augmentations of an input $x$ produced by a data augmentation function $\mathcal{A}(\cdot)$.

\begin{assumption}[\sc Expansion]
An example can reach c neighboring instances on average. Formally, $\mathbb{E}_{x} [ |\mathcal{N}(x)| ]=c$ with $c > 3$, where $\mathcal{N}(x) = \{x^\prime: \mathcal{B}(x) \cap \mathcal{B}(x^\prime) \neq \emptyset\}$. 
\label{assume:app_expansion}
\end{assumption}

\begin{assumption}[\sc Separation]
The average proportion of neighboring instances belonging to a different reference passage is $\mu$, which is negligibly small. Formally, $\mathbb{E}_{x} [ \mathbbm{1}_{[\exists  x^\prime \in \mathcal{B}(x) \text{   such that   } G^*(x) \neq G^*(x^\prime)] } ] = \mu$, where $G^*(x)$ indicates the ground-truth reference passage for $x$. 
\label{assume:app_separation}
\end{assumption}

\noindent\textbf{The Proof of Lemma~\ref{lemma:err-single}.} 
Under these assumptions, the pseudo labeling theory\,\citep{wei2021theoretical} provides pseudo label correction guarantees, improving training accuracy by rectifying erroneous pseudo labels, as presented in Lemma~\ref{lemma:err-pl}.

\begin{lemma}[\sc {Pseudo Label Denoising Bound}\,\citep{wei2021theoretical}]
\label{lemma:err-pl}
Suppose that Assumptions~\ref{assume:app_expansion} and~\ref{assume:app_separation} hold. Then, the training error of any minimizer $\hat{\theta}$ under the pseudo labeler $\theta_{PL}$ is bounded by
\begin{equation}
\begin{gathered}
{Err}(\hat{\theta}) \le {{2}\over{c-1}} {Err}(\theta_{PL}) + {{2c}\over{c-1}} \mu.
\end{gathered}
\label{eq:pl_bound}
\end{equation}
\end{lemma}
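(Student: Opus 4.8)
The plan is to establish Lemma~\ref{lemma:err-pl} via the expansion-based consistency argument of Wei et al.~\citep{wei2021theoretical}, and then to recover Lemma~\ref{lemma:err-single} as an immediate instantiation. The central object is the minimizer $\hat\theta$, which is trained to agree with the pseudo-labeler $\theta_{PL}$ while being \emph{robust} on augmentation neighborhoods, i.e., to satisfy $\hat\theta(x') = \hat\theta(x)$ for all $x' \in \mathcal{B}(x)$ on all but a negligible fraction of inputs. First I would formalize this by writing the population objective as pseudo-label disagreement plus a consistency regularizer, so that the minimizer has (near-)zero robustness loss $R(\hat\theta) := P_x\big(\exists\, x' \in \mathcal{B}(x):\hat\theta(x') \neq \hat\theta(x)\big)$.

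Next I would decompose the error set $E = \{x : \hat\theta(x) \neq G^*(x)\}$ against the pseudo-labeler's error set $E_{PL} = \{x : \theta_{PL}(x) \neq G^*(x)\}$. The part $E \cap E_{PL}$ is immediately controlled by ${Err}(\theta_{PL}) = P(E_{PL})$. The difficulty lies in the \emph{newly introduced} errors $E \setminus E_{PL}$, on which $\hat\theta$ disagrees with the (correct) pseudo-label; these cannot be charged to ${Err}(\theta_{PL})$ directly and must be bounded through expansion.

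The heart of the argument, and the main obstacle, is this expansion/counting step. I would apply Assumption~\ref{assume:app_expansion} to the minority portion of $E$: since $\hat\theta$ is robust, it is essentially constant on each augmentation-connected neighborhood, so if $E$ were large, $\mathbb{E}_x[|\mathcal{N}(x)|] = c$ would force $\mathcal{N}(E)$ to have measure of order $c\,P(E)$. By Assumption~\ref{assume:app_separation}, all but a $\mu$-fraction of these neighbors share the ground-truth reference passage of their source point, so robustness propagates $\hat\theta$'s \emph{incorrect} prediction across a region whose measure is of order $c\,P(E)$, minus an $O(\mu)$ separation term. Because $\hat\theta$ matches the pseudo-labels on this expanded region except on $R(\hat\theta)$ and $E_{PL}$, one obtains a self-referential inequality $P(E) \le \tfrac{1}{c-1}\big(2\,{Err}(\theta_{PL}) + 2c\,\mu\big)$ after collecting the majority/minority cases and sending $R(\hat\theta) \to 0$. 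The bookkeeping that yields exactly the constants $\tfrac{2}{c-1}$ and $\tfrac{2c}{c-1}$ — in particular, handling the regime where $E$ is not a strict minority and tracking how $\mu$ enters each inequality — is the delicate part and is precisely the content of the expansion theorem of Wei et al.~\citep{wei2021theoretical}.

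Once Lemma~\ref{lemma:err-pl} is in place, Lemma~\ref{lemma:err-single} follows by setting the pseudo-labeler to the LLM, i.e., $\theta_{PL} = \theta_{LLM}$ and $\hat\theta = \theta_{single}$, which substitutes directly into Eq.~(\ref{eq:pl_bound}) to give Eq.~(\ref{eq:error_bound_single}). The dual-role bound of Lemma~\ref{lemma:err-dual} then follows by applying the denoising inequality a \emph{second} time: treating the refined-response model as an intermediate pseudo-labeler and feeding it into Eq.~(\ref{eq:pl_bound}) again composes the leading factor into $\big(\tfrac{2}{c-1}\big)^2{Err}(\theta_{LLM})$, while the separation terms accumulate as $\tfrac{2c}{c-1}\big(\tfrac{2}{c-1}+1\big)\mu = \big(\tfrac{2c}{c-1}\big)\big(\tfrac{c+1}{c-1}\big)\mu$, exactly reproducing Eq.~(\ref{eq:error_bound_dual}).
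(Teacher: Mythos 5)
Your proposal is correct and takes essentially the same route as the paper: the core bound of Lemma~\ref{lemma:err-pl} is deferred to the expansion-based theorem of \citep{wei2021theoretical} (the paper likewise points to Appendix~A.1 of that work rather than re-deriving the constants), and your instantiation $\hat\theta=\theta_{single}$, $\theta_{PL}=\theta_{LLM}$ together with the two-fold composition yielding $\left(\frac{2}{c-1}\right)^2 {Err}(\theta_{LLM}) + \left(\frac{2c}{c-1}\right)\left(\frac{c+1}{c-1}\right)\mu$ reproduces the paper's proofs of Lemmas~\ref{lemma:err-single} and~\ref{lemma:err-dual} exactly. The only (cosmetic) difference is that you sketch the internal error-set decomposition and expansion-counting of Wei et al.'s argument, whereas the paper instead devotes its space to justifying the theorem's input-consistency-regularization premise via LLM-based paraphrase augmentation in the preference pairs---a premise you posit directly as near-zero robustness loss of the minimizer.
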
 

The full proof of this result is presented in Appendix A.1 of \citep{wei2021theoretical}. In our setting, the minimizer corresponds to the CQR model under a single-role configuration, which employs an LLM as the pseudo labeler. Assigning $\hat{\theta}=\theta_{single}$ and $\theta_{PL}=\theta_{LLM}$ in Eq.~(\ref{eq:pl_bound}) gives ${Err}(\theta_{single}) \le {{2}\over{c-1}} {Err}(\theta_{LLM}) + {{2c}\over{c-1}} \mu$, which matches the single-role configuration bound in Lemma~\ref{lemma:err-single}.

Additionally, following \citep{wei2021theoretical}, the general preference optimization framework for the CQR model employs \textit{input consistency regularization} in building preference pairs, ensuring consistent preferences across input transformations. Aligned with standard data augmentation practices in language-based tasks\,\citep{jiao2025preference, retpo}, this framework employs {LLM-based augmentation} to produce various paraphrases of query reformulations by prompting an LLM multiple times, as shown in Eq.~(\ref{eq:pref}). To enforce a substantial gap between winning and losing examples in preference pairs, high-effectiveness reformulations\,(winners) and low-effectiveness reformulations\,(losers) are sampled, following \citep{song-etal-2023-enhancing, yuan2024selfrewarding, xu2023some}. Each preference pair thus includes differently rephrased queries while preserving consistent preference ordering over these rephrased ones. Consequently, through preference optimization in Eq.~(\ref{eq:dpo}), the CQR model learns to maintain consistent preference rankings across these input transformations, thereby demonstrating the principle of input consistency regularization.

\smallskip
\noindent\textbf{Proof of Lemma~\ref{lemma:err-dual}.} 
Recall that the dual-role configuration employs the CQR model, trained under the single-role configuration, as its pseudo labeler, while the single-role configuration uses the backbone LLM model as its pseudo labeler. By leveraging this cascaded structure, we establish the error bound of the dual-role configuration in terms of the backbone LLM's error bound. 

First, the error bound for the dual-role configuration relative to the single-role CQR model is given by
 \begin{equation}
    {Err}(\theta_{dual}) \le {{2}\over{c-1}} {Err}(\theta_{single}) + {{2 c}\over{c-1}}\mu.
    \label{eq:dual_single}
\end{equation}
Next, applying the error bound of the single-role configuration in Eq.~(\ref{eq:error_bound_single}) to $Err(\theta_{single})$ in the right-hand side of Eq.~(\ref{eq:dual_single}), we obtain
\begin{equation}
\begin{split}
{Err}(\theta_{dual}) &\le {{2}\over{c-1}} [ {{2}\over{c-1}} {Err}(\theta_{LLM}) + {{2 c}\over{c-1}}\mu ] + {{2 c}\over{c-1}} \mu \\
&\le \left({{2}\over{c-1}}\right)^2 {Err}(\theta_{LLM}) + \left({{2 c}\over{c-1}}\right)\left({{c+1}\over{c-1}}\right) \mu,
\end{split}
\label{eq:error_bound_rearrange}
\end{equation}
which completes the proof of Lemma~\ref{lemma:err-dual}.

\subsection{Proof of Theorem~\ref{thm:comparison}}
\label{subsec:thm}
Let $\overline{\mathrm{Err}}(\theta)$ denote the theoretical upper bound on the training error for a configuration $\theta$. 
Under Assumption~\ref{assume:expansion}, where $\mu \approx 0$, we obtain approximate upper bounds for the respective errors. From Lemma~\ref{lemma:err-single}\,(Single-Role Bound), Eq.~(\ref{eq:error_bound_single}) becomes
\begin{equation}
    \overline{Err}(\theta_{{single}}) \approx 
    \frac{2}{c-1} \, {Err}(\theta_{{LLM}}).
\end{equation}
From Lemma~\ref{lemma:err-dual}\,(Dual-Role Bound), Eq.~(\ref{eq:error_bound_dual}) becomes
\begin{equation}
    \overline{Err}(\theta_{{dual}}) \approx 
    \left(\frac{2}{c-1}\right)^2 {Err}(\theta_{{LLM}}).
\end{equation}
Here, ``$\approx$'' indicates approximate upper bounds for the respective error terms. Comparing the two approximate upper bounds, we obtain 
\begin{equation}
    \left(\frac{2}{c - 1}\right)^2 \overline{Err}(\theta_{{LLM}}) 
    < \frac{2}{c - 1} \, \overline{Err}(\theta_{{LLM}}) 
    \quad\text{if and only if}\quad 
    c > 3.
\end{equation}
Therefore, for \(c>3\), as stipulated in Assumption~\ref{assume:expansion}, the error bound under the dual-role configuration is strictly smaller than the error bound of the single-role configuration. This completes the proof of Theorem~\ref{thm:comparison}.
\section{Definition of Retrieval Score}
\label{sec:ret-score}

We evaluate candidate query reformulations \(\{\tilde{q}_t^i\}_{i=1}^M\) using pseudo reference passages \(\tilde{G}_t\) based on their retrieval scores. The retrieval score \(s(\tilde{q}_t^i \mid \tilde{G}_t)\) of a candidate indicates how accurately \(\tilde{q}_t^i\) retrieves the passages to contain \(\tilde{G}_t\). Specifically, our assessment focuses on three dimensions: (1) \textit{coverage}\,(Definition \ref{def:coverage}), reflecting how \textit{comprehensively} reference passages are retrieved within specific cutoffs; (2) \textit{immediacy}\,(Definition \ref{def:immediacy}), reflecting how \textit{early} a reference passage appears in the ranking; and (3) \textit{concordance}\,(Definition \ref{def:relevance}), reflecting how well the ranking \textit{aligns} with the ideal relevance ordering of reference passages. Finally, these three scores are combined in a single weighted value\,(Definition \ref{def:retrieval-score}). 

\begin{definition}{\sc (Coverage Score)}
\label{def:coverage}
The coverage score \(s_{\mathrm{cov}}(\tilde{q}_t^i \mid \tilde{G}_t)\) is computed as
\begin{equation}
\label{eq:coverage}
s_{\mathrm{cov}}(\tilde{q}_t^i \mid \tilde{G}_t) = \frac{1}{|K|} \sum_{k \in K} \mathrm{Recall@k} \big(R(\tilde{q}_t^i), \tilde{G}_t\big),
\end{equation}
where \(K\) is a predefined set of cutoff values.
\end{definition}

\begin{definition}{\sc (Immediacy Score)}
\label{def:immediacy}
The immediacy score \(s_{\mathrm{imm}}(\tilde{q}_t^i \mid \tilde{G}_t)\) is computed as
\begin{equation}
\label{eq:immediacy}
s_{\mathrm{imm}}(\tilde{q}_t^i \mid \tilde{G}_t) = \mathrm{MRR}\bigl(R(\tilde{q}_t^i), \tilde{G}_t\bigr).
\end{equation}
\end{definition}

\begin{definition}{\sc (Concordance Score)}
\label{def:relevance}
The concordance score \(s_{\mathrm{con}}(\tilde{q}_t^i \mid \tilde{G}_t)\) is computed as,
\begin{equation}
\label{eq:relevance}
s_{\mathrm{con}}(\tilde{q}_t^i \mid \tilde{G}_t) = \mathrm{NDCG}\bigl(R(\tilde{q}_t^i), \tilde{G}_t\bigr).
\end{equation}
\end{definition}


\begin{definition}{\sc (Retrieval Score)}
\label{def:retrieval-score} 
For each candidate \(\tilde{q}_t^i\), the retrieval score \(s(\tilde{q}_t^i \mid \tilde{G}_t)\) is a weighted sum of the coverage, immediacy, and concordance scores, as
\begin{equation}
\label{eq:retrieval-score}
s(\tilde{q}_t^i\mid \tilde{G}_t) = \omega_1 s_{\mathrm{cov}}(\tilde{q}_t^i\mid \tilde{G}_t) + \omega_2 s_{\mathrm{imm}}(\tilde{q}_t^i\mid \tilde{G}_t) + \omega_3 s_{\mathrm{con}}(\tilde{q}_t^i\mid \tilde{G}_t),
\end{equation}  
where \(\omega_1, \omega_2, \omega_3 \ge 0\) and \(\omega_1 + \omega_2 + \omega_3 = 1. \)
\end{definition}

Details on the retrieval evaluation metrics are provided in Appendix~\ref{sec:app_metrics}.

\begin{table*}[]
\resizebox{\textwidth}{!}{%
\begin{tabular}{@{}l@{}}
\toprule
\textbf{SciConvQA conversation} \\ \midrule
Q: How do genomic tools enhance animal breeding programs? \\
A: Genomic tools, such as single nucleotide polymorphism (SNP), have led to a new method known as “genomic selection.” \\ \quad This method utilizes dense SNP genotypes covering the entire genome to predict the breeding value. \\
Q: Which application is used in evaluating these programs? \\
A: ZPLAN+ is used to evaluate and optimize these programs. \\
Q: What parameters does it consider? \\
A: It considers genetic and economic parameters. \\
Q: Can you tell me more about the types of strategies it models? \\
A: It models four selection strategies: the current conventional program and three based on genomic information. \\
Q: What's unique about the final approach among them? \\
A: The final approach, GS3, is unique because it combines pedigree, genomic enhanced breeding values, \\ \quad performance, and progeny information. \\
Q: How were the male candidates evaluated in this scheme? \\
A: They were genotyped and their selection was based on performance tests and GEBV. \\
Q: Do we have information about the costs associated with these methods? \\
A: Yes, the cost of genotyping was assumed to be \$120 per pig. \\
Q: How does this cost compare to more traditional testing methods? \\
A: Performance testing costs \$55 per tested pig, which is less than genotyping. \\
Q: How many candidates undergo this evaluation when adventurers start the process? \\
A: Initially, 1,000 male candidates were considered in the genomic selection process. \\
Q: In the final phase of selection, how many of these males are retained? \\
A: In the final phase, 23 senior boars were retained. \\
Q: How does the conventional program handle progeny information differently than genomics? \\
A: The conventional program uses progeny records without considering genetic marker information. \\
Q: Can you summarize which traits measured the field performance? \\
A: Traits measured include average daily gain, back fat thickness, and feed conversion rate. \\
Q: Are these the same for testing the station? \\
A: No, station tests focus on meat quality traits like pH, meat color (L*), and intra-muscular fat. \\
Q: In this optimized workflow, what benefit is sought above all? \\
A: The primary goal is high genetic gains with low breeding costs. \\ \bottomrule
\end{tabular}%
}
\caption{Example of a SciConvQA conversation. The conversation is generated from \citet{lopez2016optimization}, published in a renowned journal and stored as `JAKO201614137726690.json' in the scientific journal dataset described in Section~\ref{subsec: sciconvqa}.}
\label{table:SciConvQA_full_conv}
\end{table*}

\section{Dataset Details}
\label{sec:sciconvqa}

\subsection{General-Domain: QReCC and TopiOCQA}
The QReCC dataset\,\citep{qrecc} contains 14K multi-turn conversations with a total of 80K question-answer pairs, aiming to retrieve reference passages from a large corpus of 54 million passages. Similarly, the TopiOCQA dataset\,\citep{topiocqa} includes 3.9K conversations featuring topic shifts, comprising 51K question-answer pairs. Its passage collection is derived from Wikipedia and consists of approximately 20 million passages. For both datasets, small random subsets of the training data were used to construct the validation sets. While these datasets are well-suited for general-domain conversational contexts, they lack focus on domain-specific applications such as scientific question answering.

\subsection{Specialized-Domain: SciConvQA}
\label{subsec: sciconvqa}
Information-seeking conversations span a wide range of domains, from general topics to specialized areas like science, reflecting diverse user interests. To evaluate existing CQR methods and \algname{}, we introduce the SciConvQA dataset, composed of information-seeking conversations generated from renowned scientific journals.

The conversation generation process follows the protocol described in Appendix A of the TopiOCQA\,\citep{topiocqa} paper, which provides the methodology for creating conversational datasets. While TopiOCQA relies on crowd-sourced annotations, it incurs high costs or risks of diminished quality when applied to specialized scientific domains. Hence, we employ gpt-4o-2024-08-06\,\citep{openai2023chatgpt} for automated conversation generation, followed by post-hoc manual quality validation. Overall, the conversation generation process involves two steps: (1) selecting a scientific journal as the seed topic and (2) generating questioner-answerer interactions. Table~\ref{table:SciConvQA_full_conv} shows a representative conversation from SciConvQA.

\smallskip
\noindent\textbf{Seed Topics and Document Collection.} 
SciConvQA is constructed using scientific journal data provided by the Korea Institute of Science and Technology Information\,(KISTI), a government-funded research institute. The scientific journal dataset, accessible at \url{https://aida.kisti.re.kr/data/b22c73ed-fa19-47b0-87b3-a509df8380e5}, includes a total of 481,578 academic articles, comprising both Korean and English publications. Detailed information about the dataset construction is available at the linked source. For our study, we utilize 120,916 English articles to construct the external database corpus, from which a subset is sampled to generate conversations. These articles span 749 diverse scientific fields, including biology, medicine, and architecture.

\smallskip
\noindent\textbf{Conversation Generation.} 
We modify the conversation annotation protocol of TopiOCQA to design a prompt for gpt-4o-2024-08-06, including an in-context demonstration to illustrate the conversation generation process based on a seed topic. The prompt template with its demonstration is shown in Figures~\ref{fig:prompt_SciCovQA}--\ref{fig:conversation_example}. During the conversation generation process, each article serves as a seed topic, and the reference passages for conversation turns are selected from the article. Specifically, for each conversation turn, a reference passage~(``rationale'' in the prompt) is selected as a substring of the article's content that justifies the answer, recorded directly below the corresponding answer, as the demostrastive conversation in Figure~\ref{fig:conversation_example}.

\begin{figure*}[t]
    \centering
    \includegraphics[width=\textwidth]{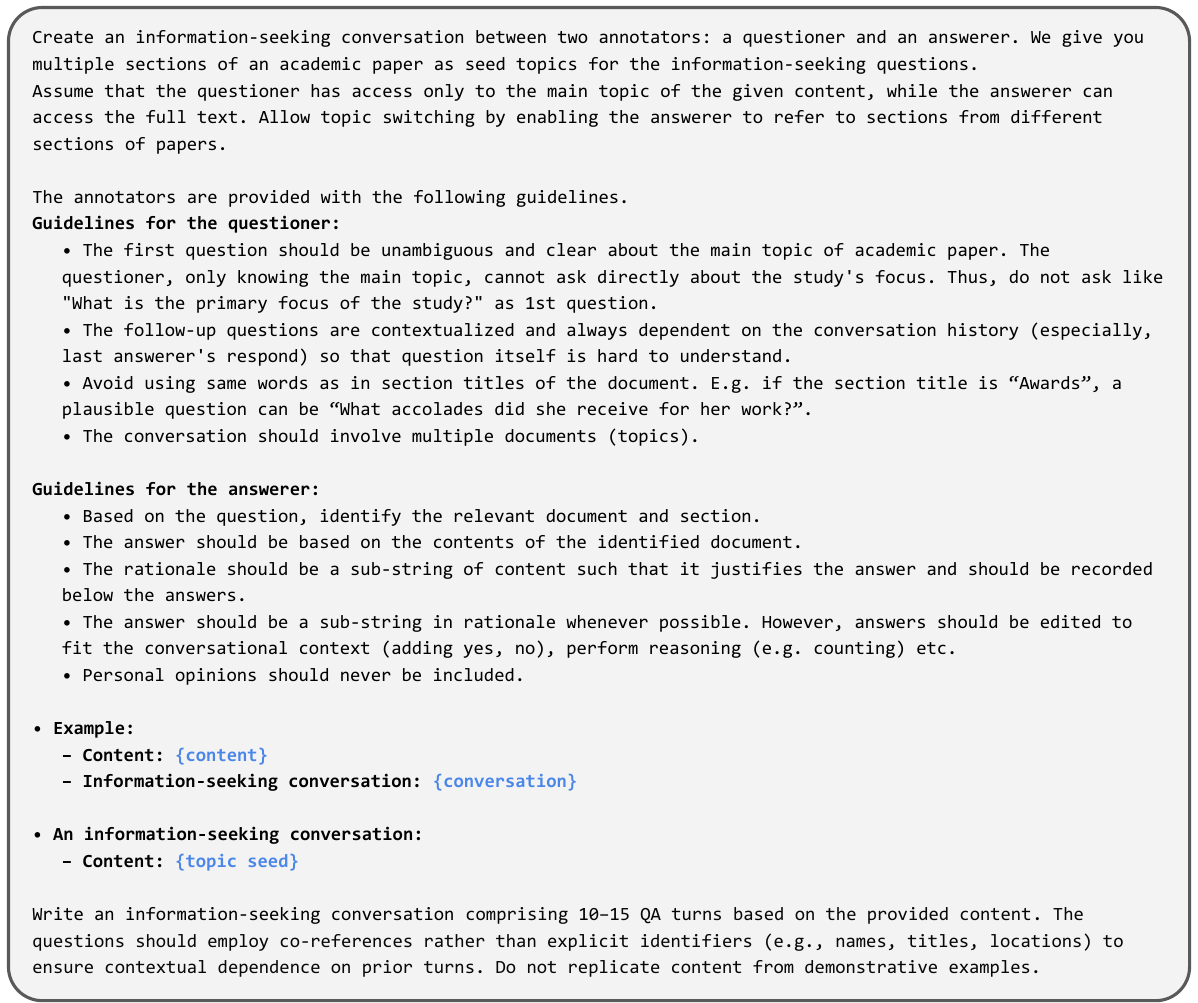}
    \vspace*{-0.5cm}
    \caption{Prompt used for SciConvQA dataset generation. The example of {\color[rgb]{0.2, 0.5, 0.8}\{content\}} is provided in Figure~\ref{fig:content_example}, and the example of {\color[rgb]{0.2, 0.5, 0.8}\{conversation\}} can be found in Figure~\ref{fig:conversation_example}.}
    \label{fig:prompt_SciCovQA}
    \vspace*{-0.3cm}
\end{figure*}

\begin{figure*}[t]
    \centering
    \includegraphics[width=\textwidth]{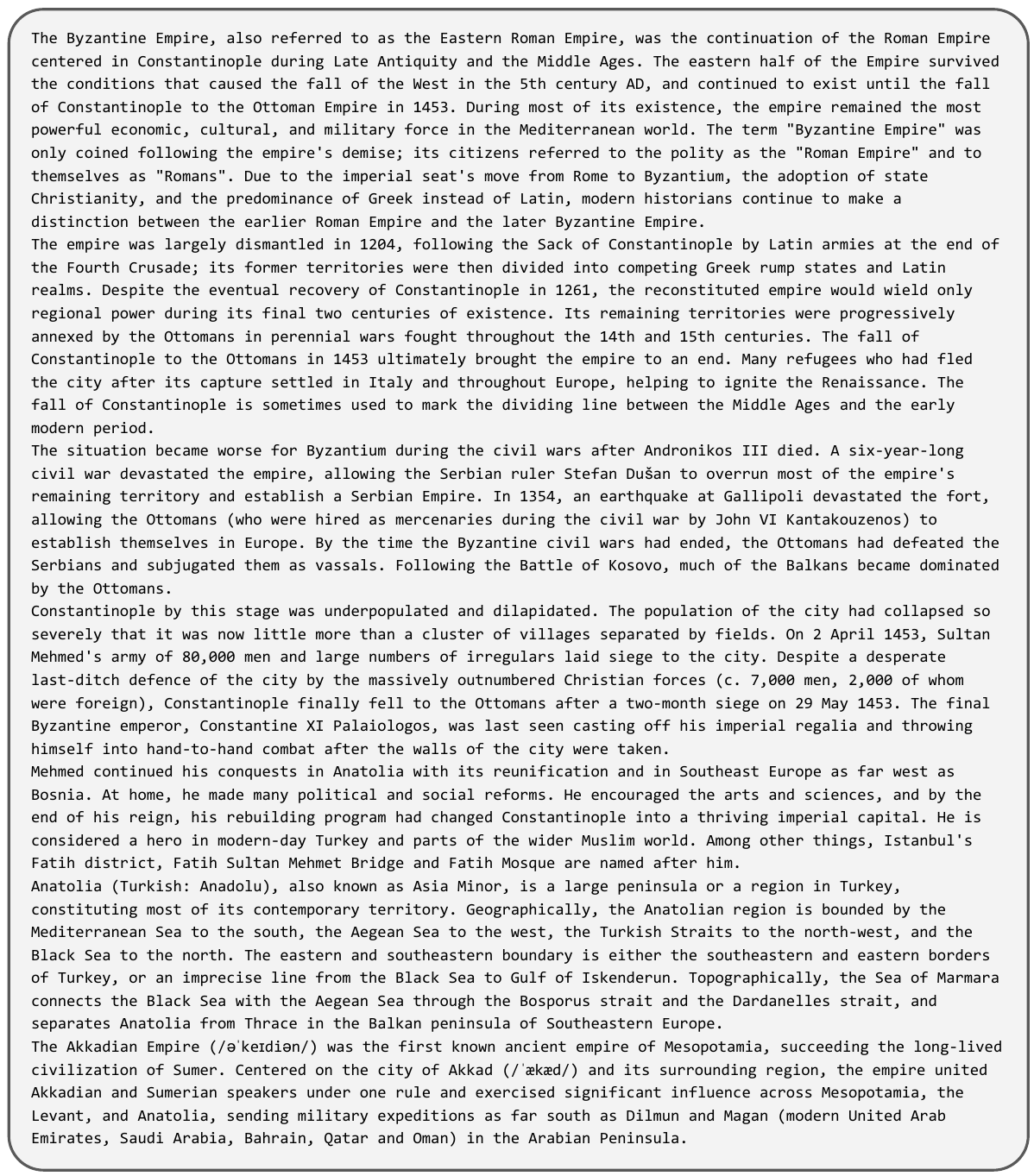}
    \vspace*{-0.5cm}
    \caption{Example of {\color[rgb]{0.2, 0.5, 0.8}\{content\}} in Figure \ref{fig:prompt_SciCovQA}.}
    \label{fig:content_example}
    \vspace*{-0.3cm}
\end{figure*}

\begin{figure*}[t]
    \centering
    \includegraphics[width=\textwidth]{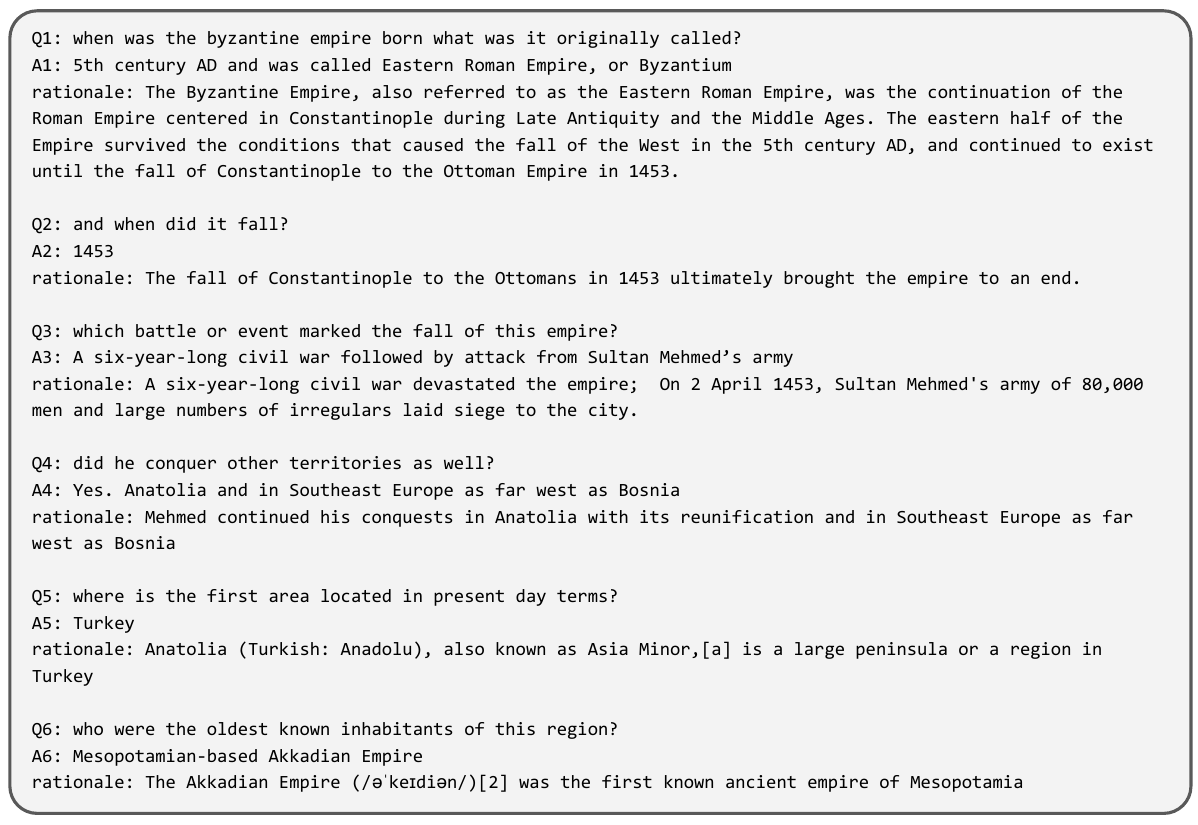}
    \vspace*{-0.6cm}
    \caption{Examples of {\color[rgb]{0.2, 0.5, 0.8}\{conversation\}} in Figure \ref{fig:prompt_SciCovQA}.}
    \label{fig:conversation_example}
\end{figure*}

\smallskip
\noindent\textbf{Passage Database Construction.} 
The passage database is constructed using 120,916 English articles as the retrieval target. Specifically, we employ Langchain's $\mathtt{RecursiveCharacterTextSplitter}$ \!\!\!\!\!\! with a chunk size of 500 and a chunk overlap of 100\,\citep{langchain}, resulting in a database consisting of 1,909,524 passages.

\smallskip
\noindent{\textbf{Post-Processing.}} 
To ensure compatibility with existing datasets~(e.g., TopiOCQA), we standardize the format of the raw conversations generated during the conversation generation process. Due to inconsistencies in the output structure of chat completions, we extract only the relevant content using a custom post-processing pipeline. Furthermore, the passage ID for the ideal reference passage corresponding to each query is assigned by identifying the longest common substring between the generated ``rationale'' and passages in the external passage database. The entire implementation of the post-processing procedure is provided in the \algname{}'s code repository.

\subsection{Exploratory Analysis for SciConvQA}
\noindent\textbf{Data Statistics.} Table~\ref{tbl:dataset_statistics} presents the statistics of the SciConvQA dataset. In summary, there are 11,953 turns across 900 conversations, with an average of 11.53 words per query, 15.59 words per response, and 13.28 turns per conversation.

\begin{table}[h!]
\centering
\resizebox{0.5\linewidth}{!}{
\begin{tabular}{l|cc|c}
\toprule
\textbf{Dataset}              & \textbf{Train} & \textbf{Test} & \textbf{Overall} \\
\midrule
\# Turns                      & 9,999               & 1,954         & 11,953           \\
\# Conversations              & 750                & 150           & 900            \\
\# Words / Query          & 11.51                & 11.62          & 11.53             \\
\# Words / Response            & 15.54               & 15.83         & 15.59            \\
\# Turns / Conversation       & 13.33              & 13.03            & 13.28               \\
\bottomrule
\end{tabular}
}
\vspace{1em}
\caption{Dataset statistics of SciConvQA}
\vspace{-1em}
\label{tbl:dataset_statistics}
\end{table}

\smallskip 
\noindent\textbf{Domain Similarity Comparison of Conversational Datasets.} 
The t-SNE visualization in Figure~\ref{fig:vis_dts} offers a qualitative insight into the similarity among three conversational datasets: QReCC, TopiOCQA, and SciConvQA. In general, QReCC and TopiOCQA show significant overlap in the embedding space, suggesting high semantic similarity between these datasets. This is attributed to their common focus on general-domain conversational topics. In contrast, SciConvQA forms a clearly separate cluster due to its specialized domain~(e.g., scientific topics), reflecting the domain difference from the other two datasets. In real-world applications, CQR models are required to handle such diverse datasets across general-domain and specialized-domain conversations.

\begin{figure}[!t]
    \centering
    \includegraphics[width=0.55\linewidth]{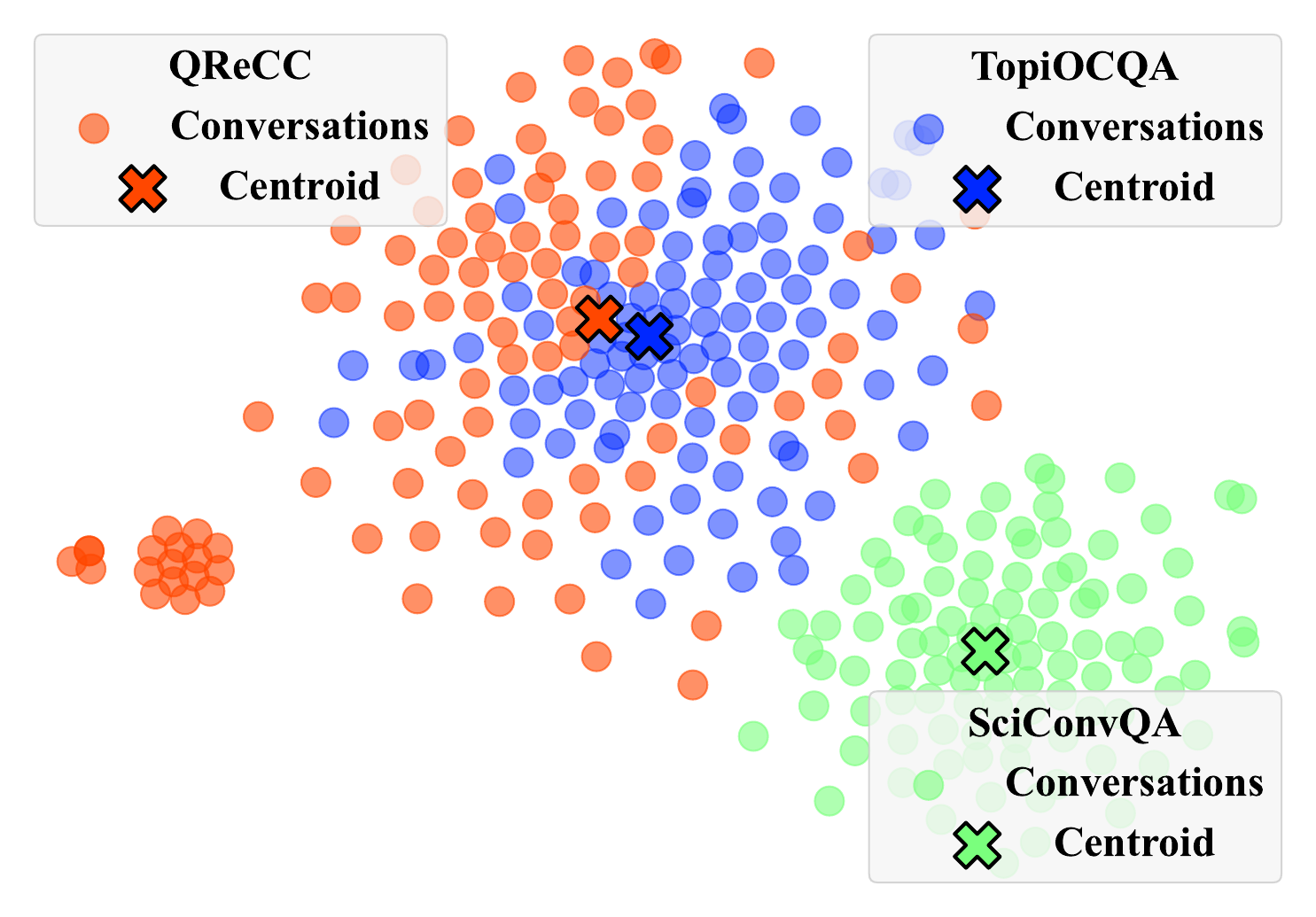}
    \caption{t-SNE visualization\,\citep{tsne} of three conversational datasets: QReCC, TopiOCQA, and SciConvQA. For each dataset, 100 randomly sampled conversations were encoded using the pretrained Sentence Transformer\,\citep{reimers-2019-sentence-bert, gtr, infocqr} and projected into 2D embedding space via t-SNE. The conversations (denoted by \scalebox{0.99}{\color{gray}\CIRCLE} symbols) from the same dataset are in the same color, where the centroid of each dataset's conversations is denoted by a \textbf{\texttimes{}} symbol.}
    \label{fig:vis_dts}
\end{figure}



\section{Experiment Details}
\label{sec:exp_details}

\subsection{Evaluation Metrics}
\label{sec:app_metrics}
\noindent\textbf{Metrics for Retrieval Accuracy.} 
We employ three widely used metrics\,\citep{cdr1, cdr2, cdr3, infocqr}: MRR, NDCG@3, and Recall@$k$. MRR evaluates how well the system ranks the \textit{first} relevant result, with higher scores indicating that a reference passage appears earlier in the ranked list. NDCG@3 measures the overall \textit{alignment} of ranking with with the ideal relevance ordering of reference passages by prioritizing that they are positioned closer to the top of the retrieved passage list. Recall@$k$ assesses the \textit{coverage}, capturing the fraction of reference passages retrieved within the top \(k\) results. Together, these metrics provide a holistic assessment of the system's ability to perform accurate and relevant passage retrieval.

\smallskip 
\noindent\textbf{Metrics for Generation Accuracy.} 
We employ widely used metrics\,\citep{adaptive-rag, baek2023knowledge, selfrag, mallen2022not, adaptive-rag2, bergen_prime}: LLMeval, ROUGE, and BertScore. LLMEval employs gpt-4o-2024-08-06\,\citep{openai2024chatgpt} as the evaluator, providing human-aligned relevance and generative quality assessments. ROUGE, specifically ROUGE-1 and ROUGE-L, evaluates lexical and structural alignment through unigram overlap and longest common subsequences. BERTScore\,\citep{zhang2019bertscore, bert} uses contextual embeddings to measure semantic similarity, enabling robust evaluation beyond surface-level matching.

\subsection{Implementation Details}
\label{sec:app_impl}

All baseline models are trained using their official repositories\,\citep{t5qr, hyde, convgqr, infocqr, llm4cs}, with the exception of RetPo, which we implement because no public code is available.

RetPo's hyperparameters are tuned via grid search for both SFT and DPO: SFT is trained for one epoch with a learning rate of \(2\times10^{-5}\) and a batch size of 32, while DPO uses \(\beta=0.1\) (chosen from \(\{0.1, 0.2, 0.3, 0.4, 0.5\}\)) and trains for two epochs under the same learning rate and batch size. Our implementation of RetPo achieves improved performance over the authors' results.

We adopt the same SFT configuration for \algname{} but set \(\beta=0.5\) during DPO to mitigate overfitting to initial pseudo reference passages. For other hyperparameters, \algname{} updates pseudo reference passages every epoch, repeating this process three times following \citep{noisy_student}, and selects the top-3 relevant passages per query-response turn. Hyperparameter sensitivity analyis is provided in Appendix \ref{sec:app_sensitivity}.

For backbone models, LLM-IQR, HyDE-LLM, and LLM4CS-CoT use gpt-3.5-turbo-0125\,\citep{openai2022chatgpt}, while RetPo, HyDE-FT, and \algname{} use Llama3.1-8b-instruct\,\citep{llama3}. T5QR and ConvGQR follow their official T5-base implementations.

All models were implemented in PyTorch 2.1.2 and trained on NVIDIA RTX A6000 Ada GPUs. The source code is publicly available at \url{https://anonymous.4open.science/r/DualReform}.



\smallskip 
\noindent\textbf{Details of Retrieval Systems.} 
We use Pyserini\,\citep{faiss} and Faiss\,\citep{faiss} for BM25\,\citep{bm25} and GTR\,\citep{gtr} retrieval systems, respectively. For BM25, we adopt the parameter settings from previous studies\,\citep{convgqr, retpo, infocqr}, configuring \(k_1 = 0.82\), \(b = 0.68\) for QReCC, and \(k_1 = 0.9\), \(b = 0.4\) for TopiOCQA and SciConvQA, where \(k_1\) adjusts term frequency normalization and \(b\) controls the impact of document frequency. For GTR\footnote{\url{https://huggingface.co/sentence-transformers/gtr-t5-large}}, the maximum token length is set to 384 for both the reformulated query and passage. 

Both sparse and dense retrieval systems retrieve the top-100 relevant passages per query, and the aforementioned metrics are computed using \textit{pytrec-eval}\,\citep{pytrec}.

\smallskip 
\noindent\textbf{Details of Response Generation.} 
We employ Llama-3.1-8b-instruct\,\citep{llama3} as the response generator. The top-4 most relevant passages, retrieved using BM25, are appended to the original query. The input query to BM25 is obtained by applying various CQR methods. 

\smallskip 
\noindent\textbf{Candidate Query Generation.}
To generate diverse candidate queries, we build on prior studies\,\citep{retpo, adacqr}. Specifically, we utilize the gpt-3.5-turbo-0125\,\citep{openai2022chatgpt} via the OpenAI API\footnote{\url{https://platform.openai.com/docs/models/gpt-3-5-turbo}} to transform user queries in conversational datasets into diverse candidate queries. The model is configured with a temperature of 0.8 and a top-$p$ value of 0.8 to promote diversity, with a maximum token limit set to 2560.

We adopt two prompting strategies: Question Rewriting and Query Expansion. Question Rewriting generates 12 candidate queries, whereas Query Expansion produces 3 additional candidates by applying Llama3.1-8b-instruct to the outputs of Question Rewriting. Both strategies are applied consistently across all datasets. The prompt templates for Question Rewriting\,(adapted from \citep{infocqr}) and Query Expansion\,(adapted from \citep{retpo}) are illustrated in Figure~\ref{fig:prompt_qr} and Figure~\ref{fig:prompt_qe}, respectively. 
\section{Complete Experiment Results}
\label{sec:exp_results}

\begin{figure*}[!t]
    \centering
    \includegraphics[width=\textwidth]{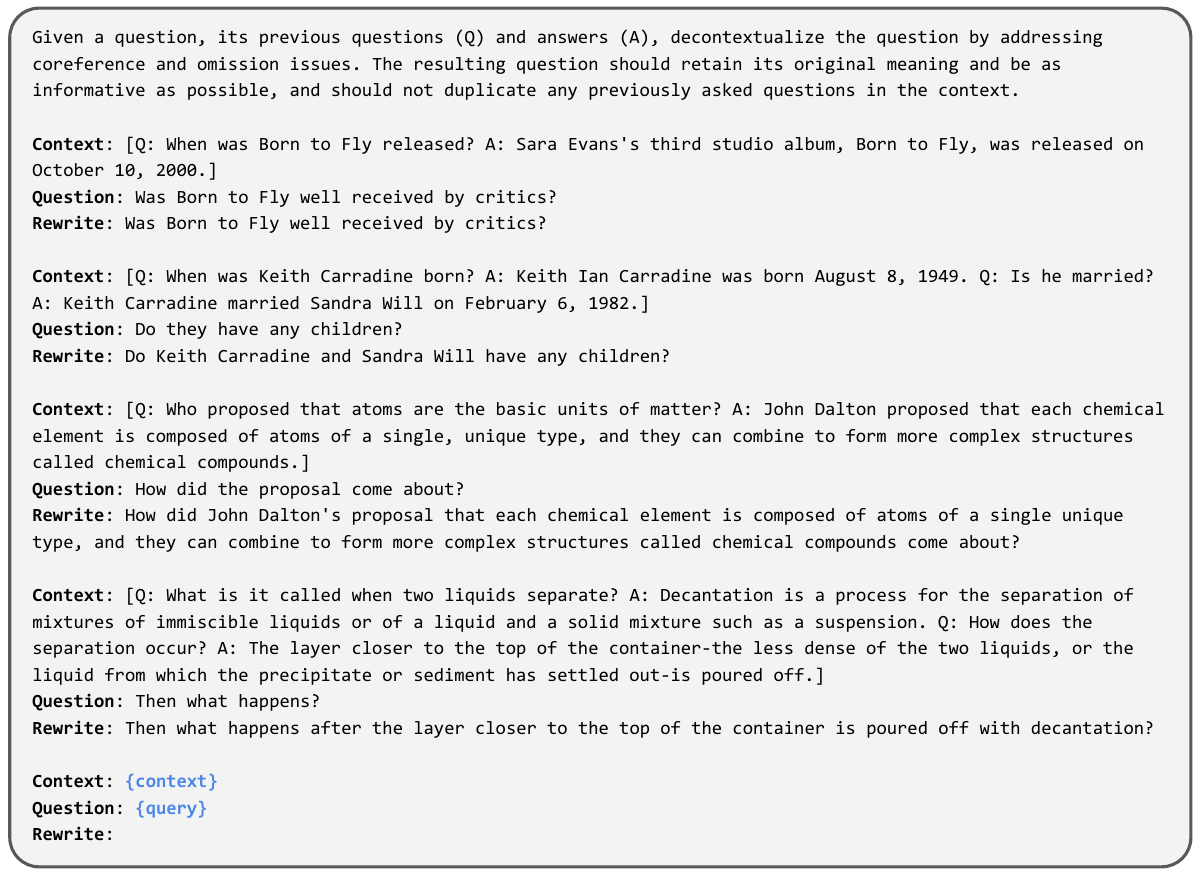}
    \vspace*{-0.5cm}
    \caption{Prompt used for Query Rewriting.}
    \label{fig:prompt_qr}
    \vspace*{-0.3cm}
\end{figure*}

\begin{figure*}[!h]
    \centering
    \includegraphics[width=\textwidth]{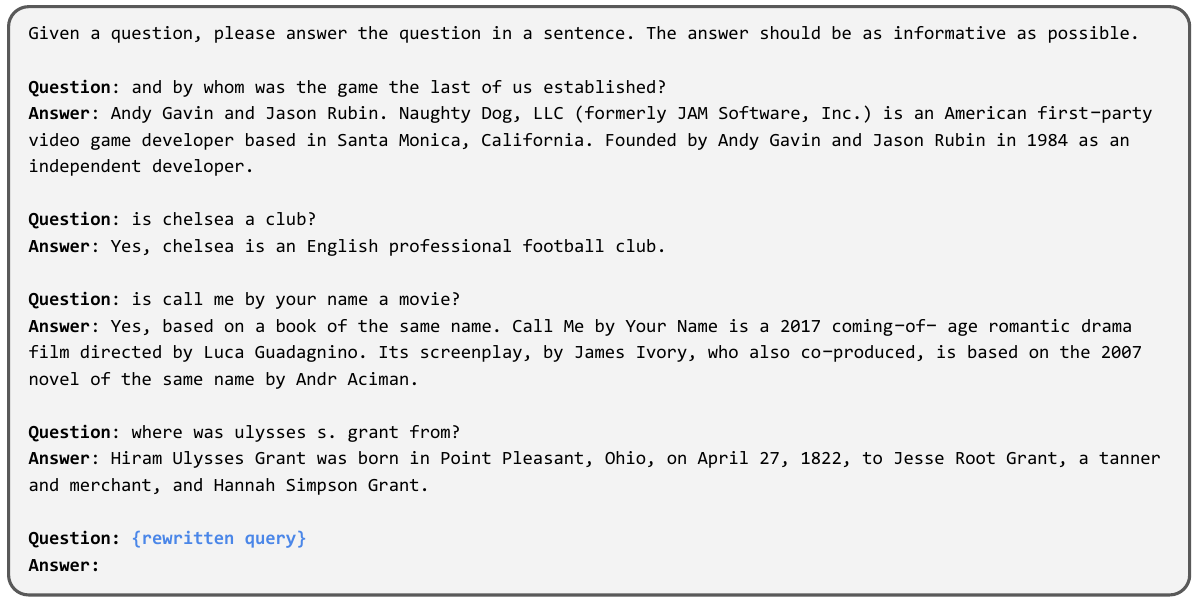}
    \vspace*{-0.5cm}
    \caption{Prompt used for Query Expansion.}
    \label{fig:prompt_qe}
    \vspace*{-0.3cm}
\end{figure*}

\subsection{Extended Results: QReCC}
\label{sec:app_qrecc}
Table~\ref{tbl:overall_perf_qrecc} extends the results of Table~\ref{tbl:overall_perf} by additionally using the QReCC dataset as the target dataset.

The results further highlight the importance of reference-free preference optimization by demonstrating that Upper Bound and RetPo, which leverage preference optimization, exhibit divergent performance depending on the availability of reference passages from the target dataset. Notably, \algname{} establishes itself as an effective approach for \emph{reference-free} preference optimization, outperforming the baseline methods and achieving performance levels comparable to Upper Bound.
\def\arraystretch{0.96}
\begin{table*}[h]
\caption{Retrieval performance comparison of \algname{} against representative CQR baselines on the target dataset, QReCC. The best results~(excluding Upper Bound) are highlighted in bold.}
\centering
\resizebox{0.95\linewidth}{!}{%
\begin{tabular}[c]
{@{}ccc|cccc|ccccc@{}}
\arrayrulecolor{black}\specialrule{2pt}{0.75pt}{1.0pt}
\multirow{2}{*}{\hspace{-0.0cm}\makecell[c]{\textbf{Target}\\\textbf{Dataset}}\hspace{-0.0cm}}&\multicolumn{2}{c|}{\multirow{2}{*}{\makecell[c]{\textbf{Query}\\\textbf{Reformulations}}}}
& \multicolumn{4}{c|}{{ \textbf{Sparse Retriever}}} 
& \multicolumn{4}{c}{{ \textbf{Dense Retriever}}}\\
& & & \multicolumn{1}{c}{\textbf{MRR}}
& \multicolumn{1}{c}{\textbf{\!\!\!NDCG\!\!\!}}
& \multicolumn{1}{c}{\textbf{R@5}}
& \multicolumn{1}{c|}{\textbf{R@20}}
& \multicolumn{1}{c}{\textbf{MRR}}
& \multicolumn{1}{c}{\textbf{\!\!\!NDCG\!\!\!}}
& \multicolumn{1}{c}{\textbf{R@5}}
& \multicolumn{1}{c}{\textbf{R@20}}
\\ 
\arrayrulecolor{black}\specialrule{1pt}{1.5pt}{0.7pt} 
\arrayrulecolor{black}\specialrule{1pt}{0.7pt}{3.0pt}
\multirow{10}{*}{\hspace{-0.0cm}{\textbf{QReCC}}} 
& \multicolumn{2}{c|}{{ {Upper Bound}}} 
& \begin{tabular}[c]{@{}c@{}}{ {50.32} }\end{tabular}
& \begin{tabular}[c]{@{}c@{}}{ {46.94} }\end{tabular}
& \begin{tabular}[c]{@{}c@{}}{ {58.97} }\end{tabular}
& \begin{tabular}[c]{@{}c@{}}{ {78.96} }\end{tabular}
& \begin{tabular}[c]{@{}c@{}}{ {51.63} }\end{tabular}
& \begin{tabular}[c]{@{}c@{}}{ {49.11} }\end{tabular}
& \begin{tabular}[c]{@{}c@{}}{ {63.13} }\end{tabular}
& \begin{tabular}[c]{@{}c@{}}{ {79.64} }\end{tabular}
\\ \addlinespace[0.1ex]\cline{2-11}\addlinespace[0.5ex]
& \multicolumn{2}{c|}{{ {LLM-IQR}}} 
& \begin{tabular}[c]{@{}c@{}}{ {41.82} }\end{tabular}
& \begin{tabular}[c]{@{}c@{}}{ {38.88} }\end{tabular}
& \begin{tabular}[c]{@{}c@{}}{ {52.58} }\end{tabular}
& \begin{tabular}[c]{@{}c@{}}{ {71.95} }\end{tabular}
& \begin{tabular}[c]{@{}c@{}}{ {48.09} }\end{tabular}
& \begin{tabular}[c]{@{}c@{}}{ {45.25} }\end{tabular}
& \begin{tabular}[c]{@{}c@{}}{ \textbf{62.13} }\end{tabular}
& \begin{tabular}[c]{@{}c@{}}{ {80.24} }\end{tabular}
\\
& \multicolumn{2}{c|}{{ {HyDE-LLM}}} 
& \begin{tabular}[c]{@{}c@{}}{ {42.26} }\end{tabular}
& \begin{tabular}[c]{@{}c@{}}{ {39.21} }\end{tabular}
& \begin{tabular}[c]{@{}c@{}}{ {52.93} }\end{tabular}
& \begin{tabular}[c]{@{}c@{}}{ {72.44} }\end{tabular}
& \begin{tabular}[c]{@{}c@{}}{ {48.20} }\end{tabular}
& \begin{tabular}[c]{@{}c@{}}{ {45.46} }\end{tabular}
& \begin{tabular}[c]{@{}c@{}}{ {61.97} }\end{tabular}
& \begin{tabular}[c]{@{}c@{}}{ {79.85} }\end{tabular}
\\ 
& \multicolumn{2}{c|}{{ {LLM4CS-CoT}}} 
& \begin{tabular}[c]{@{}c@{}}{ {47.51} }\end{tabular}
& \begin{tabular}[c]{@{}c@{}}{ {44.25} }\end{tabular}
& \begin{tabular}[c]{@{}c@{}}{ {56.64} }\end{tabular}
& \begin{tabular}[c]{@{}c@{}}{ {78.96} }\end{tabular}
& \begin{tabular}[c]{@{}c@{}}{ \textbf{48.53} }\end{tabular}
& \begin{tabular}[c]{@{}c@{}}{ {45.49} }\end{tabular}
& \begin{tabular}[c]{@{}c@{}}{ {58.54} }\end{tabular}
& \begin{tabular}[c]{@{}c@{}}{ {79.64} }\end{tabular}
\\ 
& \multicolumn{2}{c|}{{ {T5QR}}} 
& \begin{tabular}[c]{@{}c@{}}{ {32.19} }\end{tabular}
& \begin{tabular}[c]{@{}c@{}}{ {29.17} }\end{tabular}
& \begin{tabular}[c]{@{}c@{}}{ {40.18} }\end{tabular}
& \begin{tabular}[c]{@{}c@{}}{ {61.94} }\end{tabular}
& \begin{tabular}[c]{@{}c@{}}{ {41.21} }\end{tabular}
& \begin{tabular}[c]{@{}c@{}}{ {38.18} }\end{tabular}
& \begin{tabular}[c]{@{}c@{}}{ {54.63} }\end{tabular}
& \begin{tabular}[c]{@{}c@{}}{ {73.45} }\end{tabular}
\\ \addlinespace[0.5ex]\cdashline{2-11}\addlinespace[0.7ex]
& \multirow{3}{*}{\hspace{-0.0cm}{\makecell[c]{\small\textit{SciConvQA}\\ \small $\downarrow$ \\ \small\textit{QReCC}}}\hspace{-0cm}} 
& \multirow{1}{*}{\hspace{-0.1cm}\makecell[c]{ConvGQR}\hspace{-0.cm}}
& \begin{tabular}[c]{@{}c@{}}{ {32.49} }\end{tabular}
& \begin{tabular}[c]{@{}c@{}}{ {29.66} }\end{tabular}
& \begin{tabular}[c]{@{}c@{}}{ {41.36} }\end{tabular}
& \begin{tabular}[c]{@{}c@{}}{ {59.47} }\end{tabular}
& \begin{tabular}[c]{@{}c@{}}{ {36.86} }\end{tabular}
& \begin{tabular}[c]{@{}c@{}}{ {34.16} }\end{tabular}
& \begin{tabular}[c]{@{}c@{}}{ {48.94} }\end{tabular}
& \begin{tabular}[c]{@{}c@{}}{ {67.72} }\end{tabular}
\\
& & \multirow{1}{*}{\hspace{-0.1cm}\makecell[c]{HyDE-FT}\hspace{-0.cm}}
& \begin{tabular}[c]{@{}c@{}}{ {38.87} }\end{tabular}
& \begin{tabular}[c]{@{}c@{}}{ {36.09} }\end{tabular}
& \begin{tabular}[c]{@{}c@{}}{ {47.41} }\end{tabular}
& \begin{tabular}[c]{@{}c@{}}{ {64.62} }\end{tabular}
& \begin{tabular}[c]{@{}c@{}}{ {41.23} }\end{tabular}
& \begin{tabular}[c]{@{}c@{}}{ {37.92} }\end{tabular}
& \begin{tabular}[c]{@{}c@{}}{ {52.55} }\end{tabular}
& \begin{tabular}[c]{@{}c@{}}{ {68.76} }\end{tabular}
\\
& & \multirow{1}{*}{\hspace{-0.1cm}\makecell[c]{RetPo}\hspace{-0.cm}}
& \begin{tabular}[c]{@{}c@{}}{ {39.22} }\end{tabular}
& \begin{tabular}[c]{@{}c@{}}{ {36.49} }\end{tabular}
& \begin{tabular}[c]{@{}c@{}}{ {48.76} }\end{tabular}
& \begin{tabular}[c]{@{}c@{}}{ {65.07} }\end{tabular}
& \begin{tabular}[c]{@{}c@{}}{ {45.44} }\end{tabular}
& \begin{tabular}[c]{@{}c@{}}{ {42.73} }\end{tabular}
& \begin{tabular}[c]{@{}c@{}}{ {59.51} }\end{tabular}
& \begin{tabular}[c]{@{}c@{}}{ {78.32} }\end{tabular}
\\ \addlinespace[0.3ex]\cdashline{2-11}\addlinespace[0.8ex]
& \multicolumn{2}{c|}{{ \textbf{\algname{}}}} 
& \begin{tabular}[c]{@{}c@{}}{ \textbf{48.40} }\end{tabular}
& \begin{tabular}[c]{@{}c@{}}{ \textbf{45.30} }\end{tabular}
& \begin{tabular}[c]{@{}c@{}}{ \textbf{58.91} }\end{tabular}
& \begin{tabular}[c]{@{}c@{}}{ \textbf{77.90} }\end{tabular}
& \begin{tabular}[c]{@{}c@{}}{ {47.58} }\end{tabular}
& \begin{tabular}[c]{@{}c@{}}{ \textbf{46.98} }\end{tabular}
& \begin{tabular}[c]{@{}c@{}}{ {59.60} }\end{tabular}
& \begin{tabular}[c]{@{}c@{}}{ \textbf{80.33} }\end{tabular}
\\ 
\arrayrulecolor{black}\specialrule{2pt}{1.0pt}{1.0pt}
\end{tabular}
}
\label{tbl:overall_perf_qrecc}
\end{table*}



\subsection{Extended Results: Effect of Pseudo Reference Refinement via CQR}
\label{sec:app_refine_methods}


Table~\ref{tbl:effct_cqr_app} extends the results of  Table~\ref{tbl:effct_cqr} by additionally using other evaluation metrics,  NDCG@3 and Recall@20, in order to offer a more comprehensive assessment of retrieval performance.
\begin{table}[h]
\vspace{-1.1em} 
\caption{Comparison of response refinement methods, evaluated using the sparse retriever. The highest values are emphasized in bold.} 
\def\arraystretch{1.28}
\centering
\resizebox{0.47\linewidth}{!}{%
\begin{tabular}[c]
{@{}cc|cc|ccc@{}}
\arrayrulecolor{black}\specialrule{1.2pt}{0.75pt}{2.5pt}
\multirow{2}{*}{\hspace{-0.0cm}{\makecell[c]{ Data }}\hspace{-0cm}}  & \multirow{2}{*}{\hspace{-0.0cm}{\makecell[c]{\textbf{Refine}\\ \textbf{Methods}}}\hspace{-0cm}} 

& \multicolumn{2}{c|}{{ \textbf{Pseudo Ref. Acc.}}} 
& \multicolumn{2}{c}{{ \textbf{Retrieval Acc.}}} 
\\ 
& & \multicolumn{1}{c}{{NDCG}}
& \multicolumn{1}{c|}{{R@20}}
& \multicolumn{1}{c}{{NDCG}}
& \multicolumn{1}{c}{{R@20}}
\\
\arrayrulecolor{black}\specialrule{1pt}{1.5pt}{1pt}
\arrayrulecolor{black}\specialrule{1pt}{1pt}{3.0pt}
\multirow{3}{*}{\vspace{0.4cm}\rotatebox{90}{\small SciConvQA}} 
& \multirow{1}{*}{{Llama}}
& \begin{tabular}[c]{@{}c@{}}{ {35.62} }\end{tabular}
& \begin{tabular}[c]{@{}c@{}}{ {59.37} }\end{tabular}
& \begin{tabular}[c]{@{}c@{}}{ {16.53} }\end{tabular}
& \begin{tabular}[c]{@{}c@{}}{ {34.03} }\end{tabular}
\\
&  \multirow{1}{*}{{Llama+ICL}}                                
& \begin{tabular}[c]{@{}c@{}}{ {43.99} }\end{tabular}
& \begin{tabular}[c]{@{}c@{}}{ {66.00} }\end{tabular}
& \begin{tabular}[c]{@{}c@{}}{ {17.56} }\end{tabular}
& \begin{tabular}[c]{@{}c@{}}{ {42.32} }\end{tabular}
\\ 
&  \multirow{1}{*}{{{\algname{}}}}            
& \begin{tabular}[c]{@{}c@{}}{ \textbf{49.36} }\end{tabular}
& \begin{tabular}[c]{@{}c@{}}{ \textbf{71.27} }\end{tabular}
& \begin{tabular}[c]{@{}c@{}}{ \textbf{18.88} }\end{tabular}
& \begin{tabular}[c]{@{}c@{}}{ \textbf{42.78} }\end{tabular}
\\ \addlinespace[0.3ex]\cdashline{1-7}\addlinespace[0.8ex] 
\multirow{3}{*}{\vspace{0.4cm}\rotatebox{90}{\small TopiOCQA}} 
& \multirow{1}{*}{{Llama}}              
& \begin{tabular}[c]{@{}c@{}}{ {35.42} }\end{tabular}
& \begin{tabular}[c]{@{}c@{}}{ {50.40} }\end{tabular}
& \begin{tabular}[c]{@{}c@{}}{ {22.98} }\end{tabular}
& \begin{tabular}[c]{@{}c@{}}{ {53.62} }\end{tabular}
\\
&  \multirow{1}{*}{{Llama+ICL}}                                
& \begin{tabular}[c]{@{}c@{}}{ {50.24} }\end{tabular}
& \begin{tabular}[c]{@{}c@{}}{ {67.86} }\end{tabular}
& \begin{tabular}[c]{@{}c@{}}{ {25.76} }\end{tabular}
& \begin{tabular}[c]{@{}c@{}}{ {54.02} }\end{tabular}
\\ 
&  \multirow{1}{*}{{{\algname{}}}}            
& \begin{tabular}[c]{@{}c@{}}{ \textbf{56.23} }\end{tabular}
& \begin{tabular}[c]{@{}c@{}}{ \textbf{76.82} }\end{tabular}
& \begin{tabular}[c]{@{}c@{}}{ \textbf{26.57} }\end{tabular}
& \begin{tabular}[c]{@{}c@{}}{ \textbf{59.03} }\end{tabular}
\\
\arrayrulecolor{black}\specialrule{1.2pt}{1pt}{1.0pt}
\end{tabular} 
}
\label{tbl:effct_cqr_app}
\vspace*{-0.1cm}
\end{table}

\smallskip
\noindent\textbf{Comparison with the Substantially Larger Backbone Language Model, Llama3.1-70b-inst.} 
In the Llama and Llama+ICL variants, we replace their backbones with a larger backbone, \textit{Llama3.1-\textit{70b}-inst}, whereas \algname{} keeps using \textit{Llama3.1-{8b}-inst} as the backbone for its CQR model. As shown in Table~\ref{tbl:effct_cqr_llama70B}, \algname{} achieves comparable or marginally superior performance despite using a substantially smaller language model as its backbone. This finding demonstrates that CQR can effectively substitute widely adopted LLMs for response refinement without introducing additional computational overhead.
\begin{table}[h!]
\vspace{-0.5em} 
\caption{Comparison with the substantially larger Llama3.1-70b-inst backbone. Both Llama and Llama+ICL use Llama3.1-70b-inst, while \algname{} employs the smaller Llama3.1-8b-inst for CQR. The highest values are emphasized in bold.} 
\def\arraystretch{1.25}
\centering
\resizebox{0.65\linewidth}{!}{%
\begin{tabular}[c]
{@{}ccc|ccccc@{}}
\arrayrulecolor{black}\specialrule{1.2pt}{0.75pt}{2.5pt}
\multirow{2}{*}{\hspace{-0.0cm}{\makecell[c]{ Data }}\hspace{-0cm}}  
& \multirow{2}{*}{\hspace{-0.0cm}{\makecell[c]{\textbf{Backbone}\\ \textbf{Models}}}\hspace{-0cm}} 
& \multirow{2}{*}{\hspace{-0.0cm}{\makecell[c]{{Refine}\\ {Methods}}}\hspace{-0cm}} 
& \multicolumn{4}{c}{{ \textbf{Pseudo Reference Accuracy}}} 
\\ 
& & & \multicolumn{1}{c}{{MRR}}
& \multicolumn{1}{c}{{NDCG}}
& \multicolumn{1}{c}{{R@5}}
& \multicolumn{1}{c}{{R@20}}
\\
\arrayrulecolor{black}\specialrule{1pt}{1.5pt}{1pt}
\arrayrulecolor{black}\specialrule{1pt}{1pt}{3.0pt}
\multirow{3}{*}{\vspace{-0.3cm}\rotatebox{90}{\small SciConvQA}} 
& \multirow{2}{*}{\textbf{Llama3.1-70B-inst}}
& \multirow{1}{*}{{Llama}}
& \begin{tabular}[c]{@{}c@{}}{{47.61}}\end{tabular}
& \begin{tabular}[c]{@{}c@{}}{{46.91}}\end{tabular}
& \begin{tabular}[c]{@{}c@{}}{{57.14}}\end{tabular}
& \begin{tabular}[c]{@{}c@{}}{{68.28}}\end{tabular}
\\
&
&  \multirow{1}{*}{{Llama+ICL}}                                
& \begin{tabular}[c]{@{}c@{}}{{49.64}}\end{tabular}
& \begin{tabular}[c]{@{}c@{}}{{48.87}}\end{tabular}
& \begin{tabular}[c]{@{}c@{}}{{58.46}}\end{tabular}
& \begin{tabular}[c]{@{}c@{}}{{69.90}}\end{tabular}
\\ \addlinespace[0.3ex]\cdashline{2-7}\addlinespace[0.8ex]
& \multirow{1}{*}{\textbf{Llama3.1-8B-inst}}
& \multirow{1}{*}{{\textbf{\algname{}}}}            
& \begin{tabular}[c]{@{}c@{}}{\textbf{50.05}}\end{tabular}
& \begin{tabular}[c]{@{}c@{}}{\textbf{49.36}}\end{tabular}
& \begin{tabular}[c]{@{}c@{}}{\textbf{59.75}}\end{tabular}
& \begin{tabular}[c]{@{}c@{}}{\textbf{71.27}}\end{tabular}
\\ \addlinespace[0.3ex]\cline{1-7}\addlinespace[0.8ex] 
\multirow{3}{*}{\vspace{-0.3cm}\rotatebox{90}{\small TopiOCQA}} 
& \multirow{2}{*}{\textbf{Llama3.1-70B-inst}}
& \multirow{1}{*}{{Llama}}              
& \begin{tabular}[c]{@{}c@{}}{{51.00}}\end{tabular}
& \begin{tabular}[c]{@{}c@{}}{{50.51}}\end{tabular}
& \begin{tabular}[c]{@{}c@{}}{{60.74}}\end{tabular}
& \begin{tabular}[c]{@{}c@{}}{{71.09}}\end{tabular}
\\
&
&  \multirow{1}{*}{{Llama+ICL}}                                
& \begin{tabular}[c]{@{}c@{}}{{55.38}}\end{tabular}
& \begin{tabular}[c]{@{}c@{}}{{55.10}}\end{tabular}
& \begin{tabular}[c]{@{}c@{}}{{64.70}}\end{tabular}
& \begin{tabular}[c]{@{}c@{}}{{73.74}}\end{tabular}
\\ \addlinespace[0.3ex]\cdashline{2-7}\addlinespace[0.8ex]
& \multirow{1}{*}{\textbf{Llama3.1-8B-inst}}
&  \multirow{1}{*}{{\textbf{\algname{}}}}            
& \begin{tabular}[c]{@{}c@{}}{\textbf{56.50}}\end{tabular}
& \begin{tabular}[c]{@{}c@{}}{\textbf{56.23}}\end{tabular}
& \begin{tabular}[c]{@{}c@{}}{\textbf{66.79}}\end{tabular}
& \begin{tabular}[c]{@{}c@{}}{\textbf{76.82}}\end{tabular}
\\
\arrayrulecolor{black}\specialrule{1.2pt}{1pt}{1.0pt}
\end{tabular} 
}
\label{tbl:effct_cqr_llama70B}
\end{table}

\smallskip
\noindent\textbf{Demonstrations for Llama+ICL.} 
\textit{Llama+ICL} differs from \textit{Llama} by utilizing in-context demonstrations, as shown in Figure~\ref{fig:demonstration_llama_icl}.
\begin{figure*}[!h]
    \centering
    \includegraphics[width=\textwidth]{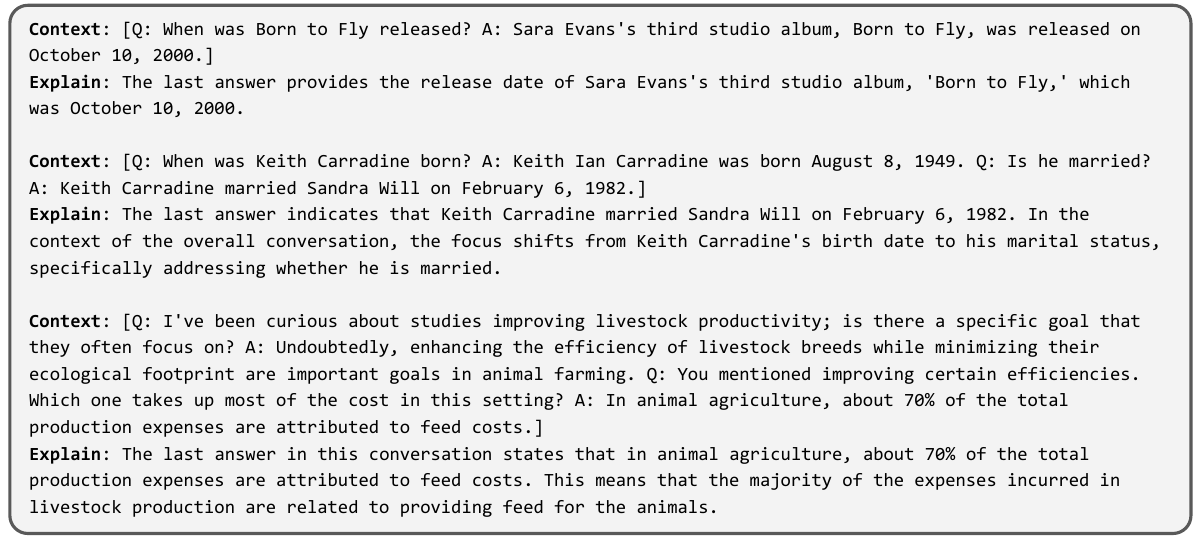}
    \vspace*{-0.5cm}
    \caption{Demonstrations used for the Llama+ICL variant.}
    \label{fig:demonstration_llama_icl}
    \vspace*{-0.3cm}
\end{figure*}

\subsection{Extended Results: Effect of Iterative Refinement for Pseudo Reference}
Table~\ref{tbl:effct_stab_app} extends the results of  Table~\ref{tbl:effct_stab} by additionally using other evaluation metrics, NDCG@3 and Recall@20. These results again demonstrate that \algname{} benefits from the iterative refinement process of pseudo reference passages.
\def\arraystretch{1.3}
\begin{table}[h]
\caption{
Effect of iterative optimization within \algname{}, evaluated using the sparse retriever. The highest values are emphasized in bold.
}

\centering
\resizebox{0.43\linewidth}{!}{%
\begin{tabular}[c]
{@{}cc|cc|ccc@{}}
\arrayrulecolor{black}\specialrule{1.2pt}{0.75pt}{2.5pt}
\multirow{2}{*}{\hspace{-0.cm}{ Data }\hspace{-0.cm}}  & \multirow{2}{*}{\hspace{-0.2cm}\makecell[c]{\textbf{Pseudo Ref.}\\\textbf{Updates}}\hspace{-0.1cm}} 
& \multicolumn{2}{c|}{{ \textbf{Pseudo Ref. Acc.}}} 
& \multicolumn{2}{c}{{ \textbf{Retrieval Acc.}}} 
\\ 
& & \multicolumn{1}{c}{{NDCG}}
& \multicolumn{1}{c|}{{R@20}}
& \multicolumn{1}{c}{{NDCG}}
& \multicolumn{1}{c}{{R@20}}
\\
\arrayrulecolor{black}\specialrule{1pt}{1.5pt}{1pt}
\arrayrulecolor{black}\specialrule{1pt}{1pt}{3.0pt}
\multirow{4}{*}{\vspace{0.4cm}\rotatebox{90}{\small SciConvQA}} 
& \multirow{1}{*}{1}                                
& \begin{tabular}[c]{@{}c@{}}{ {37.48} }\end{tabular}
& \begin{tabular}[c]{@{}c@{}}{ {58.98} }\end{tabular}
& \begin{tabular}[c]{@{}c@{}}{ {15.50} }\end{tabular}
& \begin{tabular}[c]{@{}c@{}}{ {40.23} }\end{tabular}
\\
&  \multirow{1}{*}{2}                                
& \begin{tabular}[c]{@{}c@{}}{ {45.42} }\end{tabular}
& \begin{tabular}[c]{@{}c@{}}{ {65.88} }\end{tabular}
& \begin{tabular}[c]{@{}c@{}}{ {17.96} }\end{tabular}
& \begin{tabular}[c]{@{}c@{}}{ {40.38} }\end{tabular}
\\ 
&  \multirow{1}{*}{{3}}            
& \begin{tabular}[c]{@{}c@{}}{ \textbf{49.36} }\end{tabular}
& \begin{tabular}[c]{@{}c@{}}{ \textbf{71.27} }\end{tabular}
& \begin{tabular}[c]{@{}c@{}}{ \textbf{18.88} }\end{tabular}
& \begin{tabular}[c]{@{}c@{}}{ \textbf{42.78} }\end{tabular}
\\ \addlinespace[0.3ex]\cdashline{1-7}\addlinespace[0.8ex] 
\multirow{4}{*}{\vspace{0.4cm}\rotatebox{90}{\small TopiOCQA}} 
& \multirow{1}{*}{1}                                
& \begin{tabular}[c]{@{}c@{}}{ {38.82} }\end{tabular}
& \begin{tabular}[c]{@{}c@{}}{ {50.75} }\end{tabular}
& \begin{tabular}[c]{@{}c@{}}{ {23.66} }\end{tabular}
& \begin{tabular}[c]{@{}c@{}}{ {52.11} }\end{tabular}
\\
&  \multirow{1}{*}{2}                                
& \begin{tabular}[c]{@{}c@{}}{ {54.86} }\end{tabular}
& \begin{tabular}[c]{@{}c@{}}{ {75.28} }\end{tabular}
& \begin{tabular}[c]{@{}c@{}}{ {26.42} }\end{tabular}
& \begin{tabular}[c]{@{}c@{}}{ {57.25} }\end{tabular}
\\ 
&  \multirow{1}{*}{3}            
& \begin{tabular}[c]{@{}c@{}}{ \textbf{56.23} }\end{tabular}
& \begin{tabular}[c]{@{}c@{}}{ \textbf{76.82} }\end{tabular}
& \begin{tabular}[c]{@{}c@{}}{ \textbf{26.57} }\end{tabular}
& \begin{tabular}[c]{@{}c@{}}{ \textbf{59.03} }\end{tabular}
\\
\arrayrulecolor{black}\specialrule{1.2pt}{1pt}{1.0pt}
\end{tabular} }
\label{tbl:effct_stab_app}
\vspace*{-0.1cm}
\end{table}

\subsection{Effect of Query-Forming Template}
\label{sec:app_template}

Table~\ref{tbl:ablation_template_app} extends the results of  Table~\ref{tbl:ablation_template} by additionally using other evaluation metrics, NDCG@3 and Recall@20. These results further validate that the performance of the variants consistently declines compared to \algname{}, confirming the effectiveness of its query-forming template. 
\def\arraystretch{1.1}
\begin{table}[h]
\caption{
Effect of the query-forming template on pseudo reference accuracy using the sparse retriever. The highest values are emphasized in bold.}
\small
\centering
\resizebox{0.5\linewidth}{!}{%
\begin{tabular}[c]
{@{}c|cc|cc|cc@{}}
\arrayrulecolor{black}\specialrule{1.2pt}{0.75pt}{2.5pt}
\multirow{2}{*}{\hspace{-0.0cm}{{Variants}}} 
& \multicolumn{2}{c|}{{ \textbf{SciConvQA}}} 
& \multicolumn{2}{c|}{{ \textbf{TopiOCQA}}} 
& \multirow{2}{*}{\hspace{-0.0cm}{\textit{Degrade}}} 
\\ 
& \multicolumn{1}{c}{{NDCG}}
& \multicolumn{1}{c|}{{R@20}}
& \multicolumn{1}{c}{{NDCG}}
& \multicolumn{1}{c|}{{R@20}}
\\
\arrayrulecolor{black}\specialrule{1pt}{1.5pt}{1pt}
\arrayrulecolor{black}\specialrule{1pt}{1pt}{3.0pt}
\multirow{1}{*}{\textbf{No Template}}                                
& \begin{tabular}[c]{@{}c@{}}{ {44.56} }\end{tabular}
& \begin{tabular}[c]{@{}c@{}}{ {65.70} }\end{tabular}
& \begin{tabular}[c]{@{}c@{}}{ {54.23} }\end{tabular}
& \begin{tabular}[c]{@{}c@{}}{ {73.45} }\end{tabular}
& \begin{tabular}[c]{@{}c@{}}{ \textit{6.88}\% }\end{tabular}
\\
\multirow{1}{*}{\textbf{No Response}}                                
& \begin{tabular}[c]{@{}c@{}}{ {46.59} }\end{tabular}
& \begin{tabular}[c]{@{}c@{}}{ {68.72} }\end{tabular}
& \begin{tabular}[c]{@{}c@{}}{ {49.93} }\end{tabular}
& \begin{tabular}[c]{@{}c@{}}{ {71.71} }\end{tabular}
& \begin{tabular}[c]{@{}c@{}}{ \textit{7.35}\% }\end{tabular}
\\
\multirow{1}{*}{\textbf{No QE}}                                
& \begin{tabular}[c]{@{}c@{}}{ {42.84} }\end{tabular}
& \begin{tabular}[c]{@{}c@{}}{ {62.75} }\end{tabular}
& \begin{tabular}[c]{@{}c@{}}{ {53.94} }\end{tabular}
& \begin{tabular}[c]{@{}c@{}}{ {72.57} }\end{tabular}
& \begin{tabular}[c]{@{}c@{}}{ \textit{9.72}\% }\end{tabular}
\\
\multirow{1}{*}{\textbf{No QR}}                                
& \begin{tabular}[c]{@{}c@{}}{ {45.90} }\end{tabular}
& \begin{tabular}[c]{@{}c@{}}{ {66.33} }\end{tabular}
& \begin{tabular}[c]{@{}c@{}}{ {46.58} }\end{tabular}
& \begin{tabular}[c]{@{}c@{}}{ {66.58} }\end{tabular}
& \begin{tabular}[c]{@{}c@{}}{ \textit{12.77}\% }\end{tabular}
\\
\midrule
\multirow{1}{*}{{\textbf{\algname{}}}}            
& \begin{tabular}[c]{@{}c@{}}{ \textbf{49.36} }\end{tabular}
& \begin{tabular}[c]{@{}c@{}}{ \textbf{71.27} }\end{tabular}
& \begin{tabular}[c]{@{}c@{}}{ \textbf{56.23} }\end{tabular}
& \begin{tabular}[c]{@{}c@{}}{ \textbf{76.82} }\end{tabular}
& \begin{tabular}[c]{@{}c@{}}{{-}}\end{tabular}
\\
\arrayrulecolor{black}\specialrule{1.2pt}{1pt}{1.0pt}
\end{tabular} }

\label{tbl:ablation_template_app}
\vspace*{-0.1cm}
\end{table}

\subsection{More Examples of Refined Responses}
\label{sec:app_examples}
Figure~\ref{fig:refine_exs_app_marvel} complements Figure~\ref{fig:refine_exs}, presenting the complete refined responses generated by various methods. Figure~\ref{fig:refine_attns} illustrates the top attention weights assigned by \algname{} during response refinement, demonstrating its ability to correctly attend to the relevant context\,(e.g., ``Marvel Studios'') and the target response\,(e.g., ``Iron Man, X-Men, Spider-Man''). More examples for other conversations are provided in Figures~\ref{fig:refine_exs_app_atari}--\ref{fig:refine_exs_app}.

\begin{figure*}[t!]
\includegraphics[width=1.00\linewidth]{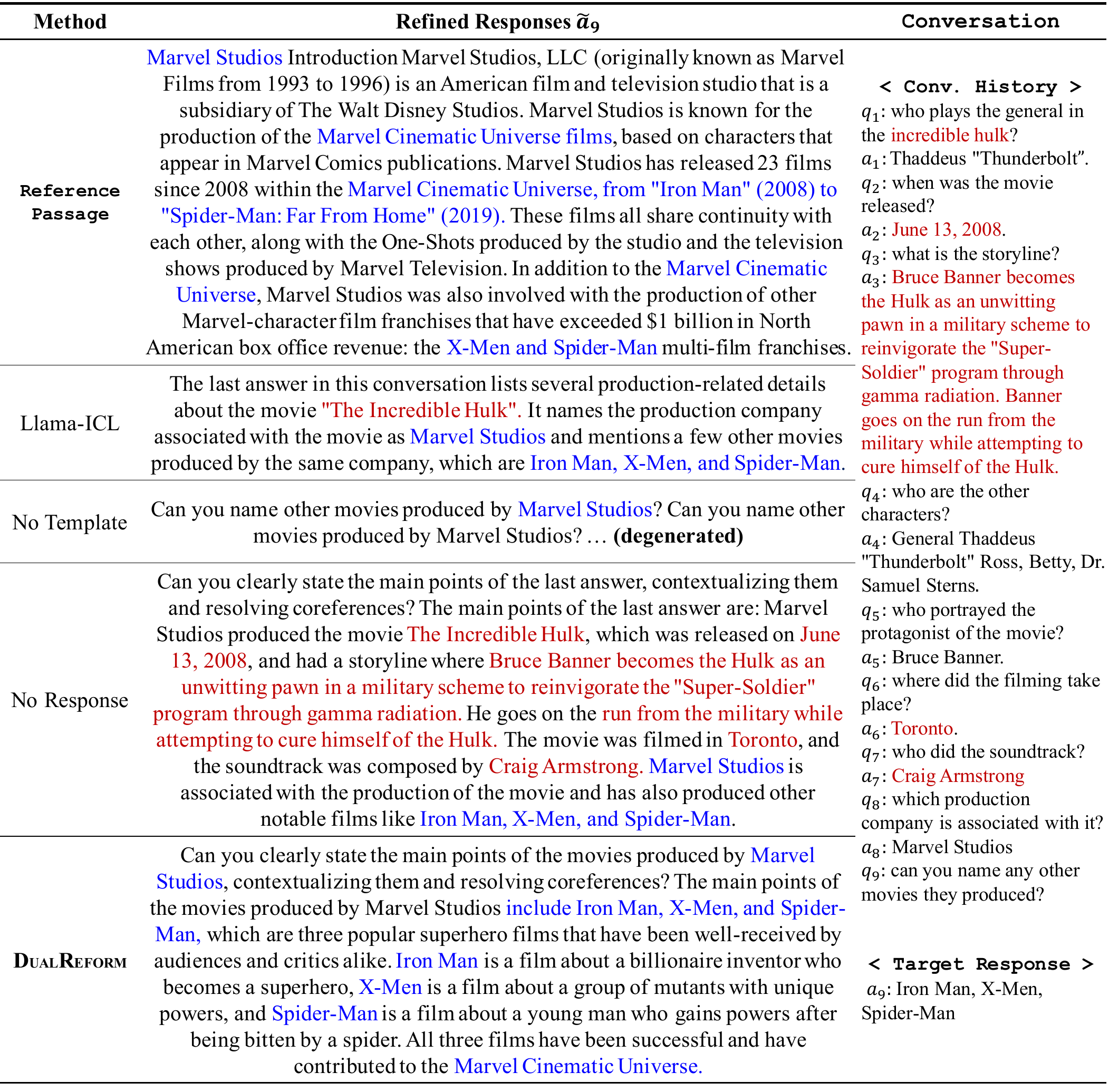}
\vspace*{-0.6cm}
\caption{ 
Examples of refined responses generated by different methods on TopiOCQA. Fragments strongly aligned with the reference passage are highlighted in \textcolor{blue}{blue}, while fragments with weaker connections~(e.g., off-topic elements referring to previous conversation topics) are marked in \textcolor{red}{red}.
}
\vspace*{-0.4cm}
\label{fig:refine_exs_app_marvel}
\end{figure*}

\begin{figure}[h!]
\centering
\includegraphics[width=0.6\columnwidth]{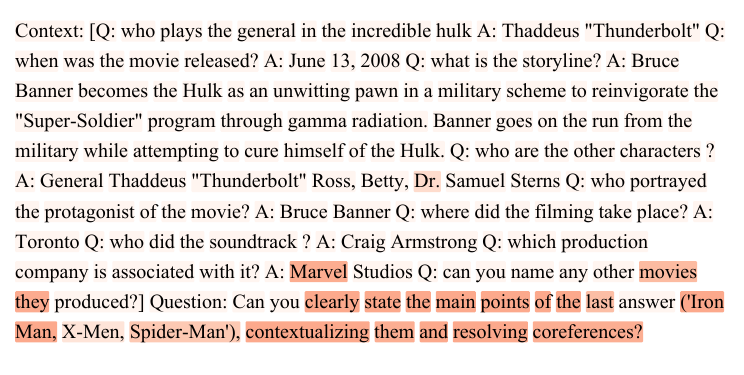}
\vspace*{-0.3cm}
\caption{Top attention weights assigned by \algname{} during response refinement for the conversation in Figure~\ref{fig:refine_exs}, with high-weight regions highlighted in \textcolor{red}{red}.
}
\label{fig:refine_attns}
\end{figure}

\clearpage
\begin{figure*}[t!]
\includegraphics[width=1.00\linewidth]{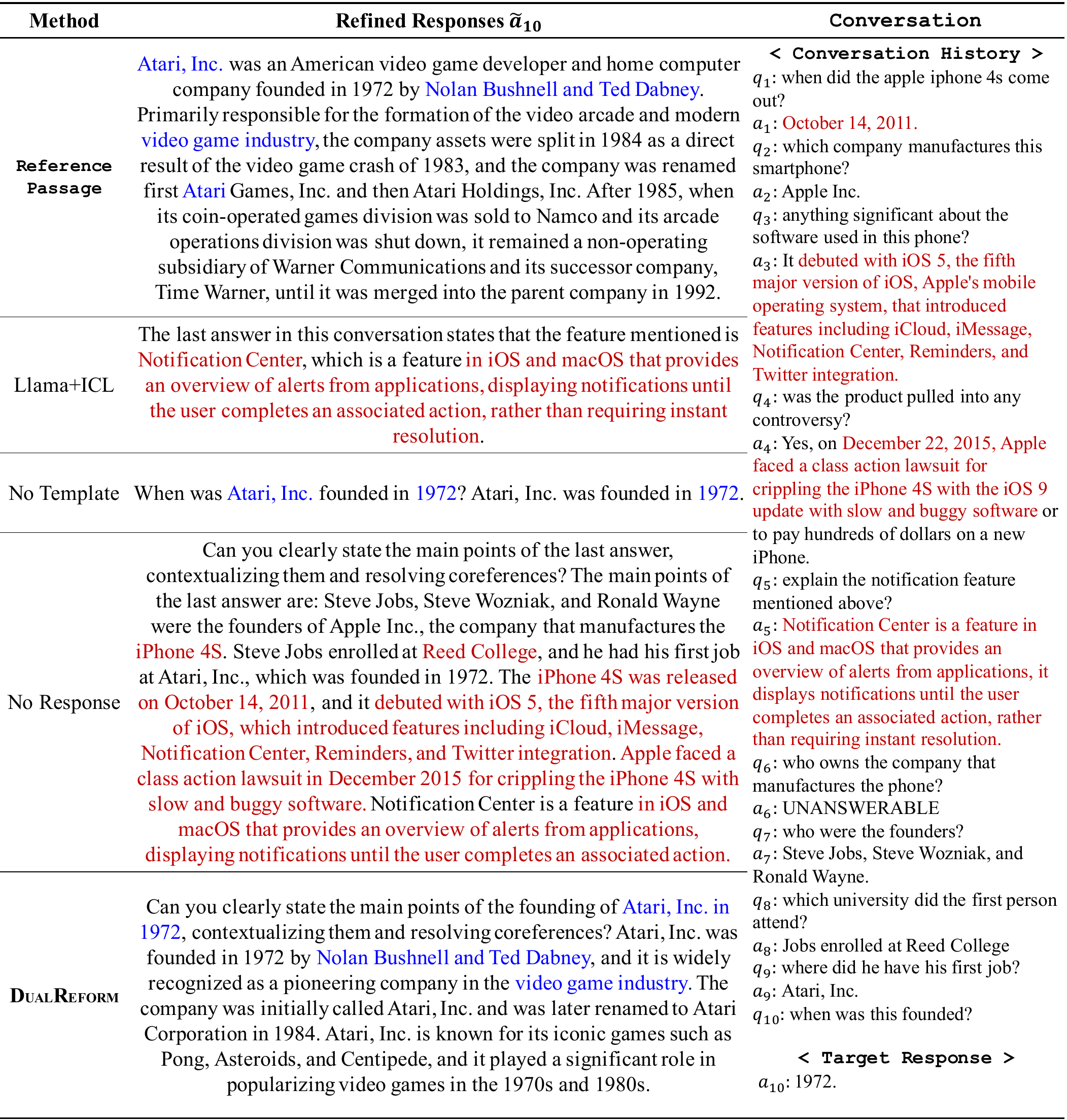}
\vspace*{-0.6cm}
\caption{ 
Examples of refined responses generated by different methods on TopiOCQA. Fragments strongly aligned with the reference passage are highlighted in \textcolor{blue}{blue}, while fragments with weaker connections~(e.g., off-topic elements referring to previous conversation topics) are marked in \textcolor{red}{red}.
}
\vspace*{-0.4cm}
\label{fig:refine_exs_app_atari}
\end{figure*}

\clearpage
\begin{figure*}[t!]
\includegraphics[width=1.00\linewidth]{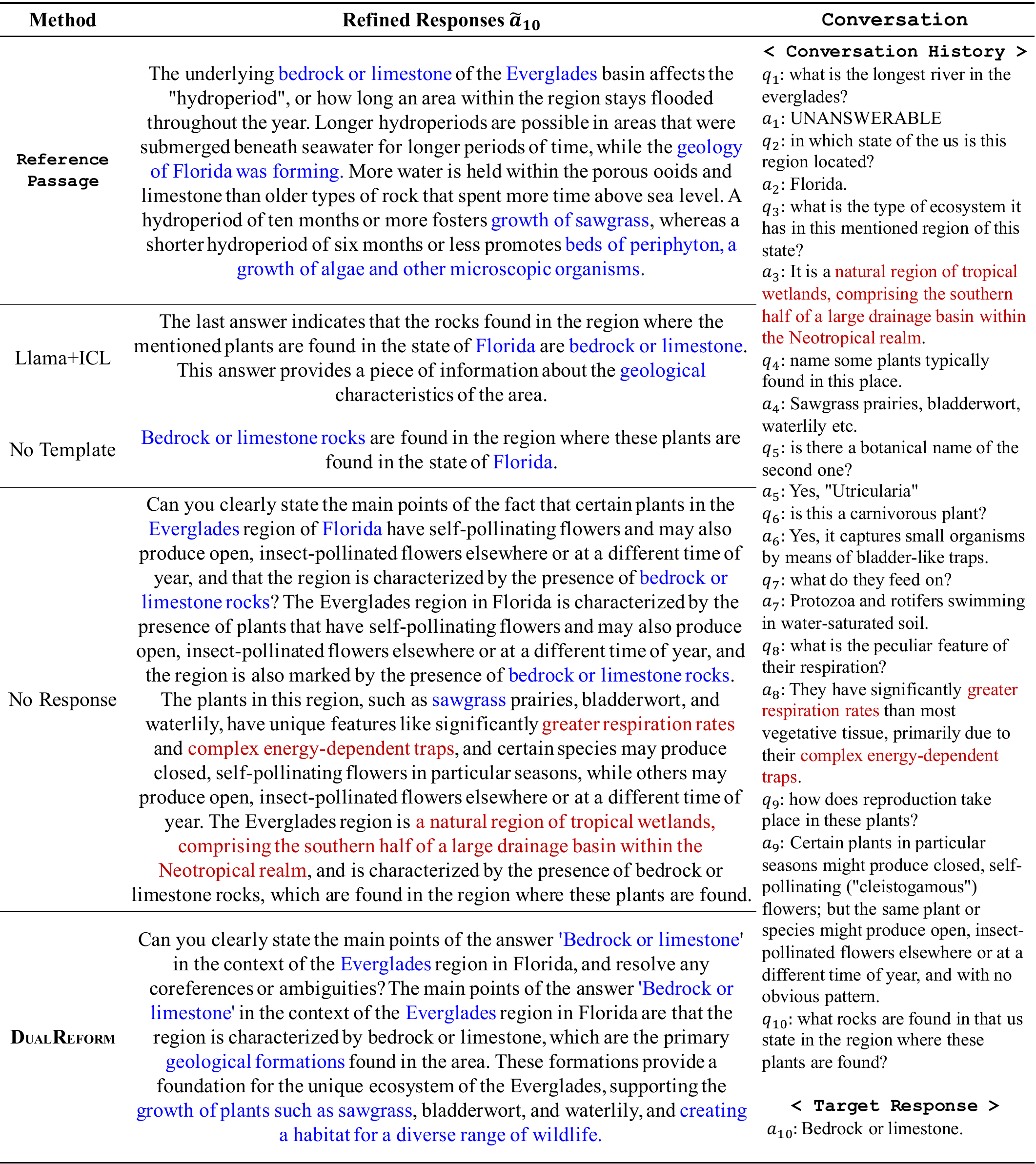}
\vspace*{-0.6cm}
\caption{ 
Examples of refined responses generated by different methods on TopiOCQA. Fragments strongly aligned with the reference passage are highlighted in \textcolor{blue}{blue}, while fragments with weaker connections~(e.g., off-topic elements referring to previous conversation topics) are marked in \textcolor{red}{red}.
}
\vspace*{-0.4cm}
\label{fig:refine_exs_app_everglade}
\end{figure*}

\clearpage

\begin{figure*}[t!]
\includegraphics[width=1.00\linewidth]{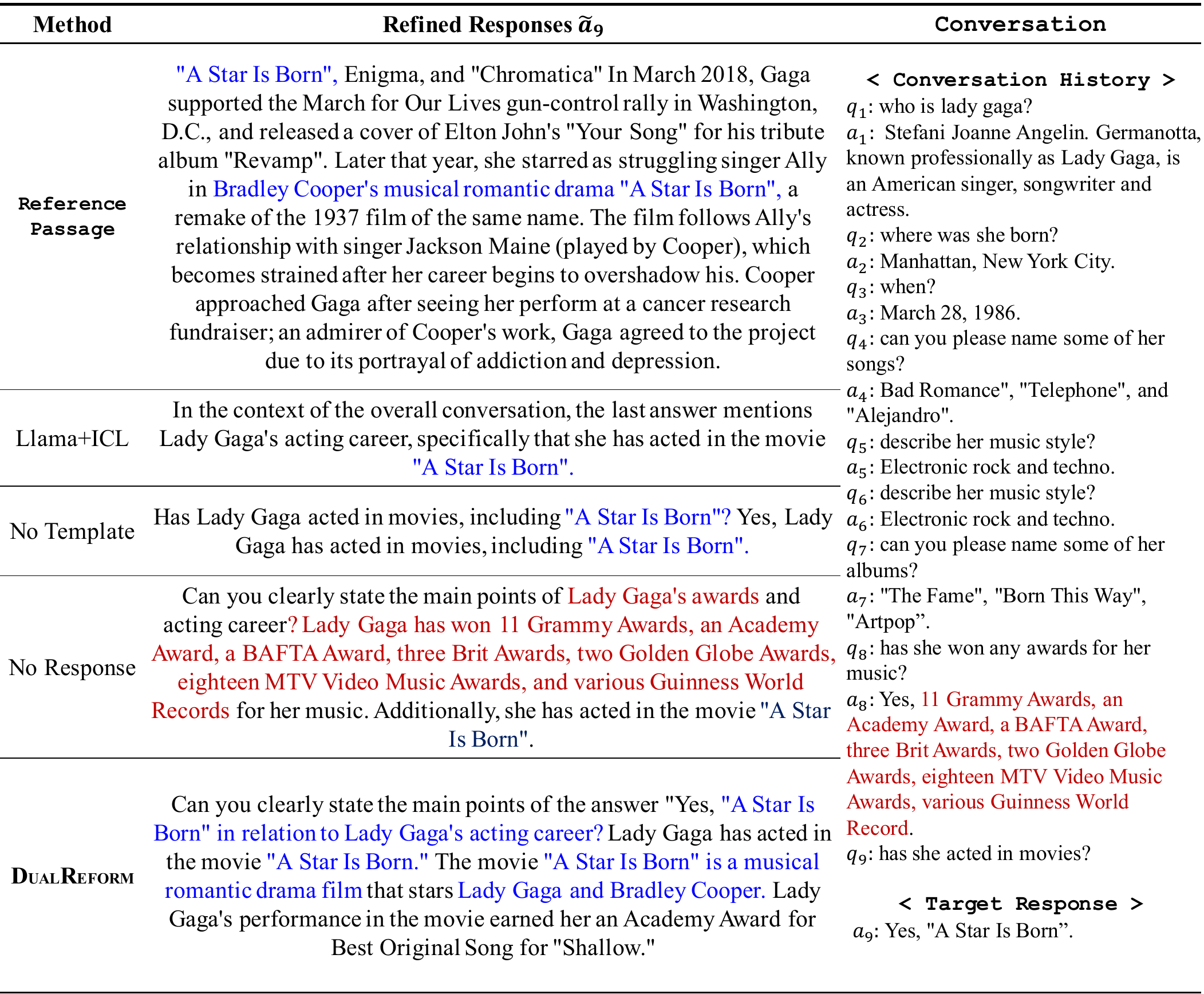}
\vspace*{-0.6cm}
\caption{ 
Examples of refined responses generated by different methods on TopiOCQA. Fragments strongly aligned with the reference passage are highlighted in \textcolor{blue}{blue}, while fragments with weaker connections~(e.g., off-topic elements referring to previous conversation topics) are marked in \textcolor{red}{red}.
}
\vspace*{-0.4cm}
\label{fig:refine_exs_app}
\end{figure*}

\subsection{Generation Accuracy on TopiOCQA}
\label{sec:app_gen}
Table~\ref{tbl:gen_acc_app} extends the results of Table~\ref{tbl:gen_acc} by additionally using the TopiOCQA dataset. These results show that \algname{} consistently outperforms the baselines by achieving higher generation accuracy through accurate passage retrieval across diverse conversational domains.
\def\arraystretch{1.0}
\begin{table}[h]
\caption{
Response generation accuracy with passages retrieved via different CQR methods on TopiOCQA. The highest values are emphasized in bold.}
\small
\centering
\resizebox{0.5\linewidth}{!}{%
\begin{tabular}[c]
{@{}c|ccccc@{}}
\arrayrulecolor{black}\specialrule{1.2pt}{0.75pt}{2.5pt}
\multirow{2}{*}{\hspace{-0.0cm}{{CQR Methods}}} 
& \multicolumn{4}{c}{{ \textbf{Generation Accuracy}}} 
\\ 
& \multicolumn{1}{c}{{LLMEval}}
& \multicolumn{1}{c}{{ROUGE-1}}
& \multicolumn{1}{c}{{ROUGE-L}}
& \multicolumn{1}{c}{{BertScore}}
\\
\arrayrulecolor{black}\specialrule{1pt}{1.5pt}{1pt}
\arrayrulecolor{black}\specialrule{1pt}{1pt}{3.0pt}
\multirow{1}{*}{{LLM-IQR}}                                
& \begin{tabular}[c]{@{}c@{}}{ {25.34} }\end{tabular}
& \begin{tabular}[c]{@{}c@{}}{ {20.36} }\end{tabular}
& \begin{tabular}[c]{@{}c@{}}{ {19.09} }\end{tabular}
& \begin{tabular}[c]{@{}c@{}}{ {84.32} }\end{tabular}
\\
\multirow{1}{*}{{HyDE-LLM}}                                
& \begin{tabular}[c]{@{}c@{}}{ {27.36} }\end{tabular}
& \begin{tabular}[c]{@{}c@{}}{ {24.83} }\end{tabular}
& \begin{tabular}[c]{@{}c@{}}{ {23.30} }\end{tabular}
& \begin{tabular}[c]{@{}c@{}}{ {85.28} }\end{tabular}
\\
\multirow{1}{*}{{LLM4CS-CoT}}                                
& \begin{tabular}[c]{@{}c@{}}{ {33.78} }\end{tabular}
& \begin{tabular}[c]{@{}c@{}}{ {29.20} }\end{tabular}
& \begin{tabular}[c]{@{}c@{}}{ {27.78} }\end{tabular}
& \begin{tabular}[c]{@{}c@{}}{ {85.88} }\end{tabular}
\\
\multirow{1}{*}{{T5QR}}                                
& \begin{tabular}[c]{@{}c@{}}{ {18.92} }\end{tabular}
& \begin{tabular}[c]{@{}c@{}}{ {17.07} }\end{tabular}
& \begin{tabular}[c]{@{}c@{}}{ {16.50} }\end{tabular}
& \begin{tabular}[c]{@{}c@{}}{ {83.67} }\end{tabular}
\\
\multirow{1}{*}{{ConvGQR}}                                
& \begin{tabular}[c]{@{}c@{}}{ {21.62} }\end{tabular}
& \begin{tabular}[c]{@{}c@{}}{ {22.02} }\end{tabular}
& \begin{tabular}[c]{@{}c@{}}{ {20.89} }\end{tabular}
& \begin{tabular}[c]{@{}c@{}}{ {84.84} }\end{tabular}
\\
\multirow{1}{*}{{HyDE-FT}}                                
& \begin{tabular}[c]{@{}c@{}}{ {15.98} }\end{tabular}
& \begin{tabular}[c]{@{}c@{}}{ {15.37} }\end{tabular}
& \begin{tabular}[c]{@{}c@{}}{ {14.72} }\end{tabular}
& \begin{tabular}[c]{@{}c@{}}{ {84.01} }\end{tabular}
\\
\multirow{1}{*}{{RetPo}}                                
& \begin{tabular}[c]{@{}c@{}}{ {33.11} }\end{tabular}
& \begin{tabular}[c]{@{}c@{}}{ {26.01} }\end{tabular}
& \begin{tabular}[c]{@{}c@{}}{ {24.39} }\end{tabular}
& \begin{tabular}[c]{@{}c@{}}{ {85.74} }\end{tabular}
\\
\midrule
\multirow{1}{*}{{\textbf{\algname{}}}}            
& \begin{tabular}[c]{@{}c@{}}{ \textbf{35.81} }\end{tabular}
& \begin{tabular}[c]{@{}c@{}}{ \textbf{31.63} }\end{tabular}
& \begin{tabular}[c]{@{}c@{}}{ \textbf{30.10} }\end{tabular}
& \begin{tabular}[c]{@{}c@{}}{ \textbf{86.48} }\end{tabular}
\\
\arrayrulecolor{black}\specialrule{1.2pt}{1pt}{1.0pt}
\end{tabular} }
\label{tbl:gen_acc_app}
\vspace*{-0.1cm}
\end{table}

\subsection{Parameter Sensitivity Analysis}
\label{sec:app_sensitivity}
We conduct the sensitivity analysis of \algname{}'s hyperparameters, specifically the number of pseudo reference passages $k$ in Definition \ref{def:pseudo-ref} and the regularization parameter $\beta$ of DPO in Eq.~(\ref{eq:pseudo-optim-obj}).

\begin{figure*}[h]
    \centering
    \includegraphics[width=0.5\columnwidth]{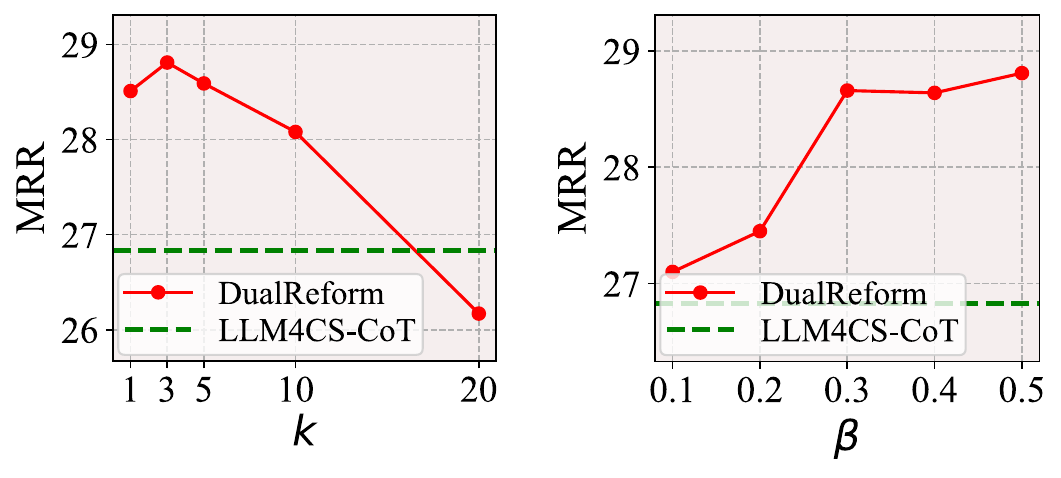}
    \vspace{-0.1cm} \\
    \centering{\small ~~~~~~~~ (a) \# of Pseudo Ref. ~~~~~~~ (b) DPO Regularization.} 
    \caption{Effects of the hyperparameters of \algname{} on MRR. The green dashed line denotes the performance of LLM4CS-CoT, the strongest baseline.}
    \label{fig:param_sensitivity}
\end{figure*}

\smallskip
\noindent\textbf{Number of Pseudo Reference Passages $k$.} 
Figure~\ref{fig:param_sensitivity}(a) presents the impact of \(k\) on retrieval performance. The parameter \(k\) controls the number of top-ranked passages used as pseudo reference passages. Lower values selectively include only the most relevant passages, while higher values introduce additional but less relevant passages. In general, the retrieval accuracy stabilizes between 1 and 5, after which it declines, indicating a negative impact from less relevant passages beyond a specific threshold.

\smallskip
\noindent\textbf{DPO Regularization Parameter $\beta$.} 
Figure~\ref{fig:param_sensitivity}(b) presents retrieval accuracy across different values of \(\beta \in \{0.1,0.2,0.3,0.4,0.5\}\), as guided by prior works\,\citep{dpo_beta, furuta2024geometric}.
This parameter controls the trade-off between aligning the model with user preferences and retaining the behavior of the pre-trained model. Lower values prioritize the former, while higher values place greater emphasis on the latter. In general, increasing \(\beta\) improves performance, with a plateau observed around 0.3 and a peak at 0.5. This trend is attributed to the ability of higher \(\beta\) values to reduce overfitting to the initial pseudo reference passages, further enabling model refinements in \algname{} through iterative optimization.

\clearpage

\end{document}